\documentclass[a4paper,fleqn]{scrbook}

\input{internals}

\usepackage{xifthen} 
\usepackage{tikz}

\usetikzlibrary{decorations.pathreplacing}
\usetikzlibrary{decorations.pathmorphing}
\usetikzlibrary{decorations.markings}
\usetikzlibrary{shapes,shapes.symbols,automata,arrows}
\usetikzlibrary{calc}
\usetikzlibrary{patterns}
\usetikzlibrary{positioning}
\usetikzlibrary{fit}
\usetikzlibrary{fadings}

\pgfdeclarelayer{background}
\pgfdeclarelayer{foreground}
\pgfsetlayers{background,main,foreground}

\usepackage[headsepline=.6pt,footsepline=false]{scrlayer-scrpage} 

\usepackage{scrhack} 

\usepackage[Lenny]{fncychap} 

\usepackage{hyperref} 

\usepackage{marginnote} 
\usepackage{mparhack} 

\usepackage{amsmath} 
\usepackage{amsthm} 
\usepackage{amsfonts} 
\usepackage{amssymb} 
\usepackage{thmtools} 
\usepackage{mathtools} 
\usepackage{nicefrac} 
\usepackage[capitalise]{cleveref} 
\usepackage[ruled,vlined,linesnumbered]{algorithm2e} 

\usepackage{wrapfig}
\usepackage{iftex} 

\usepackage{etoolbox} 

\usepackage{subcaption} 
\usepackage{rotating} 

\usepackage{booktabs} 
\usepackage{multirow,multicol} 
\usepackage{xspace} 
\usepackage{xparse} 

\usepackage{pifont} 
\usepackage{marvosym} 
\usepackage{calc} 

\usepackage[inline]{enumitem} 

\usepackage{smartref} 

\usepackage{url} 

\usepackage[color=blue!20, linecolor=blue!80!black]{todonotes} 
\usepackage{changebar} 

\usepackage{eso-pic} 

\usepackage{listings} 
\usepackage{array} 
\usepackage{dsfont} 
\usepackage[binary,squaren]{SIunits} 
\usepackage{textcomp} 
\usepackage{lstautogobble}  

\usepackage{needspace} 

\usepackage{cases}
\usepackage{csvsimple}

\newcommand{\env}{environment}



\makeatletter
\newcommand*{\etc}{%
    \@ifnextchar{.}%
        {etc}%
        {etc.\@\xspace}%
}
\newcommand*{\cf}{%
    \@ifnextchar{.}%
        {cf}%
        {cf.\@\xspace}%
}
\newcommand*{\etal}{%
    \@ifnextchar{.}%
        {et~al}%
        {et~al.\@\xspace}%
}
\makeatother





\NewDocumentCommand{\todoc}{o}{[?]\todo{\IfNoValueTF{#1}{Cite}{Cite: #1}}~}

\newlist{todolist}{itemize}{2}
\setlist[todolist]{label=$\square$}


%
%

\crefname{definition}{Def.}{Defs.}%
\crefname{section}{Sect.}{Sect.}%
\crefname{examplex}{Exm.}{Exms.}%
\crefname{equation}{Eq.}{Eqs.}%
\crefname{proposition}{Prop.}{Props.}%
\crefname{lemma}{Lem.}{Lems.}%
\crefname{corollary}{Cor.}{Cors.}%
\crefname{theorem}{Thm.}{Thms.}%
\crefname{remark}{Rem.}{Rems.}%
\crefname{figure}{Fig.}{Figs.}%
\crefname{table}{Tbl.}{Tbls.}%
\crefname{algorithm}{Alg.}{Algs.}%
\Crefname{examplex}{Example}{Examples}%

\allowdisplaybreaks{}


\addtoreflist{chapter}
\addtoreflist{section}


\newcommand*\revieweradvisorlistdiss{
  \begin{tabular}{ll}
    Dean: &\DEAN{} \\[\medskipamount]
    Advisor: &\ADVISOR{}
    \ifdef{\ADVISORTWO}{%
        \\%
        &\ADVISORTWO{}%
    }{}%
    \\[\medskipamount]
    Examination Board: & \EXAMINERONE~(Chair) \\
    & \EXAMINERTWO{} \\
    & \EXAMINERTHREE{} \\
    & \EXAMINERFOUR~(Academic Staff) \\[\medskipamount]
    Thesis Defense: & \DEFENSEDATE{}
  \end{tabular}
}
\newcommand*\revieweradvisorlistthesis{
  \begin{tabular}{ll}
    Supervisor: &\SUPERVISOR{} \\[\medskipamount]
    \ifdef{\ADVISOR}{%
    	Advisor: &\ADVISOR{}
	    \ifdef{\ADVISORTWO}{%
    	    \\%
        	&\ADVISORTWO{}%
	    }{}%
	    \\[\medskipamount]
	}{}%
    Reviewer: &\EXAMINERONE{} \\
    &\EXAMINERTWO{} \\[\medskipamount]
    Submission: &\SUBMISSIONMONTH~\SUBMISSIONDAY, \SUBMISSIONYEAR{}
  \end{tabular}
}

\newcommand*\revieweradvisorlist{
	\ifthenelse{\boolean{dissertation}}{
		\revieweradvisorlistdiss
	}{
	    \revieweradvisorlistthesis
	}
}


\lstdefinelanguage{Lola}{
  keywords=[0]{input, output, trigger},
  keywordstyle=[0]\bfseries\color{bluekeywords},
  keywords=[1]{if, then, else, aggregate, defaults, offset},
  keywords=[2]{Int8, Int16, Int32, Int64, UInt8, UInt16, UInt32, UInt64, Bool, Float32, Float64, @1Hz, @5Hz, @10Hz, @100mHz, @1kHz},
  keywordstyle=[2]\color{greentypes},
    sensitive=false,
    comment=[l]{//},
    morecomment=[s]{/*}{*/},
    morestring=[b]',
    morestring=[b]"
}

\lstset{
    autogobble,
    language={Lola},
    columns=fullflexible,
    showspaces=false,
    showtabs=false,
    breaklines=true,
    showstringspaces=false,
    breakatwhitespace=true,
    commentstyle=\color{greencomments},
    keywordstyle=[1]\color{bluekeywords},
    stringstyle=\color{redstrings},
    numberstyle=\color{graynumbers},
    basicstyle=\ttfamily\footnotesize,
    frame=l,
    framesep=12pt,
    xleftmargin=12pt,
    tabsize=4,
    captionpos=b,
    mathescape,
}

\lstdefinelanguage{none}{
  identifierstyle=
}

\interfootnotelinepenalty=10000

\includeonly{
	frontmatter-main,
    00-Abstract,
	01-Introduction,
    02-Motivation,
    03-Preliminaries,
    04-ReasonArchitecture,
    05-Implementation,
    06-Results,
    07-Discussion,
    08-FutureWork,
    09-Conclusion
}
\usepackage[style=numeric,backend=biber]{biblatex}
\addbibresource{bibliography.bib}

\begin{document}

	\pagestyle{empty}
	\frontmatter
	
\ifthenelse{\boolean{titlepagelogo}}{
  \ifthenelse{\boolean{nightmode}}{\printlogotransparentbg}{\printlogoopaquebg}
  \newgeometry{margin=0cm}
  \maketitle 
  \restoregeometry{}
}{
  \maketitle 
}

\clearpage

\vspace*{\fill}

\revieweradvisorlist

\clearpage

\section*{Declaration of AI Tools and Their Use}

Two AI-supported tools were used during the preparation of this thesis: DeepL for translation assistance and ChatGPT (primarily the GPT-4o-based version) for suggestions regarding the reformulation of text and for support in generating parts of the code developed in the context of this work. No extensive AI-generated text blocks were adopted verbatim. The specific sections of the code for which ChatGPT was used are explicitly marked in the source code.

\cleardoublepage{}

\section*{Abstract}

\ifthenelse{\boolean{dissertation}}{
  According to the dissertation regulations, the abstract must not contain more than 1500 characters, including spaces.
}{

\emph{Reinforcement Learning} is a machine learning methodology that has demonstrated strong performance across a variety of tasks. In particular, it plays a central role in the development of \textit{artificial autonomous agents}. As these agents become increasingly capable, market readiness is rapidly approaching, which means those agents, for example taking the form of humanoid robots or autonomous cars, are poised to transition from laboratory prototypes to autonomous operation in real-world environments. This transition raises concerns leading to specific requirements for these systems -- among them, the requirement that they are designed to behave \textit{ethically}. Crucially, research directed toward building agents that fulfill the requirement to behave ethically -- referred to as \emph{artificial moral agents} (AMAs) -- has to address a range of challenges at the intersection of computer science and philosophy, which makes it difficult to develop solutions that are both philosophically well-informed and concrete, realizable approaches in light of the imminent deployment of autonomous artificial agents in real-world contexts. 

This study explores the development of \emph{reason-based artificial moral agents} (RBAMAs). RBAMAs are build on an extension of the reinforcement learning architecture to enable moral decision-making based on \textit{sound normative reasoning}, which is achieved by equipping the agent with the capacity to learn a \textit{reason-theory} -- a theory which enables it to process morally relevant propositions to derive moral obligations -- through \textit{case-based feedback}. They are designed such that they adapt their behavior to \textit{ensure conformance} to these obligations while they pursue their designated tasks. These features contribute to the \textit{moral justifiability} of the their actions, their \textit{moral robustness}, and their \textit{moral trustworthiness}, which proposes the extended architecture as a concrete and deployable framework for the development of AMAs that further fulfills key ethical desiderata. This study presents a first implementation of an RBAMA, accompanied by a framework for establishing targeted test cases, used to demonstrate the potential of RBMAs in initial experiments. The study concludes with a discussion of this potential -- one that also highlights the manifoldness of interdisciplinary challenges involved in the development of AMAs -- and outlines directions for future research aiming at addressing the current limitations of RBAMAs in comprehensively meeting these challenges.
} 
\cleardoublepage{}

\section*{Acknowledgements}

\ifthenelse{\boolean{dissertation}}{
Here, you should acknowledge everyone who has helped you in any way, shape, or form to write this work.
It is customary to first the advisor, then colleagues, then the reviewers, maybe funding agencies, if you would like, then personal contacts.
Feel free to vary this order as you would like. 

For the first version that goes to the reviewers, this section may be omitted, in particular, as you do not yet (officially) know your reviewers.
}{
  I would like to express my sincere gratitude to my advisors, Dr. Kevin Baum and Felix Jahn, for their continuous support, valuable feedback, and insightful discussions throughout the development of this work. Their expertise and guidance greatly contributed to shaping the project. I am also deeply thankful to my colleagues at DFKI, especially at RAIME, for the helpful conversations. Furthermore, I would like to thank Prof. Verena Wolf for her support of this project. Finally, I am profoundly grateful to my family, my friends, and my boyfriend for their unwavering encouragement and care.
}
\cleardoublepage{}

\tableofcontents
\setcounter{tocdepth}{1}
	
	\pagestyle{scrheadings}
	\mainmatter{}
    \chapter{Introduction}

With the rapid progress in developing \textit{artificial autonomous agents} (AAAs), the once-distant vision of machines walking among us -- in our workplaces, public spaces, and homes -- draws steadily closer to reality. Efforts directed toward building humanoid robots like Tesla's "Optimus" \cite{tesla_optimus_2025}, Boston Dynamics' "Atlas" \cite{bostondynamics_atlas_2025}, and UBTECH's "Walker" \cite{ubtech_walker_2025} are advancing toward enabling AAAs to perform complex tasks across a range of sectors in the coming years. In parallel, companies like Wayve \cite{wayve2025} and Waymo \cite{future_is_now_2024} have demonstrated early successes deploying autonomous vehicles on public roads, while companies such as Covariant \cite{covariant2025} and Universal Robots \cite{universalrobots2025} are pushing the boundaries of robotics in industrial settings. 

The deployment of these AAAs presents a significant opportunity to transform our everyday lives for the better. However, as we near this new reality, it is crucial to address the societal challenges that accompany this development -- among the most paramount of which is the imperative for these agents to \textit{behave ethically}, particularly in interactions with humans. Machine ethics \cite{anderson2011, anderson2007machine, anderson2004, moor2006a} engages with the numerous questions involved in building such agents -- questions surrounding the creation of \introterm{artificial moral agents} (AMAs) \cite{allen2006, wallach2008, tonkens2009}. 

This is not a straightforward task: researchers face challenges \textit{deeply rooted} in \textit{foundational issues in moral philosophy}, that hinder the advancement of concrete frameworks for AMAs. Most notably, the \textit{absence} of a \textit{widely accepted ethical theory} undermines the hope that one could simply proceed by directly implementing a settled moral doctrine to give moral guidance to AAAs. Nonetheless, existing approaches tend to be implicitly \textit{tied} to a particular ethical standpoint or family of such standpoints. In approaches to machine ethics based on reinforcement learning, for instance, using the reward mechanism to instill ethical behavior -- by assigning numerical values to the outcomes of actions in order to shape the agent’s behavior -- tends to align naturally with a consequentialist account \cite{sep-consequentialism, driver2011consequentialism}, as moral rightness is evaluated \textit{purely} in terms of consequences of actions. 

Undoubtedly, it is essential to develop approaches that take into account critical insights from moral philosophy. At the same time, the accelerating progress toward the deployment of AAAs heightens the urgency for \textit{practically applicable} solutions. In this study, I aim to advance a \textit{reason-based} approach to developing AMAs that fulfills \textit{both} of these requirements. Specifically, I seek to contribute to the development of an architecture for building \textit{reason-based artificial moral agents} (RBAMAs). This architecture is grounded in reinforcement learning (RL) \cite{sutton2018, littman2015, kaelbling1996reinforcement, wiering2012reinforcement} -- a state-of-the-art machine learning methodology -- which is extended to ground the agent's moral decision-making in \textit{normative reasoning} \cite{sep-reasons-just-vs-expl} for ethical guidance. To this end, RBAMAs are equipped with the capacity to \textit{recognize normative reasons} -- represented as propositions about morally relevant facts -- and the ability to \textit{infer moral obligations} from them to which they are then bound in their course of action. 

For illustration, consider the following simplistic example: Assume, an RBAMA is tasked with delivering a package. On its path, it must cross a narrow bridge leading across a river and it may encounter persons standing on the bridge or persons who have fallen into the water. The presence of persons on the bridge and the presence of persons in the water are examples of morally relevant facts, recognized by the RBAMA as such. From these, it infers moral obligations -- not to push the person off the bridge, and to rescue the individual in the water -- according to which it determines its course of action. 

At the core of the extended architecture enabling this functionality, lies an \textit{ethics module}, governed by a \textit{reasoning unit}. This unit operates on a learned \textit{reason-theory} -- a theory composed of normative reasons and a inference procedure. As one possible account on how to construct such a reason-theory, I follow an approach that builds on John Horty’s formalization of reason-based moral decision-making \cite{horty2012, horty2007}. In this framework, normative reasoning is modeled through \textit{default rules} -- defeasible inference rules -- that derive moral obligations from morally relevant propositions  incorporating a mechanism for \textit{prioritization} in cases where obligations cannot be jointly fulfilled. Returning to the example, the presence of a person in the water -- a morally relevant fact, denoted as $D$ -- would be represented in the reason-theory as the premise of a default rule $\delta_1$ linking this proposition to the obligation to perform a rescue, $\varphi_R$. Similarly, the presence of a person on the bridge -- another morally relevant fact, $B$ -- would be linked to the obligation not to push the person into the water -- $\varphi_C$--  by a second default rule $\delta_2$. The reason-theory may further include a priority ordering over these rules, indicating which obligation should take precedence in cases where $D$ and $B$ hold, but the agent cannot confirm to both moral obligations $\varphi_R$ and $\varphi_C$.

In addition to being endowed with an integrated reason-theory that grounds moral decision-making in normative reasons, RBAMAs are further designed to ensure that their \textit{behavior} is aligned with the moral obligations they infer. To this end, the reasoning unit invokes dedicated components -- submodules specifically designed to \textit{guarantee compliance} with the inferred moral obligations. This modular design allows for addressing various moral requirements with appropriate means by enabling the straightforward integration of advanced methods from subfields of RL, such as Deep reinforcement learning (DRL) \cite{li2017deep, arulkumaran2017deep, franccois2018introduction, mousavi2018deep} and Safe reinforcement learning (Safe RL) \cite{garcia2015, gu2024reviewsafereinforcementlearning, thomas2015safe}, thereby ensuring that the development of RBAMAs can build upon a rich body of successful research. Returning to the example again, this modularity enables the differentiated deployment of methods for ensuring conformance with the moral prohibition against pushing persons off the bridge -- which can be operationalized as a \textit{moral constraint} and enforced via Safe RL techniques such as shielding -- and the moral obligation to rescue drowning persons -- representing a \textit{moral task}, for which DRL can be used to train the agent to fulfill the rescue effectively.

While the ethics module -- integrating the reasoning unit and specialized submodules -- enables the RBAMA to ground its moral decision-making in normative reasoning and ensures behavioral conformance with the inferred moral obligations, a second central idea in the development of RBAMAs is the integration of \textit{case-based feedback} to allow for an \textit{iterative refinement} of the agent’s normative reasoning. This feedback is provided by a \textit{moral judge} -- a designated moral authority, such as a group of stakeholders of the system -- and is passed to the reasoning unit, where it is processed to \textit{update} the RBAMA’s reason theory. 

Returning once more to the running example, suppose that a person has fallen into the water, but the RBAMA has not yet learned that the presence of a person in the water constitutes a normative reason to perform a rescue. Since it does not infer the corresponding moral obligation, it ignores the drowning person and continues with its package delivery task. Upon receiving corrective feedback from a moral judge -- informing it that it had a normative reason to rescue the person -- the RBAMA updates its reason theory by integrating the default rules $\delta_2$, thereby learning to recognize a \textit{new} kind of normative reason. Similarly, the agent could learn how to \textit{prioritize} between $\varphi_C$ and $\varphi_R$ in cases of conflict -- again by receiving evaluative feedback on its behavior. 

Crucially, with the reasoning framework for conducting normative reasoning at its core and in interplay with the feedback process, the reason-based approach provides a solid foundation for addressing ethical concerns. Assuming the RBAMA learns to conduct \textit{sound} normative reasoning\footnote{In this context, 'sound' is not used in its established technical sense. Rather, it denotes the RBAMA’s capacity to make moral decisions grounded in valid normative reasons. For brevity, I refer to this capacity as sound normative reasoning throughout the text.}, its actions are \textit{morally justifiable} through being embedded within an overall behavior, which conforms to moral obligations that it infers through its reasoning. For instance, an RBAMA that learns to prioritize $\varphi_R$ over $\varphi_C$ and, on this basis, decides to push a person off a bridge to rescue someone from drowning is, arguably, engaging in sound normative reasoning, which makes the agent’s course of action morally justifiable. Moreover, the learned reason-theory is applicable across contexts that share the same morally relevant facts and thereby equips the RBAMA with reliable competence in moral decision-making. This reliability, in turn, enhances the its \textit{moral robustness} and increases its \textit{moral trustworthiness}. It thereby has an intrinsic advantage in satisfying key ethical desiderata for AMAs without the need for presupposing access to an ethical ground truth. 

This motivational background is further discussed in \cref{2motivation}, followed by an overview over technical preliminaries in \cref{3prelim}. In \cref{4reasonTheoryChapter}, the architecture of RBAMAs is introduced including an ethics unit with a reasoning framework as control element that invokes individual modules to ensure the agent's alignment with moral obligations. Based on this architecture, I developed a prototype to demonstrate the feasibility of building an RBAMA.  Furthermore, I integrated a \textit{moral judge} as \textit{rule-based module} in the RL pipeline to simulate the feedback process. To ultimately enable effective training and testing of RBAMAs, I developed a framework for designing \textit{modifiable environments}, facilitating targeted experimental investigations into how RBAMAs address key challenges for AMAs. Details of the implementation -- including the RBAMA itself, the moral judge, and the framework for the environments -- are outlined in \cref{5implementation}. Initial tests produced \textit{promising results} presented in \cref{6results}. They demonstrated the RBAMA's functionality -- its capability to conduct sound normative reasoning, which it acquires through case-based feedback as well as the conformance of its behavior with the moral obligations inferred through this reasoning. The promising potential of RBAMAs was further underlined through an experimental \textit{comparison} with a popular alternative approach to building AMAs based on \textit{multi-objective reinforcement learning}, which constructs ethical environments to ensure ethical optimality. Notably, the experiments also revealed \textit{limitations} of the current architecture as well as \textit{shortcomings} of the concrete implementation. In the concluding sections -- \cref{7Discussion} and \cref{8FutureWork} -- I \textit{reflect} on these findings -- in particular highlighting  the manifoldness of \textit{interdisciplinary} challenges -- and outline possible \textit{directions} for further development.

    \chapter{Wired for Good: Rethinking How to Approach Artificial Moral Agency}\label{2motivation}

\section{Illuminating the Challenges}\label{2challenges}

The development of AMAs is driven by the understanding that AAAs will have significant moral impact \cite{allen2006, anderson2011}, and crucially, building systems capable of adequately handling the power that thereby arises from their autonomy requires equipping them with the capacity for \textit{moral decision-making} \cite{moor2006a, wallach2008, malle2016}.

To facilitate a deeper exploration and systematic investigation of this challenge, I consider an illustrative example: a deployment environment for an autonomous agent, representing a simplified real-world setting that isolates some key elements of moral decision-making while pursuing of an instrumental goal. In this scenario, the agent operates within an area consisting of two coastlines of solid ground -- one in the north and the other in the south -- linked by a narrow bridge leading across a river. The task assigned to the autonomous agent -- e.g. a robot or an autonomous vehicle -- is to deliver a package from a point on the northern shore to a target destination on the southern shore. As the agent carries out its task, several persons move about within the area. Occasionally, these persons may accidentally fall into the water. Furthermore, if the agent collides with a person while crossing the bridge, it risks pushing that person into the river. The water is hazardous in such a way that persons are unable to free themselves and will drown after a short period unless they are rescued by the agent.

In this \textit{bridge scenario}, the agent is primarily expected to develop a strategy focused on accomplishing its instrumental goal -- completing the delivery task -- which represents its principal objective. However, the specific characteristics of the deployment area makes it essential that the agent does not pursue this goal blindly. Instead, it must consider the \textit{moral dimension} inherent to its interactions within the environment. Specifically, an AMA is expected to demonstrate two behaviors reflecting sensitivity to the moral dimension:
\begin{enumerate}[label=\roman*)]
    \item the agent should avoid collisions with persons on the bridge because they could result in pushing them into the hazardous water. 
    \item the agent should rescue persons who have fallen into the water and are at risk of drowning.
\end{enumerate}

Consequently, the agent is expected to do more than merely pursue its instrumental goal of delivering the package. It must also be guided by \textit{moral boundaries}, meaning it should actively avoid pushing persons into the water from the bridge, and additionally, it must dynamically respond to morally significant events -- such as persons falling into the water, who would otherwise drown without timely intervention -- by temporarily \textit{setting aside its instrumental goal} of delivering the package in order to \textit{actively pursue a moral objective}. 

However, this does not yet capture the full complexity of the challenges involved in building AMAs. To illustrate a particular challenging situation in terms of moral decision-making, which the agent may face in its deployment area, consider the following: a person is drowning in the water and urgently requires rescue, yet another person is standing on the bridge, blocking the agent’s path to the person in need. The agent must decide whether to wait until the person on the bridge moves safely aside or to immediately rush toward rescuing the drowning person, thereby risking pushing the person on the bridge into the hazardous water. Crucially, the agent prima facie faces a \textit{conflict between two moral obligations}: avoiding pushing persons into the water and rescuing persons at risk of drowning. Arguably, it is a non-trivial question what action (if any) is morally right under these circumstances -- regardless of whether such an apparent conflict results in a genuine moral dilemma (cf. \cite{sep-moral-dilemmas}) or not. I will refer to such situations as \emph{moral dilemmas} throughout this work without committing to any particular philosophical understanding of that term and instead relying on an everyday, intuitive understanding.  

In the described situation, intuitions favoring one moral obligation over the other can be influenced by adjusting factors such as the likelihood of pushing the person off the bridge, the severity of harm this would cause, and the probability of still being able to rescue the drowning person in time while avoiding harm to the person on the bridge. Crucially, different ethical theories offer distinct theoretical underpinnings for moral decision-making, often leading to divergent conclusions in morally challenging situations, even when all relevant parameters are clearly specified \cite{rachels2012elements} And with ethicists continuing to debate not only which ethical theory is correct, but whether a single correct theory exists at all \cite{narveson1987david, wong2009natural}, there remains \textit{no widely accepted answer} for how to act in morally challenging situations.\footnote{A prominent example of principled disagreement in moral dilemmas is the discussion surrounding the trolley problem. In this scenario, a runaway trolley is headed toward five people tied to the tracks. Nearby is a lever that can divert the trolley onto an alternative track, where it would instead kill one person. Taking no action results in the death of the five, while pulling the lever sacrifices one to save the others. The question of what morally ought to be done in such a scenario remains highly contested and continues to provoke extensive debate. \cite{kamm2008intricate, edmonds2013would, gowans1987moral}}. 

\section{Ethical Theory Commitment in Current Approaches}

Despite the lack of consensus on a correct ethical theory, current approaches to building AMAs typically proceed on the assumption that moral decision-making should be grounded in the implementation of an ethical theory that is presumed a moral ground truth. Moreover, with  ethical theories being divided into fundamentally different families, each prescribing distinct methods for processing morally relevant information, many existing approaches implicitly commit to one of these classes by trying to make these theories algorithmizable. Crucially, in doing so, they presuppose that they have chosen both the right family of theories and one which is suitable for guiding the behavior of autonomous agents.

\subsection{The Consequentialist Take on Ethical Behavior: Adopting the Reward Mechanism}\label{2introInformalRewardBased}

A widely adopted family of approaches in the development of AMAs adapts the reward mechanisms intrinsic to RL algorithms to encourage artificial agents not only to achieve instrumental goals but also to learn morally desirable behavior. In other words, the RL framework simultaneously trains the agent to optimize its primary objectives and respecting morality's demands through dedicated reward signals. Many of these approaches make use of the broader framework of multi-objective reinforcement learning, treating morality as one objective among several that the agent learns to balance simultaneously \cite{liu2015, ZHANG2023526, hayes2022, rodriguez-soto2021, rodriguez-soto2021a}.

For instance, in the bridge scenario, the agent could be trained to fulfill its moral obligations -- alongside pursuing its instrumental goal -- through a \textit{reward signal} reflecting the \textit{moral value} of the outcomes of its actions. Assigning negative rewards for pushing individuals off the bridge and positive rewards for rescuing those who are drowning would incentivize the agent to modify its behavior accordingly. However, this approach presupposes that \textit{morally relevant facts} can be \textit{fully captured} by assigning  \textit{numerical values} to transitions between world states. In doing so, it it assumed that the AMA should operate within a \textit{consequentialist} framework. \introterm[consequentialism]{Consequentialism}, broadly defined, holds that normative properties are determined solely by the consequences of actions (\cite{sep-consequentialism, driver2011consequentialism}). As a result, reward-based approaches are limited to implementing \introterm[consequentializability]{consequentializable} ethical theories—that is, theories that can be reformulated in consequentialist terms (\cite{portmore2007consequentializing}). Since it is highly contested whether all ethical theories can, in fact, be consequentialized (\cite{brown2011, kamm2008intricate}), this poses a significant limitation. 

Moreover, the reliance on a consequentialist framework introduces one further implication for how moral decision-making is operationalized in AMAs. Specifically, the numerical representation of the moral dimension \textit{implicitly} encodes a resolution to moral conflicts, providing the agent with a mechanism for prioritizing between conflicting moral obligations. As a result, there is \textit{no direct control} over the agent’s behavior. Instead, the agent adheres strictly to an \textit{optimization calculus} based on the predefined moral trade-offs embedded in the reward structure of the environment and \textit{infers} the appropriate course of action from it. In the bridge scenario, for example, the agent’s decision in the moral conflict -- whether to push one person off the bridge to save several drowning individuals -- would depend entirely on whether the expected negative reward for the morally impermissible action outweighs the expected positive reward for rescuing others, or vice versa.

\subsection{Deontoligical Directives: A Rule-Based Approach to Ethical Agents}\label{2introInformalRuleBased}

An alternative approach to the development of AMAs, contrasting the aquiration of a moral decision-making capability through reward signals, is found in a rule-based framework. In RL, this line of research has been primarily advanced by Neufeld et al. \cite{neufeld2021, neufeld2022, neufeld2022a}. The central idea of their work is to enable the agent to process morally relevant facts in \textit{propositional terms}, which obviates the need to translate them into \textit{numerical values}. 

To accommodate this representation of morally relevant information, standard RL architectures must be extended to handle propositional inputs. Neufeld et al. address this challenge by incorporating a \textit{normative supervisor} into the RL pipeline, which filters out morally impermissible actions in accordance with predefined rules possibly integrated in a structure that establishes an order among them. This supervisor is founded on deontic logic -- a formal system that functions within such a rule-based structure and applies normative operators to determine whether actions are obligatory, permissible, or forbidden, depending on which \textit{morally relevant propositions} are true in a given context. This way of processing morally relevant information is central to \introterm[deontology]{deontological} ethical theories, a category of moral theories that assess the rightness or wrongness of actions through their conformity to a certain kind of rules or principles, instead of evaluating their outcomes or consequences \cite{sep-ethics-deontological}. In the bridge scenario, for example, a normative supervisor would integrate rules that connect the morally relevant propositions 'there is a drowning person in the water' and 'there is a person on the bridge' to specific moral obligations -- the obligation for the agent to rescue the person or, respectively, to ensure that it does not push the person into the water. Moreover, an order can be established among these rules to give \textit{explicit guidance} on how to resolve moral dilemmas. For instance, the agent could be explicitly instructed to prioritize the moral obligation to rescue the drowning person over the moral obligation to ensure that it pushes no person off the bridge. In doing so, a degree of \textit{control} over the agent's behavior is introduced. 

However, while the increased control over the agent’s behavior is a clear advantage, adopting a rule-based approach to building AMAs entails a \textit{commitment} to implementing a \textit{deontological ethical theory} -- just as reward-based approaches involve a commitment to a consequentialist ethical framework. Thereby, also rule-based approaches impose arguably undesirable limitations.

\section{Reason-Based Moral Decision-Making}\label{2introInformalReasonBased}

As outlined in the preceding discussion, the endeavor of developing AMAs within both rule-based and reward-based frameworks encounters a crucial problem. Implicitly, the assumption is made, that a particular category of ethical theories -- whether consequentialist or deontological -- constitutes the appropriate normative foundation for moral decision-making. As a result, they significantly constrain how moral decision-making can be approached within these frameworks. Furthermore, from the perspective of a more foundational critique concerning the reliance on ethical theory as the basis for artificial agents’ moral decision-making, these approaches pursue a path that is problematic on an additional, more fundamental level -- they presuppose that the family of ethical theories they are bound to is the right one.

To find an approach free from such presuppositions and move beyond the limitations of existing frameworks, it is worth stepping back to reconsider the primary objective of building AMAs: enabling moral decision-making in artificial agents. At first glance, aligning an agent’s behavior to conform to an ethical theory might seem like a viable strategy for achieving this aim. However, the lack of consensus regarding which family of ethical theories -- let alone which specific ethical theory -- is correct, coupled with the plurality of diverse and evolving moral perspectives and the resulting uncertainty about how an agent ought to act in morally complex situations, such as the moral dilemma presented in the bridge scenario, renders the task of building AMAs a \textit{wicked problem} -- a problem that is difficult, if not impossible, to solve due to incomplete, contradictory, and changing requirements \cite{rittel1973dilemmas}. Nevertheless, disregarding the question of how artificial agents can be endowed with moral decision-making capabilities is not a viable option, particularly given the prospect of AAAs being deployed in the near future. 

One possible starting point for addressing this problem is to examine how humans navigate moral decision-making in practice. In the absence of definitive knowledge regarding the correct ethical theory, humans frequently face situations in which they do not ultimately know what the morally right action is and are therefore compelled to act under \introterm{moral uncertainty} \cite{macaskill2020}. Yet, they are arguably still able to demonstrate a considerable degree of \textit{competence} in moral decision-making. How is this achieved? One common method by which humans navigate ethical decisions is by appealing to \textit{normative reasons}. For example, in the bridge scenario, upon noticing a person standing on the bridge, a human actor would likely recognize a normative reason to ensure that they do not push the person off. Similarly, upon seeing someone drowning, they would likely recognize a normative reason to rescue that person. Additionally, when confronted with a moral dilemma, a human is typically capable of resolving it by prioritizing one normative reason over another -- for example, by giving precedence to the reason for rescuing the drowning person over the reason for ensuring that no one is pushed from the bridge. 

Normative reasons are generally taken to be \textit{facts} \cite{sep-reasons-just-vs-expl}. However, the question of what constitutes a normative reason -- the relation that obtains between a fact, a way of acting, and an agent, when that fact is a normative reason for that agent to act in that way --  has several proposed answers \cite{sep-reasons-just-vs-expl}. One perspective characterizes normative reasons in terms of their deliberative role; i.e., normative reasons are facts appropriately to consider in deliberation. Others characterize them by their capacity to determine the deontic status of actions. A third account characterizes them in terms of a relation to certain types of ends -- for example, good or valuable ends. Finally, there is the position that no informative answer can be given to that question, called a \introterm{reasons-first approach} \cite{sep-reasons-just-vs-expl}. Crucially, this shows that normative reasoning is not inherently tied to assumptions about an ethical ground truth. Bby appealing to normative reasons, humans can make ethical decisions \textit{without} the need to appeal to or rely on one, fully developed, moral theory. This raises the question, whether normative reasons could not also be taken as basis for moral decision-making in AMAs, enabling competent moral decision-making without being bound to an ethical framework.

Pursuing this line of thought, and building on the framework proposed by Baum et al. \cite{baum2024actingrightreasonscreating}, which extends the standard RL architecture to support the processing of a hierarchical structure of normative reasons, I developed an AMA whose moral decision-making is grounded in such reasons. This resulted in the creation of what may be referred to as a \introterm{reason-based artificial moral agent} (RBAMA), representing an initial step along this new path in the development of AMAs. 

\section{The Power of Good Reasons}\label{2reasonsPhilo}

Grounding an AMA’s moral decision-making in normative reasons presents itself as a method closely aligned with how humans approach moral decision-making \cite{bucciarelli2008psychology, sep-reasoning-moral}, alleviating the challenge of selecting an ethical theory as its foundation.  However, the advantages of a reasons-based approach extend beyond this. They also lie in the multifaceted role that reasons play: they give behavioral guidance, but they are also invoked in explaining, evaluating and justifying behavior \cite{raz1999practical}.

\subsection{Grounds for Trust}\label{2trustworthiness}

Consider, for example, an RBAMA that has (i) learned a normative reason to rescue drowning persons, which it consistently applies in its moral decision-making according to an algorithmic procedure that resembles reasoning and that (ii) is strictly bound to ensure that its actions always conform to what it derives as its moral obligation from this reasoning. Through (i), the RBAMA demonstrates \textit{reliable competence} in its moral decision-making. It consistently takes the morally relevant fact -- the presence of a drowning person -- into account in its reasoning process which constitutes its moral decision-making. This is akin to a human agent navigating moral uncertainty while being guided by a sound normative reasoning. Additionally, through (ii), it exhibits a \textit{consistent willingness} to fulfilling its moral obligations by consistently acting in accordance with its normative reasons.

This consideration leads to an important notion that is likely to be a desideratum in the design of autonomous artificial agents: \textit{moral trustworthiness}. Trustworthiness typically requires both reliable competence and consistent willingness, grounded in an underlying motive \cite{sep-trust}. Artificial agents lack genuine motives in the human sense. Nevertheless, it is plausible to introduce an understanding of moral trustworthiness for such agents that is grounded in their reliable competence and consistent willingness -- one that RBAMAs can satisfy. Furthermore, since reasons also serve explanatory and evaluative functions -- an aspect that applies equally to the normative reasons underlying an RBAMA’s moral decision-making -- providing insight into the RBAMA’s internal reasoning processes enables an appropriate \textit{trustworthiness assessment}. Such an assessment allows people to base their trust on well-informed expectations regarding the agent’s capabilities and limitations \cite{schlicker2022}. Crucially, if the agent demonstrates sound normative reasoning, the outcome of this assessment will likely be positive.

\subsection{Introducing Moral Robustness}\label{2robustness}

Actions grounded in reason-based moral decision-making not only form the basis of an AAA’s reliable competence and its enforced willingness -- and thereby its trustworthiness -- but they also offer the additional advantage of enhancing the agent’s \textit{moral robustness}, a further, more technical desideratum. In general, \introterm{robustness} refers to the reliability and predictability of an agent’s behavior under varying conditions \cite{brundage2020trustworthyaidevelopmentmechanisms, amodei2016}. Extending this concept, \introterm{moral robustness} can be characterized as the guarantee that morally significant changes in the agent's behavior are proportional to morally significant changes in the circumstances. 

In particular, moral robustness entails the guarantee that morally insignificant variations in circumstances do not result in morally significant differences in the agent’s behavior. By design, an RBAMA ensures that when confronted with the same morally relevant facts across different decision situations, it \textit{consistently} derives the same moral obligations and reliably acts to fulfill them. Consequently, in addition to providing a basis for trustworthiness, the reason-based approach also offers a solid foundation for achieving moral robustness in the development of AMAs. This can be exemplified by imagining an RBAMA navigating the bridge scenario, where it is assigned a package delivery task. Assume, it has acquired a sound normative reasoning with respect to the moral dimension of its environment. Whenever a person has fallen into the water and there is no person standing on the bridge, the agent would recognize a normative reason toward rescuing them. Because it is strictly bound to act in accordance with its moral obligations, the RBAMA will always initiate a rescue strategy whenever it detects a drowning person. Likewise, when a person is on the bridge, the agent would acknowledge a reason to ensure not to push them off and be bound to act accordingly. In cases where both morally relevant facts are present -- i.e., if there is a drowning person and a person on the bridge -- but where no conflict arises between fulfilling the corresponding moral obligations, the agent would recognize both and act in a manner that fulfills each of them. In case of conflict, however, the RBAMA may rely on \textit{context-independent a prioritization} of these reasons. This enables it to \textit{reliably arrive at the same conclusion} regarding which moral obligation takes precedence.  For example, could learn to reliably prioritize rescuing a person over avoiding the act of pushing someone off the bridge. As a result, it possesses an inherent advantage with respect to its moral robustness.

\subsection{Acting for the Right Reasons: Moral Justifiability}\label{2justifiability}

Finally, one further important consideration arises when adopting normative reasons as the foundation for the moral decision-making of artificial agents: they provide a basis for establishing the \textit{moral justifiability} of the agent’s actions. The underlying rationale is that normative reasons can provide a sufficient condition for moral justifiability. More specifically, the central claim is that \textit{if an agent’s action is sufficiently supported by normative reasons, then the action is morally justifiable}. 

Accordingly, for an RBAMA’s actions to be regarded as morally justifiable, it is sufficient that two conditions are met. First, the RBAMA must develop sound normative reasoning -- it must incorporate a comprehensive set of normative reasons and maintain a plausible prioritization among them. Second, it must consistently be guided by this normative reasoning in its behavior, meaning it must be strictly bound by the overall moral obligations derived from it. However, since it is ensured by the RBAMA’s design that the second condition holds -- as we will see later in detail -- the primary challenge is to guarantee that it acquires sound normative reasoning for its actions to be morally justifiable.

Concerning the \textit{assessment} of the moral justifiability of an agent's action, the strength of normative reasons again lies in their explanatory and evaluative function. With the RBAMA being bound to act on normative reasons, the question of whether its behavior is morally justifiable ultimately depends on whether its reasoning is, in fact, sound. As a result, having insight into the agent’s normative reasoning enables \textit{moral judgment}. 

Moreover, grounding the agent’s moral decision-making in normative reasons not only allows for moral evaluation of the justifiability of its actions but also facilitates the provision of \textit{corrective feedback} as a way to teach the agent sound reasoning and to dynamically adapt to changes in moral judgment. For instance, if the agent overlooks a relevant reason or applies an incorrect prioritization, this can be directly addressed. Returning to the bridge scenario, if an RBAMA were to ignore the presence of a drowning person and continue with its delivery task, it could be informed that it had a normative reason to rescue the drowning person. Likewise, when prioritizing the normative reason to make sure that it pushes no persons from the bridge over the normative reason to rescue a drowning person, it can be corrected on this prioritization. This exemplifies, that corrective feedback on the RBAMAs normative reasoning can be provided on a \textit{case-by-case basis}, enabling an \textit{iterative learning process} that guides the agent toward developing a sound normative reasoning over time -- and thereby ensuring the its actions \textit{become} morally justifiable.

\section{Ethics, Safety and Alignment}\label{2safetyAndAlignment}

The concept of AMAs emerged within the research field of \textit{machine ethics}, which focuses on ensuring that artificial autonomous agents behave \textit{ethically} \cite{allen2006, tonkens2009, wallach2008, anderson2007machine, anderson2004}. Within this broader domain, a more focused research direction has developed around the ethical behavior of RL agents, which has, to some extent, diverged from traditional machine ethics. A recent meta-analysis (\cite{vishwanath2024reinforcement}) identifies cross-study research trends in this area.

Crucially, the development of AMAs -- whether grounded in general machine ethics or in RL-specific approaches -- should not proceed in isolation from the requirements posed by adjacent fields such as AI safety and AI alignment since those fields -- while sharing overlapping concerns about guiding artificial agents -- also introduce distinct and, at times, divergent requirements. The following sections provide a broad overview of how the requirements emphasized within these fields intersect with the goal of building AMAs, where they fundamentally differ and how they are relevant in the context of building RBAMAs. 

\subsection{AI Safety}\label{2safety}

\introterm{AI Safety} is often described in terms of preventing unnecessarily \textit{harmful} or \textit{risky} behavior of AI systems \cite{morales2023toward, amodei2016}. This characterization aligns well with the intuitive understanding of the concept and serves as a working definition in the absence of a single, widely established formulation of the term.

Under this understanding, AI Safety is closely connected to the development of AMAs, as moral considerations prima facie require that no unnecessary harm is caused to moral subjects. For example, the package delivery agent is morally obliged to avoid pushing persons off the bridge, which likely would be considered a violation of a moral obligation due to causing harm. In fact, the task of building ethical agents is sometimes even reduced to ensuring safety, where ethical behavior is viewed as fully captured by adherence to \textit{constraints} \cite{vishwanath2024reinforcement}. 

However, this perspective is insufficient. Consider, for example, the moral dilemma presented in the bridge scenario, in which the agent is required to risk pushing the person off the bridge in order to save another person from drowning. Arguably, morality in this case may very well demand to rescue the drowning person -- and it may allow the agent to potentially cause harm by pushing a person off the bridge in order to fulfill this higher moral obligation. However, taking such an action -- though potentially morally required -- would arguably not be considered safe behavior. Consequently, efforts to reduce the development of ethical systems to the fulfillment of safety requirements fall short. Crucially, it is possible to teach an RBAMA a reason theory that prioritizes moral obligations which take the form of safety constraints. This ensures that safety remains a central concern within its moral reasoning.  However, an RBAMA is designed to function as a moral agent -- which can be realized by teaching it higher-priority reasons that override certain safety constraints.

\subsection{AI Alignment}\label{2alignment}

AI Alignment is a broad and multifaceted challenge, with its interpretation varying depending on what an AI system is expected to be aligned with. As a result, several forms of alignment have been proposed, one of which focuses on aligning AI systems with human moral values \cite{kasenberg2018, gabriel2022challenge, shen2024towards}. In this sense, the development of AMAs can be understood as an alignment problem. More generally, however, AI alignment may also refer to ensuring that AI systems are aligned with instrumental objectives, prudential goals, human intentions, or prevailing social and cultural norms \cite{gabriel2020a}. Arguably, also AI safety can be considered within the context of alignment, understood as one of the key requirements an AI system should be designed to meet. Consequently, while the development of safe systems and ethical systems represents distinct objectives, both can be framed as challenges within the broader scope of AI alignment. 

The RBAMA framework is primarily aimed at developing \textit{morally} aligned agents, as it considers only normative reasons in its moral decision-making processes. However, the framework is open to incorporating \textit{other types} of reasons, which could support the construction of a more broadly aligned system. Expanding the scope to include non-normative reasons would extend the project beyond building reason-based \textit{moral} agents. It would shift the project toward developing reason-based \textit{overall aligned} agents, capable of integrating a broader range of reasons to satisfy the various demands encompassed by this broader objective. While this broader objective lies well beyond the scope of current work, the flexibility and openness of the RBAMA framework offer an additional argument for initiating the endeavor of building such agents.
	\chapter{Preliminaries}\label{3prelim}

\section{Reinforcement Leaning}\label{3rl}

\introterm[reinforcement learning]{Reinforcement learning} (RL) is a subfield of machine learning. In RL, an \textit{agent} is taught to learn a \textit{policy} -- a strategy for achieving some goal -- by being rewarded for actions that lead to the desired outcome  or contribute to progress toward it. 

In order to train an RL agent, relevant information about the environment needs to be represented in a form suitable for RL algorithms. Typically, the environment is formalized as a Markov decision process (MDP) \cite{sutton2018, silver2015}, a mathematical framework designed to capture sequential decision problems.

\begin{definition}[Markov Decision Process]\label{def:mdp}
A \introterm{Markov Decision Process} (MDP) is a tuple $(S, A, P, R, \mu,\gamma)$ with a finite state space $S$, a finite action space $A$, transition probabilities $P$, a reward function $R$, an initial state distribution $\mu$ and a discount factor $\gamma$. 

The state space $S$ is the set of all possible states of the environment, the action space $A$ consists of all actions the agent can execute and the transition probabilities are a function $P: S \times A \times S \to [0,1]$. The transition probability $P(s' | s, a)$ is the probability of the environment for transitioning to state $s'$ when an action $a$ is executed in state $s$. To provide the agent with feedback on its choice of actions, each state transition is accompanied by a reward signal determined by the reward function $R: S \times A \times S \to \mathbb{R}$. Specifically, after transitioning from state $s$ to a successor state $s'$ by selecting action $a$, the agent receives a reward $R(s, a, s')$.
\end{definition}

In this framework, the interaction between the agent and the environment can be represented as an alternating sequence of visited states, performed actions, and received rewards, continuing until the episode terminates at time step $T$: 
$$s_{t}, a_{t}, r_{t+1}, s_{t+1}, a_{t+1}, r_{t+2}, \dots, s_{T}.$$
At the end of each episode it has received a cumulative reward of $$\sum_{k=t+1}^{T} r_{k}.$$ The initial state distribution $\mu$ defines the probability of starting in each possible state. A starting state $s_{0} \in S$ is drawn from this distribution at the beginning of each episode. The last element of an MDP is the discount factor $\gamma \in [0, 1]$. The discount factor regulates to what amount the reward received later in an episode contributes to the total return. The discounted return with discount factor $\gamma$ for the sequence $s_{t}, a_{t}, r_{t+1}, s_{t+1}, a_{t+1}, r_{t+2}... s_{T}$ is defined as
$$\sum_{k=t+1}^{T}\gamma^{k-t-1} r_{k}.$$

Within this framework, the goal of RL is to find a \introterm{policy} that chooses actions such that the expected discounted return is maximized.
Policies can be either stochastic or deterministic. A \textit{deterministic} policy always selects the same action for a given state, formally defined as a mapping $\pi: S \rightarrow A$, assigning to each state $s \in S$ exactly one action $a \in A$. Any policy $\pi$ can be evaluated by the expected discounted average return it yields for every state $s$. Formally, this is measured by the \introterm{state-value function}  defined as
$$V^\pi(s) = \mathbb{E} \left[ \sum_{k=t}^{\infty} \gamma^t r_{t+1} \mid s_t = s, \pi \right] \quad \text{for every state } s \in S.$$
A policy maximizing this value for all states, is referred to as an \introterm{optimal policy} and denoted $\pi^{*}$. 

Depending on how the environment is modeled, the framework is slightly altered. For example, atomic propositions can be used to include additional information connected to a state of an MDP. This is done in \introterm[labeled MDP]{labeled MDPs}. A labeled MDP is a tuple $(S, A, P, R, \gamma, AP, L)$ with $L: S \rightarrow 2^{AP}$ being a labeling function that maps states to a set of atomic propositions $AP$. Another variation of MDPs are \introterm[partially observable Markov decision process]{partially observable Markov decision processes} (POMDPs). In a POMDP, the agent does not directly observe the state. Instead, the underlying MDP is extended with an observation space $\Omega$ and an observation function $O: S \times A \times \Omega \rightarrow [0,1]$, which specifies the probability of receiving observation $o \in \Omega$ given that action $a \in A$ was executed and resulted in state $s \in S$. Formally, a POMDP is defined as a tuple $(S, A, P, R, O, \Omega, \gamma)$.

Furthermore, RL methods are distinguished between \introterm{model-based RL} and \introterm{model-free RL}. Model-based RL involves working with an explicit model of the environment which the agent uses to predict future states and rewards; i.e., it has access to or learns $P$ and $R$. Model-free RL does not use an explicit model of the environment. Instead, the agent learns a policy directly from collected experience. 

\subsection{Deep Q-Learning}\label{dql}
One model-free approach to learn an optimal policy leads over the \introterm{state-action function}, which outputs the average return for taking action $a$ in a state $s$ and then following a policy $\pi$:
$$Q_{\pi}(s,a) = \mathbb{E}_{\pi} \left[\sum_{k=0}^{\infty}\gamma^{k} r_{t+k+1} \mid s_{t} = s, a_{t} = a, \pi \right].$$

Optimal policies share the same optimal action-value function defined as $Q^{*}(s,a) = \max\limits_{\pi}q_{\pi}(s,a)$ for $s \in S$ and $a \in A$ \cite{sutton2018}. If $Q^{*}$ is known, an optimal policy can be easily derived by choosing an action $a$ that maximizes $Q^{*}(s,a)$ \cite{sutton2018}. 
The optimal action-value function  is recursively expressed in the \textit{Bellman optimality} equation
$$Q^{*}(s,a)=\sum_{s',r}P(s',r \mid s,a)\left[r+\gamma \max\limits_{a'} Q^{*}(s',a')\right].$$
However, solving this equation explicitly is intractable \cite{sutton2018} and for model-free approaches not possible at all due to unknown $P$. 

Crucially, $Q^{*}$ can be estimated instead. In \introterm{q-learning}, an approximation of the action-value-function $Q(S_{t}, A_{t})$ is randomly initialized and iteratively improved by using sampled experiences to make an update step according to the following formula:  
$$Q(s_{t}, a_{t}) \leftarrow Q(s_{t}, a_{t}) + \alpha \left[r_{t+1} + \gamma \max\limits_{a}Q(s_{t+1}, a_{t})- Q(s_{t}, a_{t})\right]\text{\cite{sutton2018}},$$
whereby $\alpha$ is a hyperparameter specifying the learning rate.
The choice of actions for sampling experiences follows a strategy to choose between selecting a random action and an action that at this point has the highest estimated value. One such strategy is $\epsilon$-greedy, where a hyper-parameter $\epsilon$ controls the likelihood for taking a random action. This likelihood can be adjusted over the training phase to encourage exploration at the beginning of the learning process and reduce it later when a good approximation has been learned. As q-learning uses sampled experiences for updating the agent's policy, it works in a model-free setting.

For approximating the action-value-function, one approach is to estimate the value of each state-action pair, e.g. by maintaining a look-up-table. However, this becomes problematic in large state-action spaces, because of the amount of storage memory and the time this requires \cite{jang2019a}. Moreover, by following this approach, it is not possible to generalize to state-action-pairs not encountered during training, even if they closely resemble known ones. An alternative to this is to learn an approximation function. In order to allow the inference of complex functions, a \introterm{neural network} (NN) can be used for this purpose. This is the main idea behind \introterm{deep q-learning} (DQL) \cite{jang2019a}. 

However, the integration of a neural network as function approximator in q-learning is not straightforward. It carries the risk of the training process becoming very unstable. One problem is the dependence of states, actions, and rewards on previous experiences in the episode, violating the usual assumption of having independently and identically distributed data when training a neural network. Another issue is the \textit{moving target problem}, which arises because the target values used for training the neural network depend on the network itself. As the network's parameters are continuously updated during training, the target values keep changing, causing instability and slower convergence.

A common approach for stabilizing training in DQL is to introduce an \textit{experience replay}. The experience replay is a buffer to store samples. When optimizing the parameters of the neural network, samples are randomly selected from the buffer, which breaks the undesired correlation. Another measure is to use two separate networks. The network used for calculating the target value $Q(s_{t+1}, a_{t})$ in the optimization step -- the \textit{target network} -- is updated less frequently than the \textit{policy network} used for estimating the action-value function when aiming to select the optimal action during sampling. Thereby, the moving target problem is mitigated and the training process is stabilized. \cite{jang2019a}

\subsection{Reinforcement Learning for Efficient Deliveries}\label{bridgeMDP}

In the following, the package delivery task in the bridge setting introduced in \cref{2motivation} is formalized as an MDP. First, consider the setting without its moral dimension -- that is, imagine no people are present in the area. Under these conditions, the agent can focus solely on achieving its instrumental goal of successfully transporting the package across the bridge. The agent's task to find a trajectory from the starting position $\mathit{pos}_{s}$, where it picks up the package, to the goal position $\mathit{pos}_{g}$, where the package must be delivered.
Consequently, the state space $S$ needs only to incorporate the position of the agent:
$$S=(\mathit{pos}_{a}).$$
The starting state $s_{0}$ is $\mathit{pos}_s$.
For the sake of simplicity, it is assumed that the space in which the agent moves is discrete. 
The action space consists of the directions in which the agent can move and is, equally as the state space, simplified to be discrete. In each step, the agent can take a move to the right, left, up or down; i.e.,

$$A=(\textsf{left}, \textsf{right}, \textsf{up}, \textsf{down}).$$

The dynamics of the environment are defined by transitioning the agent to the position $\mathit{pos}'_{a}$, which results from executing action $a$. This happens deterministically. Thus $$P(s' \mid a,s) = P((\mathit{pos}_{a}) \mid a, (\mathit{pos}'_{a})) = 1.$$
Since the agent's goal is to carry the package to $\mathit{pos}_{g}$, a positive reward (the concrete choice of which can be neglected) is assigned when the agents transitions to at the $\mathit{pos}_{g}$:
$$R(s, a, s') = \begin{cases} +1, & \text{if } \mathit{pos}_a' = \mathit{pos}_g, \\ 0, & \text{otherwise}. \end{cases}.$$

Finally, the discount factor $\gamma$ needs to be set to some reasonable value, which is not problem specific to the modeled environment. As an RL agent trained to solve this MDP is only rewarded for reaching its long-horizon goal to deliver the package, it makes sense to choose a value close to 1 for technical reasons (e.g. $\gamma = 0.99$).

For successful training in this environment, the agent needs to learn a policy that teaches it to move to its goal. Keeping the assumptions about world knowledge at a possible minimum, the agent does not have access to the world model. In such a setting, a model-free RL algorithm, e.g., (deep) q-learning (cf. \ref{dql}), needs to be applied to solve the MDP. 

\subsection{Including Morally Relevant Facts}\label{bridgeMoralMDP}
If the setting is extended to include persons within the area, new morally relevant facts may arise. For instance, individuals might fall into the water and require rescue. Moreover, they may cross the bridge, risking being pushed off if a collision with the agent occurs. In this extended scenario, the agent is expected to take moral considerations into account when selecting its actions. The morally relevant information can be formally captured in propositional terms by modeling the environment as a labeled MDP and defining the set of atomic propositions as $AP = (B, D)$, where $B$ denotes the morally relevant fact that a person is on the bridge, and $D$ denotes the morally relevant fact that a person is drowning. From this modeling perspective, persons are treated as elements of the environment, and morally relevant facts supervene on their statuses -- that is, the statuses of the persons determine the output of the labeling function.

Further, if the agent is intended to learn how to fulfill its moral obligations through the reward mechanism, the reward function must provide positive feedback for transitions into ethically desirable world states (and punish transitions into morally undesirable ones). Consequently, the statuses of persons -- upon which the morally relevant facts supervene -- must explicitly become part of the state representation. To this end, the state space can be defined as $$S = (\mathit{pos}_{a}, \mathit{stat}),$$ where the component $\mathit{stat}$ consists of a tuple containing a status vector for each person in the environment: $$\mathit{stat} = (\mathit{stat}_1, \mathit{stat}_2, \dots, \mathit{stat}_n).$$ Each individual status vector $\mathit{stat}_i$ captures attributes associated with person $p_i$. As a modeling choice, each person’s status could be defined by their current position and the remaining time they can stay afloat before drowning. 

So far, the reward function $R$ in the MDP returns a single value, allowing standard RL algorithms to be applied to solve it. Rewarding the agent for both morally and instrumentally good actions could, in principle, be straightforwardly achieved by combining the ethical and instrumental values into a single reward signal. Adopting such an approach would also require combining the values associated with fulfilling each moral obligation -- specifically, the obligation related to avoid pushing a person off the bridge and the obligation related to rescuing a person -- into a single numerical value. Some intuitively plausible choice for such a reward function could be 
$$R(s, a, s') =R_{\text{instr}} + R_{\text{moral}},$$
where 
$$R_{\text{instr}}(s, a, s') =  \begin{cases} +1, & \text{if } \mathit{pos}_a' = \mathit{pos}_g, \\ 0, & \text{otherwise}. \end{cases}$$
and 
$$R_{\text{moral}}(s, a, s') = \begin{cases} -1, & \text{if a person is pushed into the water}, \\ +1, & \text{if all persons are rescued} \\ 0, & \text{otherwise}. \end{cases}$$

\section{Multi-Objective Reinforcement Learning}
In \introterm{multi-objective reinforcement learning} (MORL), an RL agent is trained to optimize two or more objectives simultaneously. It solves a problem modeled as a multi-objective markov decision process, in which reward  signals reflecting the quality of the state transitions in terms of each objective are combined in one reward vector $R: S \times A \times S \to \mathbb{R}$. \cite{liu2015, ZHANG2023526, hayes2022} Specifically, after transitioning from state $s$ to a successor state $s'$ by selecting action $a$, the agent receives a reward $R(s, a, s')$.

\begin{definition}[Multi Objective Markov Decision Process]
    A \introterm{Multi-Objective Markov Decision Process} (MOMDP) is a tuple $(S, A, P, \vec{R}, \mu,\gamma)$ with $S, A, P, \mu$ and $\gamma$ denoting the same elements as in an MDP (\cref{def:mdp}) and $\vec{R}: S \times A \times S \to \mathbb{R}^n$ with $\vec{R}(s, a, s') =(r_1,\dots,r_n)$ being a vector-valued reward function.
\end{definition}

In machine ethics, MOMDPs are used to assign an ethical value to world states. Modeling a problem as MOMDP to include a moral dimension is a popular choice.\cite{noothigattu2019, rodriguez-soto2020, rodriguez-soto2021}.

It can be considered to be a subcategory of what has been referred to as reward-based approaches before (cf.\cref{2introInformalRewardBased}). 

Analoguous to defining a value function for a policy in an MDP, policies in a MOMDP yield a \emph{value vector} defined as 
$$\vec{V}^\pi(s) = \mathbb{E}\left[ \sum_{t=0}^{\infty} \gamma^{k} \vec{r}_{t+k+1} \mid s_t = s, \pi \right] \quad \text{for every state } s \in \mathcal{S}.$$

In most approaches to MORL a \introterm{scalarization function} $f$ is used to scalarize the reward vector. If $f$ is linear, it can be identified with a weight vector $\vec{w}$. In this case, the problem can be solved with single-objective reinforcement learning algorithms. An alternative is to find the set of \emph{undominated} policies -- the set of policies that are optimal for some hypothetical scalarization function.

\subsection{Dividing Moral and Instrumental Value in the Bridge Environment}\label{bridgeMORL}
\textbf{Bridge Setting} The package delivery task in the bridge setting was modeled as MDP in \ref{bridgeMoralMDP}. However, in an MDP it is not possible to divide the moral and the instrumental dimension. In contrast, in a MOMDP, they can be separated. In a MOMDP $(S, A, P, \vec{R}, \mu,\gamma)$ with the state space extended with the statuses of the persons as in \cref{bridgeMoralMDP}, the reward function could be set to $\vec{R} = (R_\text{instr}, R_\text{moral})$, with $\vec{R}_{\text{instr}}(s,a,s'): S \times A \times S' \rightarrow \mathbb{R}$ and $R_\text{moral}(s,a,s'): S \times A \times S' \rightarrow \mathbb{R}$ defined as in \ref{bridgeMoralMDP}.

\subsection{Ethical Environments}\label{introTechRS}
Rodriguez-Soto et.\ al \cite{rodriguez-soto2021} introduced the concept of \emph{ethical environments}. In an ethical environment the reward function of a MOMDP which includes a moral objective is scalarized such that the optimal policy in the resulting MDP is guaranteed to be \introterm{ethical optimal}. For an ethical optimal policy, it is guaranteed that the agent acts such that the moral value is maximized. 

For this purpose they first define \emph{ethical MOMDPs} -- MOMDPs which include an ethical dimension  that can be further divided in a normative part and an evaluative part.
\begin{definition}[Ethical Multi-Objective Markov Decision-Process, \cite{rodriguez-soto2021a}]
    A MOMDP $(S,A,P,(R_{0}, R_{\mathcal{N}}+ R_{E},P),\mu,\gamma)$ where $R_{0}$ corresponds to the individual objective is an \introterm{ethical multi-objective Markov decision-process} (ethical MOMDP) iff 
    $$R_{\mathcal{N}}:S \times A \rightarrow\mathbb{R}^-$$ is a normative reward function penalizing the violation of normative requirements and 
    $$R_{E}:S \times A \rightarrow\mathbb{R}^+$$ is an evaluative reward function that (positively) rewards the performance of actions evaluated as praiseworthy.  
\end{definition}

In an ethical MOMDP, a policy is an ethical optimal policy $\pi^{*}$ if its value vector $\vec{V}^{\pi^{*}}=(V_{0}^{\pi^{*}}, V_{\mathcal{N}}^{\pi^{*}}, V_{E}^{\pi^{*}})$ is optimal for its ethical objective; i.e.\, if it maximizes the combined return $V_{\mathcal{N}}^{\pi^{}} + V_{E}^{\pi^{*}}$.

The authors introduce an algorithm to compute an \introterm{ethical embedding} for an ethical MOMDP $M = (S,A,P,(R_{0}, R_{\mathcal{N}}+ R_{E},P),\mu,\gamma)$. An ethical embedding is a scalarization $M'$ of $M$ in which it is guaranteed that all optimal policies in $M'$ are ethical optimal in $M$. This amounts to finding a weight vector $\vec{w}=(1,w_e)$ with positive weights such that all optimal policies in the MDP $M' = (S,A,P,(R_{0} + w_e(R_{\mathcal{N}}+ R_{E}),P),\mu,\gamma)$ are also ethical-optimal policies in $M$. Further, the authors choose the weight vector $\vec{w}$ such that they interfere with the agent's learning process as little as possible; i.e. $\vec{w}$ is set to the minimum value that still guarantees ethical optimality. They then use $\vec{w}$ to create the single-objective MDP $M'$. Crucially, any standard RL algorithm can be used to train an agent in $M'$. This agent is then guaranteed to behave ethically optimal once it learns a policy that approximates the optimal policy of $M'$  good enough.

\section{Safe Reinforcement Learning}\label{3safeRL}
The field of \introterm{safe reinforcement learning} (Safe RL) puts attention at preventing harmful outcomes resulting from risky behavior. Safe RL methods include the identification of sequences of actions that lead to unsafe states as well as mechanisms to ensure that the agent avoids them \cite{garcia2015}. The use of Safe RL techniques poses an alternative to applying MORL approaches in machine ethics.

Safe RL often works with a modification of MDPs that includes a constraint function \cite{khattar2023a, gattami2021a, gu2024reviewsafereinforcementlearning}, which informs the agent about whether a constraint was violated. 

\begin{definition}[Constraint Markov Decision Process, \cite{gu2024reviewsafereinforcementlearning}] \label{CMDP} 
A \introterm{constraint Markov decision process} (CMDP) is a tuple $(S, A, P, R, C, \mu, \gamma)$ with $S, A, P, R, \mu$ and $\gamma$ denoting the same elements as in an MDP (\cref{def:mdp}) and a \introterm{constraint function} $C: S \times A \rightarrow \{(c_i, b_i)\}^m_{i=1}$, where $\{(c_i, b_i)\}^m_{i=1}$ is a constraint set in which each pair $(c_i, b_i)$ represents an individual constraint with
\begin{itemize}
    \item $c_i : S \times A \rightarrow \mathbb{R}$, a cost function, assigning a numerical 'cost' to each state-action pair,
    \item $b_i \in \mathbb{R}$, a safety constraint bound, defining a threshold on the maximum allowable accumulated cost under this constraint,
    \item $i \in [m]$ with $m$ being the type number of cost constraints,
\end{itemize}
such that $\{(c_i, b_i)\}^m_{i=1}$ is a set of constraints at the state-action level.\footnote{ Alternative CMDP formulations exist that impose constraints at more global scales.}
\end{definition}

\subsection{Shielding}\label{3shielding}
\introterm[shielding]{Shielding} is a Safe RL technique for preventing an agent from taking unsafe actions by enforcing constraints during training or deployment. It acts as a filter ensuring that the agent does not execute actions that might lead to harmful outcomes; i.e. it ensures that safety constraints are not violated. \cite{alshiekh2018, jansen2020safe, konighofer2023a} 

Shields can be subdivided into two families. The first family are \emph{post-shields}. Post-shields are integrated into the agent’s architecture such that the agent first decides which actions it prefers to take and then hands them over to the shield to ensure their permissibility. 

\begin{definition}[Post-Shielding, \cite{konighofer2023a}]
Let $s_t$ be the state of an MDP in step $t$ and let $a^1_t$ be the action the agent chooses. The agent's policy is \introterm{post-shielded} if $a_t = a_t^1$ in case $a_t^1$ is a safe action and $a_t = a'_t$ with $a'_t$ being a safe default action picked by the shield otherwise.
\end{definition}

In a variation of post-shielding, the agent passes a preference list $rank_t = \{a^1_t,...,a^k_t\}$ to the shield of which the highest ranked safe action is executed \cite{konighofer2023a}.

The second category of shields are \introterm[pre-shield]{pre-shields}. In contrast to post-shields, they filter out impermissible actions before the agent makes a choice. In this way it is ensured, that the agent is only presented with a selection of permissible actions. \cite{konighofer2020}

The line of research conducted by Emery Neufeld discussed in \cref{2introInformalRuleBased} proposes to generate a post-shield (called the "normative supervisor") that is used to guarantee the conformance of an RL agent with a moral rule set \cite{neufeld2021, neufeld2022, neufeld2022a}. The normative supervisor is based on a formal framework for defeasible deontic logic. It uses a theorem prover to determine whether an action complies with overall obligations that are derived from the rule set. Consequently, a logical framework with pre-defined rules is used to construct the shield in the proposed architecture. It guarantees that only actions are allowed that do not violate a moral duty to which the agent is bound according to the deontic theory that sets the rules.

\subsection{Safety in the Bridge Setting}\label{3bridgeCMDP}
The safety perspective allows for modeling the bridge scenario in yet another way. It can be formalized as a CMDP $(S, A, P, R, C, \mu, \gamma)$ by setting $S$, $A$, $P$, $\mu$, and $\gamma$ as in \cref{bridgeMoralMDP}, using the reward function $R$ from \cref{bridgeMDP}, and defining $C(s,a) = (\{c_1, b_1\}, \{c_2, b_2\})$ with
$$c_1(s, a) = \begin{cases}
1, & \text{if a person is pushed into the water}, \\
0, & \text{otherwise}
\end{cases}$$ and
$$c_2(s, a) = \begin{cases}
1, & \text{if a person drowns}, \\
0, & \text{otherwise}
\end{cases}.$$
To reflect the view that pushing and drowning should be avoided under all circumstances, the constraint bounds $b_1$ and $b_2$ can be set to low values. For example, setting both to $0$ enforces a strict zero-tolerance policy for pushing and drowning, though this may risk rendering all actions impermissible, as in the case of a moral dilemma. Alternatively, one can prioritize avoiding direct harm (pushing) over harm that may occur naturally (drowning) by setting $b_1 = 0$ and $b_2 = 1$, or another small positive value.

It is undoubtedly sensible to think of fulfilling the moral obligation not to push persons in the water while pursuing the package delivery task as respecting a constraint. However, it remains at least debatable whether incorporating the moral duty to rescue drowning persons as a constraint -- namely, constraining the agent to a course of action that ensures no person drowns -- extends the concept beyond its intended scope. A more appropriate characterization might be that the agent’s moral obligation to rescue a person requires it to shift from one task to another -- from delivering a package to performing a rescue operation. 

\subsection{Safety vs. Liveness Properties}\label{3safetyAndLiveness}
Safe RL aims at minimizing the risk that the RL agent enters an unsafe state. Put differently, Safe RL is concerned with ensuring that \emph{safety properties} hold. Intuitively, safety properties guarantee, that something bad will not happen \cite{kindler, alpern1985, paul1985}. In the context of executing a program thought of as a sequence of states as explored in the formal methods field, a  property is a safety property iff each execution violating the property has a finite prefix that cannot be extended to a sequence fulfilling the property \cite{kindler}. 

For a formal definition, let $S^{\omega}$ be the set of infinite sequences of states and $P$ a temporal property over $S^\omega$, i.e. a subset of $S^\omega$. Further, let $S^{*}$ is the set of finite sequences of program states, let $\delta \vDash P$ with $\delta \in S^{\omega}$ denote that execution $\delta$ is in $P$ and let $\delta_{i} \in S^*$ denote a prefix of $\delta$ of length $i$. Then, safety properties can be defined in the following way: 
\begin{definition}[Safety Property, \cite{alpern1985}]
   A property $P$ is called \introterm{safety property}, if for all infinite sequences $\delta \in S^{\omega}$ there exists a prefix $\delta_i \in S^*$ such that for all $\beta \in S^\omega$, it holds that $\delta \not \vDash P \Rightarrow \delta_{i}\beta \not \vDash P.$ 
\end{definition}

In RL, agents are typically trained and evaluated on episodes of finite length. However, when deployed in real-world settings, an agent is often expected to perform its task continuously or repeatedly -- that is, over an infinite sequence of steps. In this setting, let $S^{\omega}$ denote the set of all possible trajectories the agent may follow. Safe reinforcement learning is then concerned with ensuring that the agent selects only those trajectories that satisfy a specified safety property $P$.

Another class of properties on which one might want to give a guarantee are liveness properties. Informally, liveness properties express that something good will eventually happen \cite{kindler, alpern1985, paul1985}. They can be defined in the following way:
\begin{definition}[Liveness, \cite{alpern1985}]
    A property $P$ is called \introterm{liveness property}, if for all words  $\alpha \in S^{*}$, there exists $\beta \in S^\omega$ such that $ \alpha\beta \vDash P$.
\end{definition}
Crucially, when an RL agent is deployed continuously to perform its task, not only safety but also liveness properties may become relevant.

\subsection{Still Waiting to Be Rescued: When Moral Duties Are Not Time-Critical}\label{3bridgeLiveness}

In Section \cref{3bridgeCMDP}, the bridge scenario was formalized as a CMDP. However, in the CMDP framework (at least the one introduced in \cref{3safeRL}), moral duties toward liveness properties can not be captured. To illustrate the relevance of liveness properties for designing AMAs, consider the following scenario in the bridge setting: as before, persons moving around the deployment area may fall into the water and require rescue. However, while they can not get out of the water themselves, they do not drown. Consequently, the agent has a moral obligation to pull them out. However, the moral obligation to assist them is not time-sensitive -- and the person in the water eventually being rescued is not safety property, but a liveness property. It cannot be violated by an finite sequence of states, meaning that it can not be enforced solely by imposing safety constraints Addressing such properties -- and thereby addressing moral obligations toward ensuring liveness properties -- requires incorporating additional methods that go beyond the scope of Safe RL. 

\section{Formalizing Normative Reasoning} \label{3hortyFramework}
John Horty developed a framework for modeling both reason-based decision-making in general and reason-based moral decision-making in particular \cite{horty2007, horty2012}. A key feature of this framework is its systematic treatment of conflicting reasons through the use of default logic, which enables defeasible inference. Applied to normative reasoning, this allows to make a well-considered decision between competing moral obligations.

For formalizing reason-based decision-making, John Horty introduces \emph{default theories}. There are two types of default theories. In a \emph{fixed priority default theory}, the order among the reasons is fixed in the sense that it is not context dependent. 

\begin{definition}[Fixed Priority Default Theory]\label{defaultTheory}
    A \introterm{fixed priority default theory} is a tuple $\Delta := \langle \mathcal{W}, \mathcal{D}, < \rangle$, where $\mathcal{W}$ is a set of ordinary propositions, $\mathcal{D}$ is a set of default rules, and $<$ is a partial order over $\mathcal{D}$. The default rules in $\mathcal{D}$ are of the form $X \rightarrow Y$, representing defeasible inferences from $X$ to $Y$. The partial order $<$ captures priority among defaults: for any two rules $\delta, \delta' \in \mathcal{D}$, if $\delta < \delta'$, then $\delta'$ has higher priority than $\delta$.
\end{definition}

The second type of default theories are \emph{variable priority default theories}. In variable priority default theories, the order is not fixed, but set through reasons themselves, which makes the priority among the default rules context-dependent. 

A default theory provides the basis for formal (moral) reasoning in a \introterm{scenario}. A scenario $\mathcal{S} \subset \mathcal{D}$ is a set of default rules that are accepted by a reasoner for providing sufficient support for their conclusions. Premise and conclusion of a default rule $\delta$ are denoted as $\mathit{Prem}(\delta)$ and $\mathit{Conc}(\delta)$ respectively and are extended to sets of rules in the standard manner. In order to derive moral obligations from a default theory $\Delta$ in a scenario $\mathcal{S}$, subsets of default rules for which certain conditions hold are of particular interest. 

First, the rules must be \introterm{triggered} in $\mathcal{S}$, which is the case if their premise can be derived from the proposition in $\mathcal{W}$ together with the conclusions that can be drawn by additionally taking into account the rules in $\mathcal{S}$: 
$$\mathit{Triggered}_{\mathcal{W},\mathcal{D}}(\mathcal{S}) := 
    \lbrace \delta \in \mathcal{D}: \mathcal{W} \cup \mathit{Conc}(\mathcal{S}) \vdash \mathit{Prem}(\delta) \rbrace$$.

Second, the rules must not be \introterm{conflicted} in $\mathcal{S}$. Intuitively, rules are conflicted, when the conclusion of a rule in $S$ contradicts their conclusion:
$$\mathit{Conflicted}_{\mathcal{W}, \mathcal{D}}(\mathcal{S}) := 
    \lbrace \delta \in \mathcal{D}: \mathcal{W} \cup \mathit{Conc}(\mathcal{S}) \vdash \neg \mathit{Conc}(\delta)\rbrace.$$

Finally, the rules must also not be \introterm{defeated} in $\mathcal{S}$, meaning that $\mathcal{S}$ does not contain a defeater for them. A \emph{defeater} for a rule $\delta \in \mathcal{S}$ is a rule $\delta' \in \mathit{Triggered}$ that has higher priority (i.e., $\delta < \delta'$) and for which it holds that $\mathcal{W} \cup {\mathit{Conc}(\delta')} \vdash \neg \mathit{Conc}(\delta)$. The set of defeated rules is thus
$$\mathit{Defeated}_{\mathcal{W},\mathcal{D},<}(\mathcal{S}) := 
\lbrace \delta \in \mathcal{D}: \text{ there is a defeater for } \delta \rbrace.$$ 

Rules, for which all of these conditions hold -- rules that are triggered, not conflicted and not defeated -- are \introterm{binding} in $S$.

$$
\mathit{Binding}_{\mathcal{W},\mathcal{D},<}(\mathcal{S}) := 
\mathit{Triggered}_{\mathcal{W},\mathcal{D}}(\mathcal{S})
\cap \overline{\mathit{Conflicted}_{\mathcal{W},\mathcal{D}}(\mathcal{S}})
\cap \overline{\mathit{Defeated}_{\mathcal{W},\mathcal{D},<}(\mathcal{S})}.
$$

Sets of binding rules can then be used to define belief sets that are appropriate choices for ideal reasoners. Prior to the formal explanation of a belief set stand the idea of \introterm[proper scenario]{proper scenarios}. A \emph{proper scenario} $S^*$ in a default theory $\Delta:=\langle\mathcal{W},\mathcal{D},<\rangle$ is a scenario $\mathcal{S}$ such that $\mathcal{S} = \mathit{Binding}_{\mathcal{W},\mathcal{D},<} (\mathcal{S})$. This then generates a set of propositions, i.e. a \introterm{belief set}, through the logical closure of $\mathcal{W} \cup \mathit{Conc}(\mathcal{S})$. Consequently, a belief set $\mathcal{E}$ is generated by setting $\mathcal{E} = \mathit{Th}(\mathcal{W} \cup \mathit{Conc}(\mathcal{S}))$, with  $\mathit{Th}(\mathcal{W} \cup \mathit{Conc}(\mathcal{S})) := \{X: (\mathcal{W} \cup \mathit{Conc}(\mathcal{S})) \vdash X)\}$.

This framework enables the formal execution of reason-based moral decision-making through the application of defeasible inference rules. Within this setting, a reason $\rho$ can be identified with the premise of a default rule that connects $\rho$ to the moral obligation for which it serves as a pro tanto reason. Further, an ideal reasoner would be guided by the propositions that can be derived from the proper scenarios of a default theory. In case, that there are several proper scenarios, two accounts are proposed for how the reasoner should make a choice on its reasons. In the \emph{conflict account}, the ought statement $\mathbf{OB}(Y)$\footnote{Adopting the standard notation for expressing obligation as (cf.\ ~\cite{sep-logic-deontic}), $\mathbf{OB}(Y)$ denotes that it is obligatory that $Y$.} follows from $\Delta$ if $Y \in \mathcal{E}$ for each extension $\mathcal{E}$ of $\Delta$. The \emph{disjunctive account} proposes that the ought statement $\mathbf{OB}(Y)$ follows from  $\Delta$ if $Y \in \mathcal{E}$ for any extension $\mathcal{E}$ of $\Delta$.

    \chapter{Integrating Reason-Based Moral Decision-Making in the RL Architecture}\label{4reasonTheoryChapter}

\begin{figure}[h]
    \centering
    \includegraphics[width=0.9\textwidth]{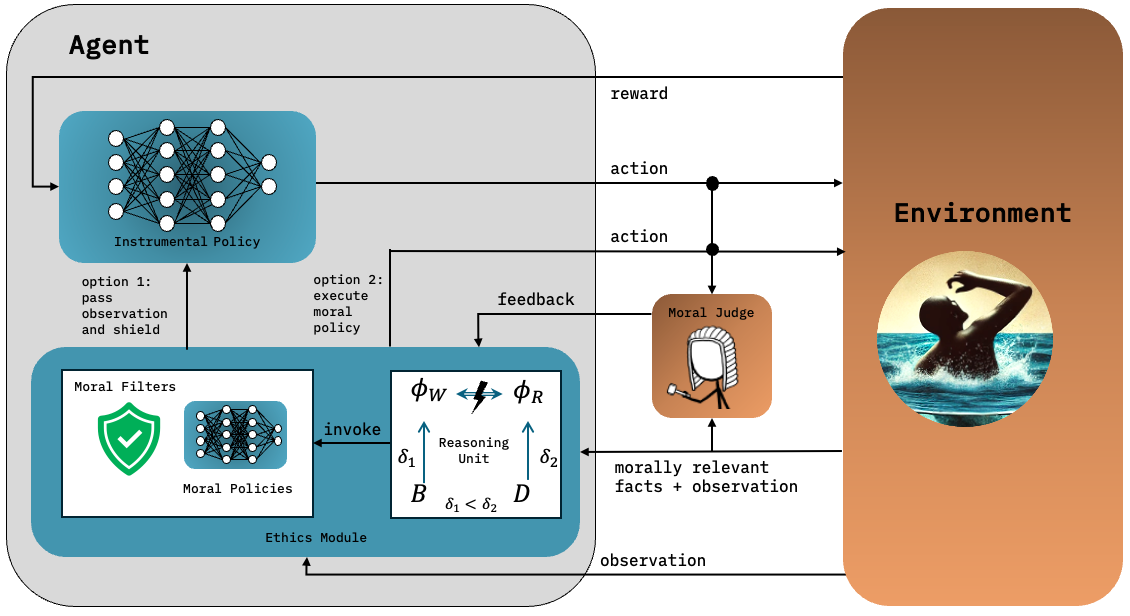}
    \caption{The extended RL pipeline}
    \label{fig:extendedArchitecture}
\end{figure}

In this chapter, I introduce the general architecture of \introterm{reason-based artificial moral agents} (RBAMAs), which is designed to integrate reason-based moral decision-making into the standard RL framework. Furthermore, I propose an operational framework for building RBAMAs, which builds upon the work of Baum et al. (cf. ~\cite{baum2024actingrightreasonscreating}).

One core element of the RBAMA's architectural design (Figure~\ref{fig:extendedArchitecture}) is the \emph{reasoning unit}, which governs its \emph{ethics module}. It enables the RBAMA to infer moral obligations from normative reasons and to systematically address potential moral conflicts.

\begin{example}
\textbf{(Bridge Setting)} To illustrate the role of the reasoning unit, recall the bridge setting introduced in \cref{2motivation}. The moral dimension of the scenario was discussed in terms of normative reasons: a reason $\rho_1$ linking the presence of persons on the bridge to the moral obligation of ensuring that no one is pushed into the water, and a reason $\rho_2$ linking the presence of persons in the water to the moral obligation of hurrying to rescue them. The reasoning unit enables the RBAMA to recognize these reasons and -- if it encounters a moral conflict between $\rho_1$ and $\rho_2$ -- to resolve the conflict by adhering to a learned priority between them.
\end{example}

Beyond enabling the RBAMA to conduct normative reasoning, its architecture must also incorporate mechanisms to ensure that its behavior conforms to the moral obligations it infers. To guide the design of such mechanisms, I propose an informative classification of moral obligations, distinguishing between those that require \emph{constraint satisfaction} and those that require the \emph{fulfillment of moral goals}. To this end, the RBAMA's ethics module integrates dedicated components, each responsible for directing the agent’s behavior when it infers a moral obligation of the corresponding category, thereby ensuring that it acts in accordance with the obligations it has derived.

\begin{example}
\textbf{(Bridge Setting)} Assume an RBAMA has learned to recognize and respond to both reasons $\rho_1$ and $\rho_2$. In a situation where it accepts $\rho_1$ as decision-guiding, its actions are bound to ensuring that it does not push any person from the bridge. This corresponds to respecting a moral constraint, and the component responsible for ensuring constraint satisfaction is activated. As a result, the RBAMA is prevented from colliding with the person on the bridge.
Conversely, if the RBAMA detects persons in the water and accepts $\rho_2$ as decision-guiding, it is required to abandon its instrumental goal temporarily in order to hurry to rescue them. In this case, control is transferred to the component responsible for guiding the agent to move toward the drowning person and pulling them out of the water.
\end{example}

The third main feature of an RBAMA is its ability to iteratively \emph{refine} its reasoning -- by learning to recognize new kinds of moral obligations as well as developing a prioritization among them -- through case-based feedback. To achieve this, the reasoning unit incorporates an update procedure that processes evaluative input. The feedback is provided by an external authority, referred to as the \emph{moral judge} -- a component that evaluates the agent’s behavior in moral terms and indicates whether it aligns with the behavior the judge would expect based on its own reasoning.

\begin{example}
\textbf{(Bridge Setting)}
An RBAMA is not designed to integrate $\rho_1$ and $\rho_2$ in its reasoning right from the beginning. Rather, it is equipped with the capacity to integrate these normative reasons over time when provided with case-based feedback. For example, suppose the RBAMA continues pursuing its delivery task while a person in the water in the need of help. Upon receiving corrective feedback from the moral judge -- informing the RBAMA that it had a moral obligation toward pulling the person out of the water, which was grounded in the normative reason of there being a person in the need of help -- the RBAMA uses this feedback to refine its reasoning process. 
\end{example}

\section{Artificial Moral Reasoning}\label{4moralReasoning}

\begin{algorithm}
\caption{Normative Reasoning and Iterative Improvement}
\label{alg:reaoning_unit}

    $\mathcal{D} \gets \emptyset$, $< \gets \emptyset$ 

    \While{True}{
        Get state $s \in S$ and labels $l(s)$ from the environment\; \label{alg_line:get_lables}
        Compute $\mathcal{W} = \lbrace \neg (\bigwedge_{\mathit{Conc}(\mathcal{D}_\mathit{Conflict})})\rbrace \cup l(s) \cup \mathcal{K}$\; \label{alg_line:compute_W}

        Compute proper scenarios $\mathcal{S}_1, \dots, \mathcal{S}_k$ from $\langle \mathcal{W}, \mathcal{D}, <\rangle$\;
        Choose random scenario $\mathcal{S}^* \gets \mathsf{rand} \{\mathcal{S}_1, \dots, \mathcal{S}_k\}$\; \label{alg_line:random_proper}
        \If{$\delta \in \mathcal{S}^*: Conc(\delta) = \varphi \in \mathcal{G}^{moral}$}{
            Execute $a = \pi_{\theta_{\varphi}}(s)$\; \label{alg_line:moral_task execution}  \label{alg_line:policy}
        }
        \Else{Compute Moral Shield $\mathsf{Shield} \gets \bigcap_{\delta \in \mathcal{S^*}} 
        \mathbf{safe}^{\mathit{Conc}(\delta)}(s)$\; \label{alg_line:shield}
         Execute policy shielded with $\mathsf{Shield}$\;} \label{alg_line:shielded_policy}

        \If{$\mathsf{MoralJudge}(s, l(s), a) = \mathit{Some}(\varphi,X)$}{
            Define new reason: $\delta_\mathit{reas} \gets X \to \varphi$\; \label{alg_line:new_reason}
            Add rule to set: $\mathcal{D} \gets \mathcal{D} \cup \{\delta_\mathit{reas}\}$\; \label{alg_line:add_reason}
            Extend rule order: $< \gets < \cup \{ \delta_\mathit{reas} > \delta \mid \delta \in \mathcal{S}^*\}$\; \label{alg_line:update_order}
            Take transitive closure: $ < \, := \, <^+$\;  \label{alg_line:closure}
        }
    }
\end{algorithm}

\subsection{Elements Composing Artificial Reasoning: A Horty Inspired Framework}

The following paragraph offers an overview of how moral reasoning is formalized to be implemented in an RBAMA. The RBAMA carries out moral reasoning within the \emph{reasoning unit} as specified in Algorithm \ref{alg:reaoning_unit}. This unit operates based on a \emph{reason theory}, a framework inspired by John Horty's formalization of moral reasoning.\footnote{In principle, a different formalization could be adopted, as RBAMAs are not tied to any specific framework for formalized normative reasoning.} The framework encodes the RBAMA's learned reasons along with their prioritization. 

\begin{definition}[Reason Theory]
A \introterm{reason theory} $\mathcal{R}$ is a tuple $\langle <, \mathcal{D} \rangle$, where $\mathcal{D}$ is a set of default rules and $<$ represents a partial order over these rules \cite{baum2024actingrightreasonscreating}.
Further, each moral obligation, that can be inferred within a reason theory, is categorized as either a moral obligation toward fulfilling a moral goal or a moral obligation toward fulfilling a moral constraint. To explicitly represent this distinction, they are split into two sets $\mathcal{G}^{\mathit{moral}} \subset Conc(\mathcal{D})$ containing moral goals and $\mathcal{C}^{\mathit{moral}} \subset Conc(\mathcal{D})$ containing moral constraints, such that $Conc(\mathcal{D})= \mathcal{G}^{\mathit{moral}} \cup \mathcal{C}^{\mathit{moral}}$. 
\end{definition}

Like in John Horty’s formalism, the RBAMA incorporates background information $\mathcal{W}$ into its reasoning. However, $\mathcal{W}$ is not embedded within the reason theory itself. Instead, it depends on the state of the environment. The first part of $\mathcal{W}$ are the morally relevant propositions that hold in $s$. These must first be extracted from the environment. One straightforward way to directly provide such an abstraction is to represent the environment as labeled MDP. In such a model, the agent derives moral obligations in state $s$ of the MDP on the grounds of $l(s)$, the labeling function that returns the morally relevant true propositions in $s$, such that they can be directly included in the background information $\mathcal{W}$ (Line \ref{alg_line:get_lables}). In addition, and in contrast to \cite{baum2024actingrightreasonscreating}, I propose to enable the agent to also incorporate state-independent world knowledge $\mathcal{K}$-- such as logical truths -- into $\mathcal{W}$. This, in particular, allows the agent to recognize logical implications for instance among morally relevant propositions or among moral obligations. Finally, the agent determines which moral obligations cannot be fulfilled simultaneously. Such obligations are referred to as \introterm{conflicting} and are collected in a subset $\mathcal{D}_{\mathit{Conflict}} \subset \mathcal{D}$, which can be computed based on identifying the fulfillment of moral obligation with sets of sequences of primitive actions as further discussed in \cref{identifying}. By forming the union of the morally relevant propositions, the state-independent world knowledge, and the computed information about conflicting obligations, the agent dynamically constructs the background information $\mathcal{W}$ (Line~\ref{alg_line:compute_W}): 
$$\mathcal{W} = \lbrace \neg (\bigwedge_{\mathit{Conc}(\mathcal{D}_\mathit{Conflict})})\rbrace \cup l(s) \cup \mathcal{K}.$$

The agent's reason theory $\langle <, \mathcal{D} \rangle$, when combined with the background information $\mathcal{W}$ forms a default theory $\langle \mathcal{W}, \mathcal{D}, < \rangle$ from which proper scenarios can be derived. In this context, the term \emph{proper scenario} aligns with its usage in John Horty's work. In case, that several sets of default rules would form proper scenarios $\mathcal{S^*} \in \{\mathcal{S}_1, \dots, \mathcal{S}_k\}$, the agent randomly selects one of them (Line \ref{alg_line:random_proper}).\footnote{The idea is to randomly break ties to prevent inaction. Choosing inaction in such a situation is analogous to Buridan’s Ass, a philosophical paradox illustrating decision paralysis. It describes a scenario in which a donkey, unable to choose between two equidistant and equally tempting piles of hay, ultimately starves to death due to its indecision \cite{sep-buridan}.} By doing so, it takes into account all morally relevant facts in state $s$ and makes a reason-based decision on which moral duties to acknowledge as action-guiding. \cite{baum2024actingrightreasonscreating}

\subsection{Identifying Moral Obligations With Sequences of Primitive Actions}\label{identifying}
So far, I have introduced the reason theory, which serves as the basis for inferring moral obligations. They are the conclusions of default rules and are represented in propositional form. However, when it comes to the question of whether an agent acts in conformity with a moral obligation, an understanding of the fulfillment of moral obligations in terms of concrete \textit{behavior} is required. Such an understanding can be grounded in identifying moral obligations with sets of \textit{action trajectories}, that is, sets of sequences of primitive actions. Under this view, conforming to a moral obligation $\varphi$ requires the agent to follow a trajectory from the set $\varphi$ is identified with.  Moreover, providing a behavioral interpretation of moral obligations through this identification, also provides a way for determining $\mathcal{D}_{\mathit{Conflict}}$, which, in turn, is necessary for constructing the background information $\mathcal{W}$. Based on these considerations, I propose an approach for computationally determining whether an action belongs to an action trajectory associated with a moral obligation, drawing on reinforcement learning methodology. The specific methods applied depend on the category of the moral obligations the RBAMA infers as action-guiding.

If the RBAMA derives a moral obligation $\varphi \in \mathcal{G}^{\mathit{moral}}$ -- that is, an obligation requiring it to fulfill a moral goal -- it is morally obliged to execute a sequence of primitive actions leading to the achievement of that goal. Notably, it is intuitively plausible that the RBAMA is further morally required to make its best effort to fulfill the moral goal \textit{}.  For instance, rescuing a person from the water -- and thereby alleviating suffering -- sooner rather than later is morally preferable.\footnote{This view is further supported by consequentialist accounts of moral behavior.} Under this view, the fulfillment of moral goals can naturally be framed as an optimization task. Based on this idea, the dedicated component for ensuring that the RBAMA's actions are part of a trajectory that can be identified with the fulfillment of $\varphi$ is a neural network that learns a \emph{moral policy} $\pi_{\theta_{\varphi}}(s)$, where $\theta_{\varphi}$ denotes the parameters obtained through training the policy to fulfill the optimization criterion associated with $\varphi$. The return value of the reward function used for training does not need to encode tradeoffs between different moral obligations -- those are handled by the partial order over the agent's normative reasons. Therefore, the values may be specified, for instance, by assigning a constant value of 1 uniformly for the accomplishment of each moral goal. 

Importantly, by assuming that the agent learns a deterministic policy and by understanding the fulfillment of $\varphi$ in $s$ as executing  $\pi_{\theta_{\varphi}}(s)$, the agent is bound to one particular action in every state in order to fulfill its moral obligation. Formally, let $\mathcal{T}^{\varphi}(s)$ denote the set of trajectories that fulfill a given moral obligation $\varphi$ in state $s$ and let $\mathcal{T}^{{\varphi}(s)}_1$ denote be the set of actions that are each the first element of at least one trajectory that realizes $\varphi$ -- i.e., the actions that are conform with the fulfillment of $\varphi$. Then, for any moral obligation toward fulfilling a moral goal  $\varphi \in \mathcal{G}^{\mathit{moral}}$ the set $\mathcal{T}^{{\varphi}(s)}_1$  consist of exactly one action -- the one which the moral policy selects in $s$: 
$$\mathcal{T}^{{\varphi}(s)}_1 = \{\pi_{\theta_{\varphi}}(s)\}.$$

This is different for moral obligations whose fulfillment consists in respecting moral constraints, that is, for moral obligations $\varphi \in \mathcal{C}^{\mathit{moral}}$. They most often leave the agent with room for choice, such that it can work toward its instrumental goal \emph{while} fulfilling them. Consequently, fulfilling such an obligation corresponds to following any sequence of actions along which the agent remains sufficiently unlikely to violate the constraint.. To limit itself to such trajectories, the agent must be able to estimate the probability that a given action sequence would result in a violation. Whereas standard RL algorithms provide a suitable framework for translating optimization-based moral goals into action sequences, Safe RL offers corresponding mechanisms for handling moral constraints. In particular, in the context of applying Safe RL techniques to systematically filter out actions likely to lead to moral constraint violations, I introduce the notion of \introterm{moral safety} -- a principled restriction of the agent’s behavior to what is morally permissible under learned conceptions of risk. Formally, for any constraint $\varphi \in \mathcal{C}^{\mathit{moral}}$, the set of morally permissible trajectories is defined as:
$$\mathcal{T}^{{\varphi}(s)}_1 = \mathbf{safe}^{\varphi}(s),$$
where $\mathbf{safe}^{\varphi}(s)$ denotes the set of trajectories considered morally safe with respect to $\varphi$ in state $s$. Based on these considerations, I propose extending the ethics module with a module that draws on Safe RL methods -- such as a shielding mechanism -- to ensure that the RBAMA’s behavior conforms to the moral constraints it infers.

With the two dedicated components integrated into the RBAMA's ethics module for ensuring compliance with each category of moral obligations, it is capable to compute how to behave in conformity with a moral obligation -- essentially, it acquires an understanding of what constitutes the fulfillment of a moral obligation. On this basis, it also becomes possible to formally define when two moral obligations are in conflict: if, for two moral obligations $\varphi$ and $\varphi'$, there is no action $a$ such that $a \in \mathcal{T}_1^{\varphi}(s) \cap \mathcal{T}_1^{\varphi'}(s)$, then $\varphi$ and $\varphi'$ are said to be conflicting \cite{baum2024actingrightreasonscreating}. The following example illustrates how formalized moral reasoning unfolds in the bridge scenario, including an exemplary computation of $\mathcal{W}$.

\begin{example} \label{example:reasoning_in_bridge_scenario}
\textbf{(Bridge Scenario)} Assume that the RBAMA is interacting with an MDP modeling the bridge setting -- for instance, the one described in \cref{bridgeMoralMDP} -- and that it has learned to recognize both reasons relevant to its moral decision-making in this scenario. This means (i) that it has learned a default rule establishing a moral obligation to ensure that no person is pushed from the bridge when there are persons present: $\delta_1: B \rightarrow \varphi_{C}$ with $\varphi_{C} \in \mathcal{C}^{\mathit{moral}}$ and (ii) that it has learned another default rule, linking persons in the water to the moral obligation to rescue them: $\delta_2 = D \to \varphi_{R}$ with $\varphi_{R} \in \mathcal{G}^{moral}$. 

Further assume that the RBAMA has not yet learned a priority order over the rules, but is confronted with a situation, where $\varphi_R$ and $\varphi_C$ are in conflict. More specifically, suppose that the RBAMA is currently in a state where it stands directly adjacent to a person on the bridge, such that executing $\textsf{down}$ would result in a collision and potentially push the person into the water. It detects this conflict by computing $\bigcap_{\delta \in \lbrace \delta_{1}, \delta_{2}\rbrace} \mathcal{T}^{\text{Conc}(\delta)}_{1}(s) = \emptyset$. Consequently, it adds $\neg(\varphi_{C}\land \varphi_{R})$ to the background information $W$, such that $W$ contains the morally relevant facts $D$ and $B$ together with the information that it can not fulfill them jointly. Assuming, that the RBAMA does not incorporate state-independent world knowledge in in $W$, it holds that $\mathcal{W} =\lbrace D, B, \neg(\varphi_{C}\land \varphi_{R}) \rbrace$, the RBAMA's reason theory taken together with $W$ yields the default theory $\langle \mathcal{W}, \lbrace \delta_{1}, \delta_{2} \rbrace, \emptyset \rangle$. Based on this, it computes the proper scenarios, i.e., the sets of reasons which an ideal reasoner could accept as overall binding. With both sets of default rules $\lbrace \delta_{1} \rbrace$ and $\lbrace \delta_{2} \rbrace$ forming proper scenarios, the RBAMA breaks ties by randomly selecting one of them. Suppose that it selects $\delta_{1}$. It thereby accepts the default rule $\delta: B \rightarrow \varphi_{C}$ as binding -- deriving the overall moral obligation $\varphi_{C} \in \mathcal{C}^{moral}$ from its normative reasoning.
\end{example}

\section{Tethering the Agent to Its Moral Compass}\label{4reasoningIntegration}
Given that $\mathcal{S}^*$ is the proper scenario selected by the reasoning unit, it infers the RBAMASs moral obligations as the set of conclusions $\mathit{Conc}(\mathcal{S}^*)$, derived by applying the learned default rules to the morally relevant propositions that hold in in state $s$. If $\mathcal{S}^*$ includes a moral obligation towards a moral goal, that is, if there is a $\delta \in \mathcal{S}^*$, such that $\mathit{Conc}(\delta) = \varphi \in \mathcal{G}^{\mathit{moral}}$, its course of action is dictated by $\pi_{\theta_{\varphi}}$, a moral policy with learned parameters $\theta$ aimed at fulfilling $\varphi$ (Option 2 in Figure~\ref{fig:extendedArchitecture}). This follows from the fact that $a_{\varphi} = \pi_{\theta_{\varphi}}(s)$ is the only action which, according to the agent’s learned moral strategy, conforms to fulfilling $\varphi$ in state $s$.  
Being bound to behaving conform with $\varphi$, it consequently executes $a_{\varphi}= \pi_{\theta_{\varphi}}(s)$ (Line \ref{alg_line:moral_task execution}).
Note, that executing $a_{\varphi}$ then also ensures conformance with all other obligations derived from the default rules in $\mathcal{S}^{*}$. 
\begin{proposition}
Let $\delta$ be a default rule in a proper scenario $\mathcal{S}^*$ in state $s$ deriving a moral obligation $\varphi = \mathit{Conc}(\delta) \in \mathcal{G}^{\mathit{moral}}$ and let $a_{\varphi} = \pi_{\theta_{\varphi}}(s)$ be the action the agent executes in $s$, following its learned moral policy. Then, it holds that $a_{\varphi} \in \mathcal{T}^{\mathit{Conc}(\delta')}_{1}(s)$ for every other default rule $\delta' \in \mathcal{S}^*$.
\end{proposition}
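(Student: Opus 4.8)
The plan is to reduce the claim to two ingredients: that the action selected by the moral policy is the \emph{unique} element of $\mathcal{T}^{\varphi(s)}_1$, and that any two obligations derived within a single proper scenario are non-conflicting in $s$. The first is immediate from the identification established earlier for moral goals, namely $\mathcal{T}^{\varphi(s)}_1 = \{\pi_{\theta_\varphi}(s)\} = \{a_\varphi\}$, which holds because $\pi_{\theta_\varphi}$ is deterministic. The real content lies in the second ingredient; once it is available, the proposition follows by a one-line pigeonhole argument, since a singleton set that meets another set must be contained in it.

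First I would fix an arbitrary $\delta' \in \mathcal{S}^*$ with $\varphi' = \mathit{Conc}(\delta')$ and show that $\varphi$ and $\varphi'$ are not conflicting in $s$. Suppose toward contradiction that they were. By the behavioural definition of conflict this means $\mathcal{T}^{\varphi(s)}_1 \cap \mathcal{T}^{\varphi'(s)}_1 = \emptyset$, so $\delta$ and $\delta'$ are both collected in $\mathcal{D}_{\mathit{Conflict}}$, and hence the background information $\mathcal{W}$ built in Line~\ref{alg_line:compute_W} entails $\neg(\varphi \wedge \varphi')$. But $\delta,\delta' \in \mathcal{S}^*$ give $\varphi, \varphi' \in \mathit{Conc}(\mathcal{S}^*)$, so $\mathcal{W} \cup \mathit{Conc}(\mathcal{S}^*) \vdash \varphi \wedge \varphi'$. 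Together with $\neg(\varphi \wedge \varphi')$ this renders $\mathcal{W} \cup \mathit{Conc}(\mathcal{S}^*)$ inconsistent, whence $\mathcal{W} \cup \mathit{Conc}(\mathcal{S}^*) \vdash \neg \varphi$. By the definition of $\mathit{Conflicted}_{\mathcal{W},\mathcal{D}}$ this places $\delta$ among the conflicted rules, contradicting $\delta \in \mathcal{S}^* = \mathit{Binding}_{\mathcal{W},\mathcal{D},<}(\mathcal{S}^*)$, since binding rules are by definition not conflicted. Hence $\varphi$ and $\varphi'$ are non-conflicting.

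With non-conflict established, the conclusion is immediate: the behavioural definition of conflict guarantees some action $a \in \mathcal{T}^{\varphi(s)}_1 \cap \mathcal{T}^{\varphi'(s)}_1$, and since $\mathcal{T}^{\varphi(s)}_1 = \{a_\varphi\}$ is a singleton, the only possible witness is $a = a_\varphi$, so $a_\varphi \in \mathcal{T}^{\varphi'(s)}_1$. As $\delta'$ was arbitrary, this holds for every other rule in $\mathcal{S}^*$.

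I expect the main obstacle to be the encoding step in the contradiction argument: making it fully rigorous that a \emph{pairwise} conflict between $\varphi$ and $\varphi'$ forces $\neg(\varphi \wedge \varphi')$ to be derivable from $\mathcal{W}$. The construction in Line~\ref{alg_line:compute_W} negates the conjunction over \emph{all} conclusions of $\mathcal{D}_{\mathit{Conflict}}$ at once, which is logically weaker than the pairwise negation I invoke when three or more obligations conflict simultaneously. To close this gap I would either adopt the pairwise reading of $\mathcal{D}_{\mathit{Conflict}}$ (adding $\neg(\varphi \wedge \varphi')$ for each conflicting pair separately), or verify that under the relevant consistency of $\mathit{Conc}(\mathcal{S}^*)$ the single global negation still blocks the joint derivation. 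A secondary point worth recording explicitly is the determinism of $\pi_{\theta_\varphi}$, on which the singleton identity $\mathcal{T}^{\varphi(s)}_1 = \{\pi_{\theta_\varphi}(s)\}$ and thus the whole reduction rests.
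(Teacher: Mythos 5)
Your proof is correct and takes essentially the same route as the paper's: the singleton identity $\mathcal{T}^{\varphi}_{1}(s) = \{a_{\varphi}\}$ for moral goals, the behavioural definition of conflict forcing the negated conjunction into $\mathcal{W}$, and the conclusion that both rules would then be conflicted, contradicting $\mathcal{S}^* = \mathit{Binding}_{\mathcal{W},\mathcal{D},<}(\mathcal{S}^*)$ --- your only structural difference is packaging this as a standalone non-conflict lemma and spelling out the inconsistency step ($\mathcal{W} \cup \mathit{Conc}(\mathcal{S}^*) \vdash \varphi \land \mathit{Conc}(\delta')$ plus its negation yields $\neg\varphi$), which the paper leaves implicit. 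The gap you flag at the end is genuine and is in fact glossed over by the paper itself: its proof asserts $\neg(\varphi \land \mathit{Conc}(\delta')) \in \mathcal{W}$, whereas Line~\ref{alg_line:compute_W} only adds the single global negation $\neg(\bigwedge_{\mathit{Conc}(\mathcal{D}_\mathit{Conflict})})$, which is too weak when more than two obligations conflict, so your proposed pairwise reading of $\mathcal{D}_{\mathit{Conflict}}$ is exactly what the argument (both yours and the paper's) actually requires.
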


\begin{proof}
As $\varphi = \mathit{Conc}(\delta) \in \mathcal{G}^{\mathit{moral}}$, it holds that $\mathcal{T}^{\varphi}_{1}(s) = \{a_{\varphi}\}$. Assume, that $a_{\varphi} \not \in \mathcal{T}^{Conc(\delta')}_{1}(s).$ Then, the sets of actions conforming with $\mathit{Conc}(\delta')$ and $\varphi$ at state $s$ are disjoint:  

$$
\mathcal{T}^{Conc(\delta')}_{1}(s) \cap \mathcal{T}^{\varphi}_{1}(s) = \emptyset.
$$  

Since no action can simultaneously satisfy both $\varphi$ and $\mathit{Conc}(\delta')$, it follows that these two obligations are conflicting. Therefore, the negation of their conjunction must be included in $\mathcal{W}$, that is $\neg (\varphi \land \mathit{Conc}(\delta')) \in \mathcal{W}$. With this, it holds that $\varphi, \mathit{Conc}(\delta') \in \mathit{Conflicted}_{\mathcal{W}, \mathcal{D}}(\mathcal{S}^*)$. It follows that
$$
\mathit{Binding}_{\mathcal{W}, \mathcal{D}}(S^*) \neq S^*.
$$ 
meaning, that $\mathcal{S}^{*}$ is not a proper scenario, contradicting the initial assumption. 
\end{proof} 

If no moral goal is among the moral obligations which the RBAMA infers in $s$ through its reasoning, but instead, it exclusively infers moral obligations toward \emph{ensuring conformance} with \emph{moral constraints}, it is morally allowed to focus on pursuing its instrumental goal, but it is restricted in its choice for action. Conformance to this category of moral obligations is enforced by invoking a moral filter that restricts the agents to morally safe actions (Option 1 in Figure~\ref{fig:extendedArchitecture}). Reducing the set of options for the agent to choose from to those that respect moral constraints, yields a \introterm{moral shield}, which filters out actions that are morally unsafe: 
$$\mathsf{Shield} := \bigcap_{\delta \in \mathcal{S^*}} \mathbf{safe}^{\mathit{Conc}(\delta)}(s).$$ 
Functionally, a moral shield operates analogously to shields known from Safe RL. It intervenes in the action selection process to ensure that only actions satisfying certain safety criteria -- here in the sense of moral safety -- are available to the agent. By binding the agent to only execute actions that are not filtered out by the moral shield (Lines \ref{alg_line:shield} and \ref{alg_line:shielded_policy}), it is guaranteed to follow all moral obligations derived from the proper scenario it has chosen as action-guiding. \cite{baum2024actingrightreasonscreating}

Crucially, through being bound to execute actions that conform with the moral obligations that are derived by the default rules of a proper scenario, the RBAMA follows $\mathbf{OB}(\bigvee_{\mathcal{S} \in \{\mathcal{S}_{1},...,\mathcal{S}_{k}\}}(\bigwedge_{\delta \in \mathcal{S}} \mathit{Conc}(\delta))$. Specifically, when confronted with moral conflicts, it thereby follows a moral ought understood as the all-things-considered conclusion yielded by the disjunctive account in cases of moral conflict. \cite{baum2024actingrightreasonscreating}

\begin{example}\label{example:reasoning_in_bridge_scenario2}
\textbf{(Bridge Scenario)} In the example discussed earlier \ref{example:reasoning_in_bridge_scenario}, the RBAMA infers the moral obligation $\varphi_{C} \in \mathcal{C}^{\mathit{moral}}$ from its reason theory by randomly selecting $\{\delta_1\}$ as proper scenario.   Consequently, it computes $\mathsf{Shield} = \{\textsf{left}, \textsf{right}, \textsf{up}, \textsf{pullOut}, \textsf{idle}\}$. Because only the execution of $\textsf{down}$ is prohibited by the induced shield, several permissible options remain available to the agent. The RBAMA selects the one for execution that has the highest instrumental value -- that is, it executes the action with the highest instrumental value as estimated by the instrumental policy, which differs from $\textsf{down}$. Assume, that the RBAMA chooses to execute $\textsf{left}$ to move closer to the package delivery location. In doing so it conforms to $\varphi_{C}$, while optimizing within the scope of morally permissible actions for its instrumental goal.

Recall that $\delta_2$ is also triggered. However, the moral obligations derivable by $\delta_1$ and $\delta_2$ are conflicting: respecting $\varphi_{C}$ prohibits it from taking the action $\textsf{down}$, as by doing so would place it on the bridge and thereby endanger the person currently crossing. Yet, this very action is required to comply with the obligation $\varphi_{R}$. 
Consequently, by executing $\textsf{left}$, it acts conform with $\varphi_C$, but it violates $\varphi_{R}$.
\end{example}

\section{Training an Agent in Moral Reasoning}\label{4reasoningUpdate}

The RBAMA's reason-theory is \textit{iteratively improved} through \textit{case-based feedback} which brings it closer to being capable of conducting sound reasoning. This feedback is given by an authority, called a \emph{moral judge}. From the perspective of its functional role within the RL pipeline, a moral judge can be characterized as a partial function.

\begin{definition}[Moral Judge, \cite{baum2024actingrightreasonscreating}]
    A \introterm{moral judge} is a partial function  
    $$
    \mathsf{MoralJudge}: S \times \mathcal{P}(L) \times A \rightharpoonup \Phi \times L
    $$
    that evaluates the moral permissibility of an action taken by an agent in a given state. Given a state \( s \in S \), a set of morally relevant labels \( l(s) \in \mathcal{P}(L) \), and an action \( a \in A \), the function may return a pair \( (\varphi, X) \), where:
    \begin{itemize}
        \item \( \varphi \in \Phi \) represents a moral obligation the agent should have adhered to in $s$, and  
        \item \( X \in l(s) \) denotes the reason or justification for this moral assessment in state $s$.  
    \end{itemize}
    The function is undefined for cases where no judgment is necessary.
\end{definition}

The core idea of the feedback mechanism involving a moral judge is as follows: if the agent violates a moral obligation -- as determined by the moral judge -- by executing action $a_t$, the judge provides corrective feedback. Specifically, the judge returns $\mathsf{MoralJudge}(s_{t-1}, l(s_{t-1}), a_t) = (\varphi, X)$, where $\varphi$ denotes the moral obligation that was violated and $X$ identifies the normative reason in $l(s_{t-1})$ that grounds this assessment. Importantly, the practice of offering reasons as feedback is natural to humans. Thus, a human -- or a group of humans -- can readily take the role of the moral judge. Alternatively, for example, to support the testing and development of the architectural components of an RBAMA, a rule-based module may also assume the role of the moral judge \cite{baum2024actingrightreasonscreating}.

\begin{example}
\textbf{(Bridge Scenario)}
Returning again to the running example introduced in \ref{example:reasoning_in_bridge_scenario} and further developed in \ref{example:reasoning_in_bridge_scenario2}, assume, the moral judge is a rule-based module, that teaches the RBAMA to prioritize hurrying to pull persons out of the water over ensuring  that it does not push persons from the bridge. Further suppose, that the moral judge considers $\textsf{down}$ to be the next step of a unique strategy for hurrying to pull the person out of the water. The RBAMA selecting $\delta_1: B \rightarrow \varphi_{C}$ and executing $\textsf{left}$ in service of its instrumental goal, would cause this judge to intervene, since it assesses the RBAMA’s behavior as failing to conform to the prioritization of $\varphi_{R}$. Consequently, based on the input $(s_{t-1}, \{B,D\}, \textsf{left})$, the judge would forward $(D, \varphi_{R})$ to the reasoning module. \cite{baum2024actingrightreasonscreating}
\end{example}

When the agent receives feedback $(\varphi, X)$, it uses this information to update its reasoning framework (Lines \ref{alg_line:new_reason} and \ref{alg_line:add_reason}). In order to do so, it first ensures that the default rule $\delta_{\mathit{reas}} := X \rightarrow \varphi$ is included in $\mathcal{D}$. Additionally, it updates the order among its default rules. Specifically, for each default rule $\delta \in \mathcal{S}^*$, where $\mathcal{S}^*$ is the proper scenario selected by the agent, it enforces the ranking $\{ \delta_{\mathit{reas}} > \delta \mid \delta \in \mathcal{S}^* \}$. To ensure that the resulting partial order remains transitive, it subsequently computes the transitive closure and updates the order accordingly (see Lines \ref{alg_line:update_order} and \ref{alg_line:closure}). This guarantees that in all future situations, which are characterized by the same morally relevant facts, the agent’s moral reasoning will align with the feedback previously provided by the moral judge \cite{baum2024actingrightreasonscreating}.

\begin{example}
\textbf{(Bridge Scenario)} Consider once again the scenario originally introduced in \ref{example:reasoning_in_bridge_scenario}, where the agent resolves the moral conflict by breaking ties randomly. Suppose it selects $\{\delta_1\}$ among the proper scenarios, meaning it randomly chooses to prioritize the normative reason for not pushing persons off the bridge over the normative reason for hurrying to pulling persons out of the water. Further assume, that it subsequently receives feedback indicating that it should have reasoned based on the reversed prioritization -- specifically, assume, that the reasoning unit receives input $(\varphi_R, D)$ from a moral judge. Upon receiving this feedback, the agent first checks whether its reason theory already includes the default rule $\delta_{\mathit{reas}} = \delta_2$. Since this rule indeed is already included, no new rule is added. The RBAMA then updates the order of its default rules by setting $\delta_1 < \delta_2$. By doing so, it \textit{refines} its reason theory -- it aligns its reasoning more closely with the reasoning of the moral judge: In future situations with $\delta_1$ and $\delta_2$ conflicting, the agent will prioritize $\delta_2$, invoking the policy for rescuing. \cite{baum2024actingrightreasonscreating}
\end{example}

\section{The Big Picture of Building Reason Based Moral Agents}\label{4summary}

In \cref{2motivation}, it was highlighted that AMAs, which ground their moral decision-making in normative reasons possess an intrinsic advantage in satisfying several key ethical criteria. Specifically, it was argued that  reason-based artificial moral agents (RBAMAs), which have acquired the capability to conduct sound normative reasoning and whose actions are bound to conform to the thereby inferred moral obligations, are morally justified in their behavior, demonstrate moral robustness and are morally trustworthy. Finally,it was argued, that unlike approaches that rely on a single moral theory, a reason-based framework allows for the integration of diverse ethical perspectives, making it more adaptable to differing moral considerations. These promising attributes make the endeavor of building RBAMAs -- and thereby realizing the reason-based approach toward building AMAs -- highly compelling.  

In this chapter, I have introduced the technical foundation for constructing RBAMAs within an operational framework. This foundation builds on an extension of the standard RL architecture with an ethics module centered around a reasoning unit. The reasoning unit derives the agent's overall binding moral obligations and ensures behavioral conformity by invoking either moral policies or moral shields. The next chapter details my implementation of an RBAMA and summarizes results from testing it in simulations of the bridge setting. The experiments demonstrate the potential of RBAMAs while also highlighting unresolved questions -- not only for the project of building RBAMAs, but for the broader challenge of developing AMAs more generally. In Chapters \ref{7Discussion} and \ref{8FutureWork}, I further explore these emerging challenges and discuss possible directions for future research.

    \chapter{From Theory to Action}\label{5implementation}

\section{Simulating Bridge Worlds}

To train and evaluate the behavior of RBAMAs in the bridge setting -- while also addressing challenges that arise from modifications to the environment and supporting future investigations into relevant changes -- I developed a framework based on the Python Gymnasium package. This framework enables the generation of customizable simulations of the bridge environment with a wide range of configuration options. The following sections provide a detailed overview of its implementation.

\begin{table}[h!]
    \centering
    \begin{tabular}{|p{5cm}|p{8cm}|}
        \hline
        \textbf{Configuration Option} & \textbf{Description} \\
        \hline
        Grid Width ($W$) & Defines the width of the grid world. \\
        Grid Height ($H$) & Defines the height of the grid world. \\
        Number of Bridges & Specifies the number of bridges (up to 3). \\
        Goal Position ($pos_{\text{G}}$) & Determines the agent’s target destination. \\
        IDs moving Persons & Places persons on the map and assigns them movement along a predefined path based on their ID (1-3: crossing the corresponding bridge numbered from left to right and 4 strolling along the lower shore). \\
        Static Persons & Sets the positions of persons on the map that do not move. \\
        Person Reappearance Time ($t_{r}$) & Sets the delay before a person re-enters the map after leaving. \\
        Dangerous Spots ($\mathcal{D})$ & Defines locations where persons might fall into the water. \\
        Probability of Falling ($P_{\text{fall}})$ & Probability that a person falls into the water when reaching a dangerous spot.  \\
        Probability of Being Pushed $(P_{\text{push}})$ & Probability that persons are pushed off the bridge when the agent collides with them. \\
        Drowning Behavior & Determines if drowning follows a random process, with a configurable probability applied at each time step, or occurs after a specified, fixed number of time steps. \\
        Visibility of the State & Controls how much information about the environment is included in the observation (set through observation wrappers.) \\
        \hline
    \end{tabular}
    \caption{Configuration Options for the Bridge Environment}
    \label{tab:config_options}
\end{table}

The simulations of the bridge \env{} are set up as grid worlds, where each tile is identified by a coordinate in the set~$\mathcal{P}$:
$$\mathcal{P} = \{(x, y) \mid x \in \{0, \dots, W-1\},\ y \in \{0, \dots, H-1\} \}$$
Here, $W$ denotes the width and $H$ the height of the grid, with $x$ and $y$ representing the horizontal and vertical positions, respectively. Consequently, the positions of the agent and the persons moving across the map are discrete values within a predefined range. The overall size of the map is configurable by specifying $W$ and $H$ as parameters.

There are three types of tiles in the environment: land tiles $\mathcal{L} \subset \mathcal{P}$, water tiles $\mathcal{W} \subset \mathcal{P}$, and bridge tiles $\mathcal{B} \subset \mathcal{P}$. Each layout consists of two landmasses, separated horizontally by water and connected by one or more bridges, reflecting the scenario described in \cref{2motivation}. The number of bridges can be configured between one and three bridges $(1 \leq n_{\mathcal{B}} \leq 3)$, and their arrangement is adjusted accordingly. The goal position is specified by $pos_{\text{G}} \in \mathcal{L}$, representing the target coordinates where the agent must deliver a package. This position can be set as a further configuration option. In addition, it is possible to include up to $n_{\mathcal{B}} + 1$ moving persons $P = \{p_1, \dots, p_{n_{\mathcal{B}}+1}\}$ as elements in the \env{}. Each moving person follows a predefined path: for each bridge, a person can be assigned to cross it, and one additional person may be configured to stroll along the lower shore. Additionally, any number of static persons can be positioned at arbitrary locations on the map. 

The agent is restricted to stay within the boundaries of the grid world, while moving persons leave the map upon completing their trajectories. Once a person exits the map, they are relocated to the out-of-map position, denoted by~$\bot$. This also applies to persons who drown. Consequently, by defining the union of all map coordinates with the out-of-map position as
$$
\mathcal{P}_{\text{ext}} = \mathcal{P} \cup \{\bot\},
$$
each person $p_{i}$ always occupies a position $\text{pos}_{p_{i}} \in \mathcal{P}_{\text{ext}}$. 

As described earlier, each moving person $p_i$ follows a predefined trajectory
$$
\tau_i = (pos_{i,0}, pos_{i,1}, \dots, pos_{i,m_i}),
$$
where $pos_{i,l_{i}} \in \mathcal{P}$ for all $0 \leq l_{i} \leq m_{i}$. The moving person enters the map at $pos_{i,0}$ and continues along its path until it leaves the map at $pos_{i,m_i}$. After exiting the map, the person remains at the out-of-map position~$\bot$ for $t_{r} \in \mathbb{N}$ time steps, after which it re-enters the map at its starting position $pos_{i,0}$. The parameter $t_{r}$ is configurable. Dangerous locations, where persons risk falling into the water, are defined as a subset of the map coordinates:
$$
\mathcal{D} = \{pos_{\text{d}_{0}}, pos_{\text{d}_{1}}, \dots, pos_{\text{d}_{k}}\}, \quad \mathcal{D} \subseteq \mathcal{L}.
$$
The set of dangerous spots~$\mathcal{D}$ is configurable and can be adjusted as well across different versions of the environment.

When a person reaches a location in $\mathcal{D}$, there is a probability $P_{\text{fall}}$ that they will fall into the water when attempting their next step. Alternatively, a person may end up in the water if they are pushed off a bridge by the agent. Specifically, if the agent collides with a person on a bridge tile, it may push them into the water with a probability $P_{\text{push}}$. Once a person has fallen or been pushed into the water, they will drown after $t_{d} \in \mathbb{N}$ time steps. The parameters governing these dynamics -- $P_{\text{fall}}$, $P_{\text{push}}$, and $t_{d}$ -- are configurable. By setting $P_{\text{fall}} = 1$ and $P_{\text{push}} = 1$, a deterministic version of the environment can be created, where persons always fall or are pushed into the water under the respective conditions. Additionally, the a configuration option to disable the drowning timer entirely is included. In this case, persons who fall into the water stay there indefinitely, without the ability to get out on their own. Further, an environment wrapper enables randomized drowning behavior, modifying the base environment by applying a probability-based mechanism, where persons in the water may drown randomly after each time step according to a configurable probability $P_{\text{drown}}$, which can be modified as well. The modifiable parameters of the environment are summarized in \ref{tab:config_options}.

In addition to the ability to create different versions of the \env{}, the reset function can be called with parameters that modify the initial state distribution. It supports the following options:
\begin{enumerate}[label=\roman*)]
    \item resetting the \env to a fixed initial state $s_0$, where the agent is placed in the upper-left corner of the map and each person is positioned at the start of their respective trajectory,
    \item manually specifying the initial positions of the agent and the persons on the map,
    \item randomly sampling their positions, with the agent’s position drawn uniformly from $\mathcal{L}$ and each person $p_i$ initialized at a position sampled from their trajectory $\tau_i$.
\end{enumerate}

The reward returned by the Gymnasium environment after each step can be customized for any version of the environment, allowing the agent to be incentivized toward different learning objectives. Specifically, the reward can be configured to encourage the agent to:
\begin{enumerate}[label=\roman*)]
    \item fulfill the instrumental goal,
    \item rescue persons in danger,
    \item avoid pushing persons from the bridge, or
    \item pursue all of these objectives simultaneously in a multi-objective learning setting.
\end{enumerate}

While the reason-based approach proposed in this work does not rely on ethical reward signals for moral-decision-making, they are included for two key reasons: first, to enable comparison with alternative approaches such as MORL; and second, because in the operational framework of the RBAMA introduced in this work, the translation from moral obligations into executable sequences of primitive actions is learned using reinforcement learning techniques. 

To support these different objectives, the framework provides a set of reward functions, each corresponding to a specific reward type:
$$
\mathcal{R}_{\text{type}} = \{R_{\text{instr}}, R_{\text{resc}}, R_{\text{push}}, R_{\text{MO}}\}.
$$

The reward function $R_{\text{instr}}$ assigns a reward of $1$ when the agent successfully reaches the instrumental goal. When $R_{\text{resc}}$ is selected as the active reward function, the agent is rewarded whenever the environment transitions from a state where persons are in the water to a state where all persons have been rescued. If $R_{\text{push}}$ is selected, the agent receives negative feedback each time it pushes a person into the water. The return values of the moral reward functions are configurable via a dedicated configuration file. Formally,
\begin{align*}
R_{\text{instr}}(s, a, s') &=
\begin{cases}
1, & \text{if } pos_a' = pos_g, \\
0, & \text{otherwise}.
\end{cases} \\[10pt]
R_{\text{resc}}(s, a, s') &=
\begin{cases}
r_\text{resc}, & \text{if } \left( \exists p_i \in \mathcal{P} \text{ s.t. } pos_{p_i} \in \mathcal{W} \text{ in } s \right) \\
& \quad \text{and } \left( \forall p_i \in P, pos_{p_i} \notin \mathcal{W} \text{ in } s' \right), \\
0, & \text{otherwise}.
\end{cases} \\[10pt]
R_{\text{push}}(s, a, s') &=
\begin{cases}
r_\text{push}, & \text{if } (pos_a' \in B) \text{ and } (\exists p_i \in \mathcal{P} \text{ s.t. } pos_{p_i} = pos_a' \text{ in } s \\
& \quad \text{and } pos_{p_i} \in  \mathcal{W}  \text{ in } s') \\
0, & \text{otherwise}.
\end{cases}
\end{align*}
where $r_\text{resc} > 0$ and $r_\text{push} < 0$ are set via the configuration file. In the multi-objective setting, the reward function $R_{\text{MO}}$ returns a reward vector composed of multiple objectives:
$$
R_{\text{MO}} = (R_{\text{instr}}, R_{\text{resc}}, R_{\text{push}}).
$$ 
Additionally, the environment provides feedback on whether the moral constraint of not pushing persons into the water has been violated by incorporating a safety specification modeled within the CMDP framework (cf. \cref{3bridgeCMDP}). Following the Safety Gymnasium conventions, this feedback is included as additional information in the return function. Specifically, the environment returns a value of $1$ if the agent violates the specification by pushing a person into the water, and $0$ otherwise.

Regarding the observations returned after each step, the base version of the environment provides a one-dimensional array that concatenates the positions of the agent, the water tiles and the persons into a single feature vector. This representation is particularly suited for processing by models such as feedforward neural networks (FNNs). To support training with convolutional neural networks (CNNs), the environment can be wrapped to return a multi-dimensional array in which the positions are one-hot encoded. By default, the drowning timer is not included in the observation. However, a wrapper is available that exposes this information when required. Another optional wrapper can restrict the agent’s perception to a limited view window, reducing the amount of observable information and introducing partial observability. Furthermore, the bridge environments include an implementation of the labeling function $\mathcal{L}$, which outputs the morally relevant propositions that hold in the current state.

\begin{figure}[h]
    \centering
    \begin{subfigure}[t]{0.45\textwidth}
        \centering
        \includegraphics[width = 7cm]{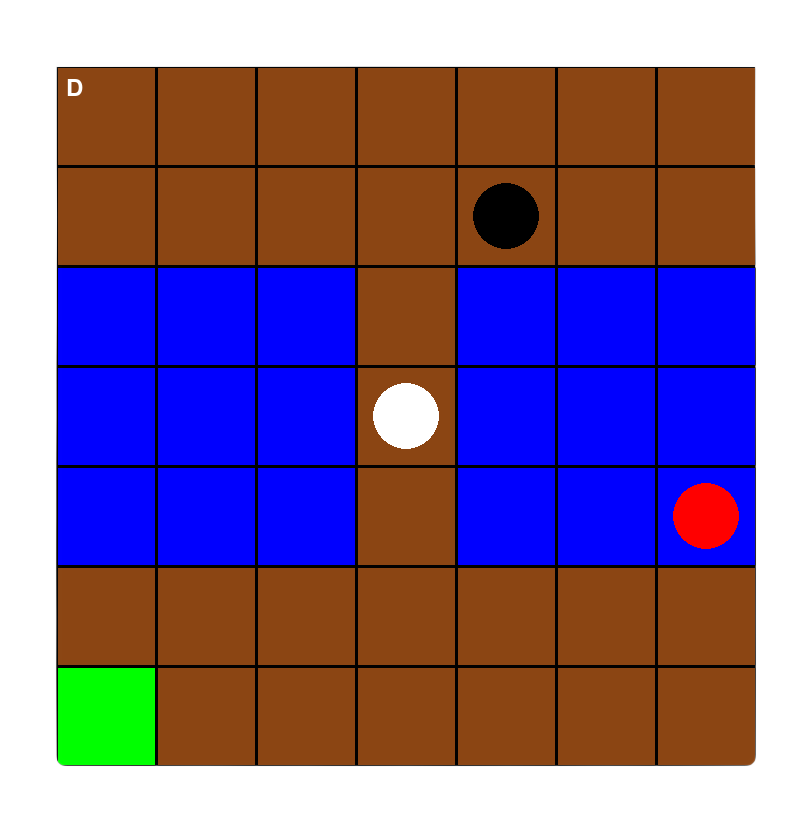}
        \caption{State Visualization of the Bridge Environment}
        \label{fig:vis_example}
    \end{subfigure}
    \hfill
    \begin{subfigure}[t]{0.45\textwidth}
        \centering
        \includegraphics[width = 7cm]{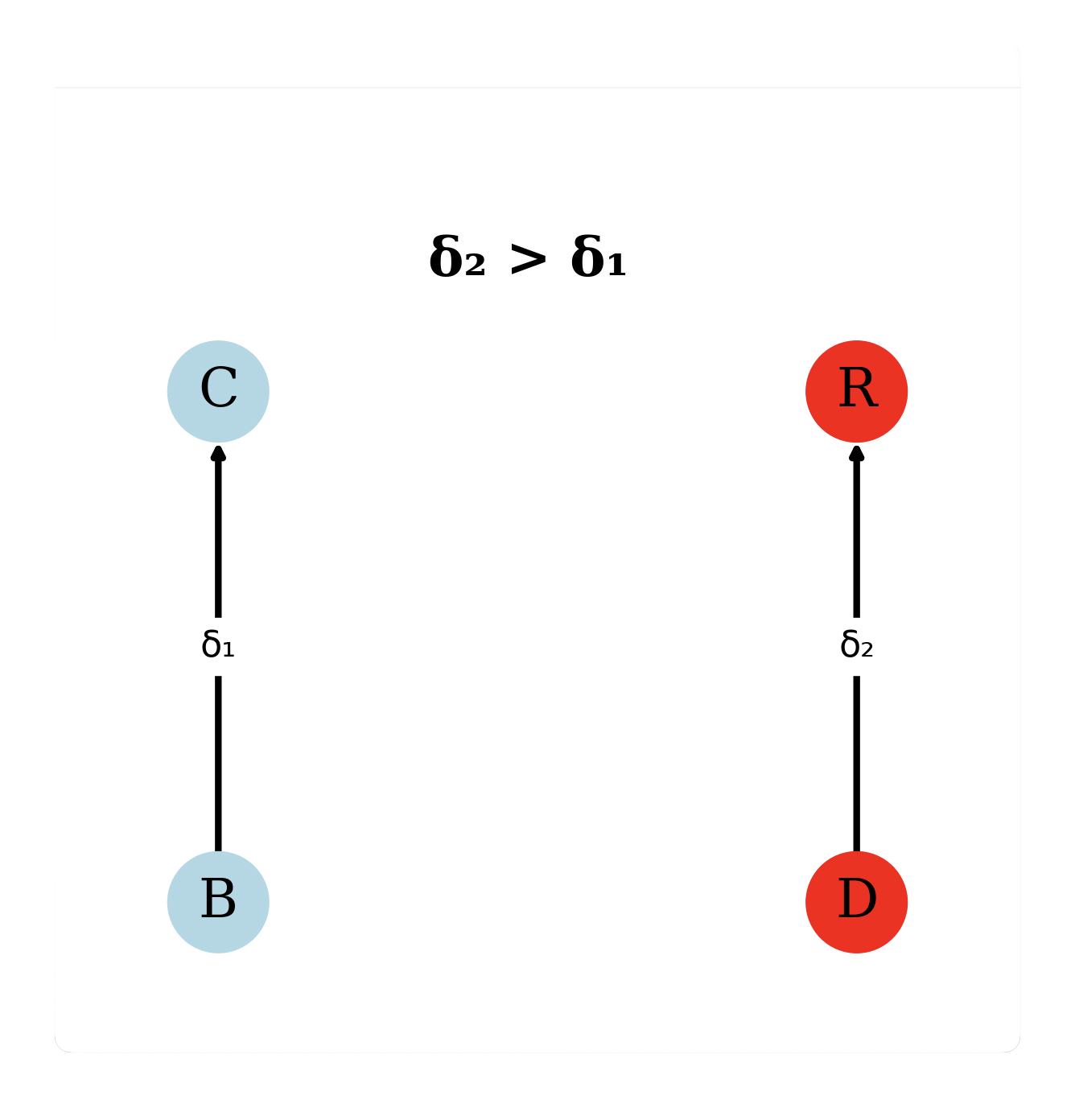}
        \caption{Visualization of the RBAMA's Reasoning}
        \label{fig:reasoning_example}
    \end{subfigure}
    \caption{Examples of the Visualization Options}
    \label{fig:two_images}
\end{figure}

Finally, the implementation of the \env also includes a render function, visualizing the grid world simulation of the bridge \env (\cref{fig:vis_example}) as well as a visualization tool for the agent's reasoning (\cref{fig:reasoning_example}). The visualization of the environment provides an intuitive overview of the current state. The goal position is highlighted green, indicating the location the agent aims to reach. The agent is represented by a white circle. Persons are depicted based on their current status: a black circle indicates a person located on land, while a red circle signifies a person who is in the water, potentially in danger of drowning. Additionally, relevant labels are displayed at the upper left corner to provide the contextual information about which morally relevant fact hold true in the current state of the \env. The agent’s learned reasoning theory is illustrated in form of its graphical structure. All learned default rules are visualized as connections between nodes, reflecting the structure of the agent’s moral reasoning. Nodes representing morally relevant facts that hold true in the current state, as well as the overall derived moral obligations that the agent considers binding, are highlighted with red circles. In contrast, facts that do not hold true, along with moral obligations that are not part of the agent’s derived set of binding duties, are displayed as blue circles. Additionally, the learned priority order is included in the visualization as descriptive text.

\section{Realizing Artificial Moral Reasoning}\label{5artificialMoralReasoning}

I have developed an initial version of an RBAMA as theoretically developed in \cref{4reasonTheoryChapter} that implements an ethics module containing a reasoning unit. This unit invokes moral filters and moral policies, which are trained using neural networks. Additionally, I have implemented moral judges as rule-based modules that deliver feedback based on different prioritization schemes for moral obligations. The implementation of the RBAMA features an update mechanism that allows it to process this feedback, refine its reasoning, and thereby enhance its moral decision-making over time. While the implementation is currently adapted for and trained within the bridge environment simulation, the architecture is designed to support easy extension for training and evaluation in other environments.

\subsection{Implementing the Logic Behind Moral Reasoning}

At the core of the RBAMA lies its reasoning unit, together with the implementation of a logical framework that enables the agent to represent, learn, and refine a reason theory. The agent’s reason theory is structured as a graph, where one type of node represents propositions describing morally relevant facts, and another type represents moral obligations. The information about the partial order over the moral reasons of the agent is stored as edge data. 

The logic on which the reasoning unit operates, follows the formalization of reason-based decision-making within a reason theory. In addition, the agent’s reasoning can be extended to include world knowledge expressed in propositional form, enabling it to recognize logical implications relevant to its deliberation. For example, the agent could thereby be enabled to recognize logical implications such as \textit{``If there is a drowning person, then there is a person in need''}.

\subsection{Teaching Good Reasons}

To simulate the feedback mechanism for refining the RBAMA’s moral reasoning in the bridge setting, I have implemented \textit{moral judges} as modular components that provide feedback to the agent. Each judge is equipped with a hard-coded reason theory and a rule-based mapping from moral obligations to primitive actions in a given state. The implemented judges differ in the normative reasoning they teach the RBAMA. For conducting the experiments, I used a moral judge, that expected the RBAMA to learn to recognize both normative reasons relevant in the bridge setting, that is, it teaches the RBAMA a default rule $\delta_1$ based on which it infers a moral obligations $\varphi_1$ to avoid actions that might push persons from the bridge, and a default rule $\delta_2$ based on which it infers a moral obligation $\varphi_2$ to initiate efforts to pull persons out of the water when there are individuals in danger. Moreover, the moral judge used throughout the experiments instructed the RBAMA to prioritize $\varphi_2$ over $\varphi_1$ in cases of conflict, i.e., it provides feedback such the RBAMA learns the order $\delta_2 > \delta_1$.

Although they specify different sets of normative rules and teach different prioritizations, all moral judges use the same \textit{translator} to assess whether the RBAMA’s behavior conforms to its moral obligations. The translator is a rule-based module that interprets the moral obligations of rescuing a drowning person as requiring the RBAMA to take the shortest possible path toward the individual to pull them out of the water. This reflects a prima facie plausible strategy for achieving the moral objective, consistent with how a human -- possibly acting as a moral judge -- might approach the task. If multiple persons have fallen into the water, the translator interprets $\varphi_R$ as requiring the agent to rescue first the person who fell in earliest. Similarly, if the judge expects the RBAMA to uphold the moral constraint against pushing, it assumes -- on the basis of the translator’s specification of how this obligations maps onto trajectories of primitive actions -- that the RBAMA will refrain from stepping onto a bridge when another person is already present, once again reflecting a human-like, prima facie reasonable strategy. If the RBAMA has already entered the bridge, the judge expects it to stop moving. If, in the judge’s assessment, the RBAMA fails to fulfill an overall moral obligation, the corresponding feedback $(\varphi, X)$ is forwarded to the agent’s reasoning unit. Since the rewards the agent receives during its training process are not directly tied to the judge’s feedback, the RBAMA may frequently trigger the feedback mechanism, even if it correctly recognizes its moral obligations and has already learned a strategy for fulfilling them. In such cases, however, the RBAMA simply disregards the feedback. Otherwise, it integrates the reason into its reason theory if it was previously unknown and adjusts the partial order accordingly. This process consists of adding new edges to the graph structure and updating the corresponding edge data.

\subsection{Training Behavior for Objective Fulfillment and Constraint Adherence}\label{5trainingBehavior}

Once the agent has developed a well-structured reasoning theory, it can identify priorities and determine which constraints it must respect. However, it must also learn how to act accordingly while developing a strategy to achieve its instrumental goal. Therefore, the RBAMA must not only acquire an instrumental policy, but also strategies for rescuing persons from the water and predicting whether it risks pushing a person off the bridge. The RBAMA learns the instrumental policy using a DQL algorithm, with $R_{\text{instr}}$ as the reward signal. To align its actions with moral duties, the RBAMA employs neural networks specialized in fulfilling these duties. The agent learns to perform the moral task of rescuing persons from the water by training a dedicated rescue network, which develops a strategy to pull individuals out of the water as quickly as possible. The same DQL algorithm used for the instrumental policy is applied in this case, with $R_{\text{resc}}$ as the reward signal. Preventing the pushing of persons from the bridge requires the agent to anticipate the potential risks associated with its actions. For this, a contextual bandit (CB)\footnote{A contextual bandit is a learning framework that extends the classical multi-armed bandit problem by incorporating context information. In each round, the agent observes a context (also called a feature vector) that provides information about the current situation. Based on the observed context, the agent selects an action (or arm) from a set of available actions and receives a reward associated with that action. The goalis to learn a policy that maps contexts to actions in a way that maximizes the expected cumulative reward over time.\cite{lattimore2020bandit}} is trained to act as bridge-guarding network by predicting such violations. The CB is trained adversarially, identifying the action most likely to result in pushing a person off the bridge. Based on the predictions of this network, a shielding mechanism is implemented to prevent the agent from executing actions that would violate moral constraints. While there are arguments both for and against the use of pre- vs. post-shielding in ensuring constraint satisfaction (cf. \cite{konighofer2020}), neither approach holds an intrinsic advantage within the context of building an RBAMA. I decided to use the network's predictions for constructing a post-shield that ensures only morally safe actions, according to the corresponding moral constraint, are executed. Specifically, the RBAMA passes a preference list to the shield, which forwards the highest ranking action estimated by the CB to be conform with $\varphi_C$ to the environment. 

The implementation  of the shielding mechanism remains highly prototypical through relying on a CB to estimate the risk of violating the moral obligation not to push persons off the bridge as it assumes that in each state, at most one action might violate the moral safety constraint. In the bridge setting, this assumption could only be violated in very specific states -- specifically, if static persons were positioned on the map such that there is a state where one person stands behind and another in front of the agent on the bridge simultaneously. However, the state spaces of the instances used to conduct the experiments in this study did not include such configurations.  In addition, a second assumption -- namely that the agent does not engage in look-ahead planning -- is introduced by training the CB as an adversary. While this reflects a reasonable interpretation of what it means to conform to the moral safety constraint of not pushing -- namely, that the agent must not push a person off the bridge but may approach them closely -- it would be preferable to support look-ahead planning in scenarios where the designer or stakeholders intend to promote a more cautious interpretation of the constraint, such as incentivizing the agent to maintain greater distance. In \cref{7critic/shield}, I provide an overview of the Safe RL methodology that can be leveraged to teach an agent (morally) safe behavior. Applying established methods would also eliminate both of these assumptions. 

\section{Transforming Bridge Environments into Ethical Environments}

Alongside implementing an RBAMA for navigating the bridge setting, I have also integrated the option to generate ethical versions of the environment based on the approach of Rodriguez-Soto et al. This enables an experimental comparison between the two methodologies. To accomplish this, I built upon the codebase provided in their work \cite{rodriguez2021code}. The original code was designed to run specifically on the public civility game (cf. \cite{rodriguez-soto2021}). I modified it to dynamically handle different state spaces, including varying state sizes, making it compatible with any instance of the bridge setting.

Furthermore, the creation of an ethical environment, as proposed by Rodriguez-Soto et al. assumes that the state space is known. However, even for a single version of the bridge environment, the state space is too large to be manually enumerated. To address this issue and ensure compatibility with any instance of the environment without requiring manual adjustments, the state space is estimated by repeatedly resetting the environment to a state where the positions of the agent and persons are randomly sampled, followed by additional state sampling through random actions. The generation of versions of the bridge environment as ethical environments assumes that the estimated state space accurately reflects the actual state space.

Crucially, the approach is fundamentally limited in its applicability to stochastic settings. Their method relies on iterating over the convex hull of returns associated with a given policy, which presupposes a unique successor state for each action. While this is feasible in deterministic environments -- albeit already computationally demanding, as evidenced in \cref{6ScalingUp} -- it is not directly applicable to stochastic environments as successor states are no longer unique. Although an extension to stochastic environments could, in principle, be achieved by estimating the distribution over successor states, such an approach would substantially increase the already high computational burden and, more critically, would forfeit the formal guarantees that hold in deterministic settings. This limitation is further underscored by the fact that the authors restrict their experimental evaluation to single, deterministic instances. Consequently, for properly comparing the behavior of an RBAMA to that of an agent trained using a MORL approach in a stochastic setting, a different MORL framework must be selected.
    \chapter{Experimental Results}\label{6results}

I conducted several experiments to demonstrate both the strengths and limitations of the RBAMA implementation on a small test suite of bridge environment. To make the environment conditions more closely resemble a real-world scenario, the drowning timer was made invisible, introducing partial observability in the environment. Additionally, the reappearance timer for the persons on the map was set to 100, with each episode being truncated after 100 time steps such that the timer does not affect the dynamics of the environment.

All neural networks within the RBAMA -- the rescue network, the instrumental network, and the bridge-guarding network -- have CNN-based architectures. During training, the environment was reset by randomizing the positions of both the agent and the persons. Each network was trained independently in separate training loops, using its own batch of experiences. The rescue and instrumental networks were trained on batches sampled using an exponential epsilon decay exploration strategy, as part of the DQN algorithm implementation. For training the instrumental network, the environment was configured to terminate when the agent reached its instrumental goal and for training the rescue network, the environment was set to terminate once the agent fulfilled its moral obligation of rescuing each person from the water. To train the CB to predict the risk of pushing a person off the bridge, the episodes of the environment was set to terminate after a single step. The hyperparameters utilized during training were selected within suitable ranges to enhance stability and learning efficiency. Specifically, a learning rate of 0.001, a discount factor of 0.9, and synchronization of the target network every 1000 steps were implemented. Furthermore, the size of the replay memory was set to 1000 and a mini-batch size of 32 was chosen to handle the collected data for updating the neural networks. These hyperparameters were consistently applied to both the rescue and instrumental network. The CB operates without a discount factor and it also does not employ a target network. Apart from this, the CB shared training settings with the other networks, including identical learning rate, mini-batch size, and replay memory capacity. For the shielding mechanism, actions were deemed unsafe and consequently filtered out if the adversary model estimated their Q-values to exceed a predefined threshold of 0.8 in the given state. 

I've evaluated the agents on test episodes based on the sum of returns over $N$ episodes
$$\mathbf{R}_{\text{instr}}^{N} := \sum_{i=1}^{N} r_{\text{instr}}^{i}, \quad \mathbf{R}_{\text{resc}}^{N} := \sum_{i=1}^{N} r_{\text{resc}}^{i}, \quad \mathbf{R}_{\text{push}}^{N} := \sum_{i=1}^{N} r_{\text{push}}^{i}$$
for the respective reward types. To contextualize these reward values, I additionally recorded the number of episodes in which at least one person either fell or was pushed into the water, denoted as $\textbf{count}_{\text{resc}}^{N}$. When evaluating the performance in terms of returns of the RBAMA, I also tracked how often the agent detected a conflict between its moral oblgations, denoted as $\textbf{count}_{\text{conflict}}^N$.

Additionally to training an RBAMA, I trained standard DRL agents based on feedforward neural networks on ethical environments created from instances of the bridge environment. This means, I applied the algorithm of Rodriguez-Soto et.\ al to determine an ethical weight that guarantees the ethical optimality of the optimal policy and trained them using the corresponding scalarized reward function on those instances. The hyperparameters for training these \emph{multi-objective moral agents} (MOBMAs) were set to equal those used for the networks that compose the RBAMA.


\section{A Functional Reason-Based Artificial Moral Agent}\label{6originalStory}


\begin{figure}[h!] 
    \centering
    \includegraphics[width = 0.45\linewidth]{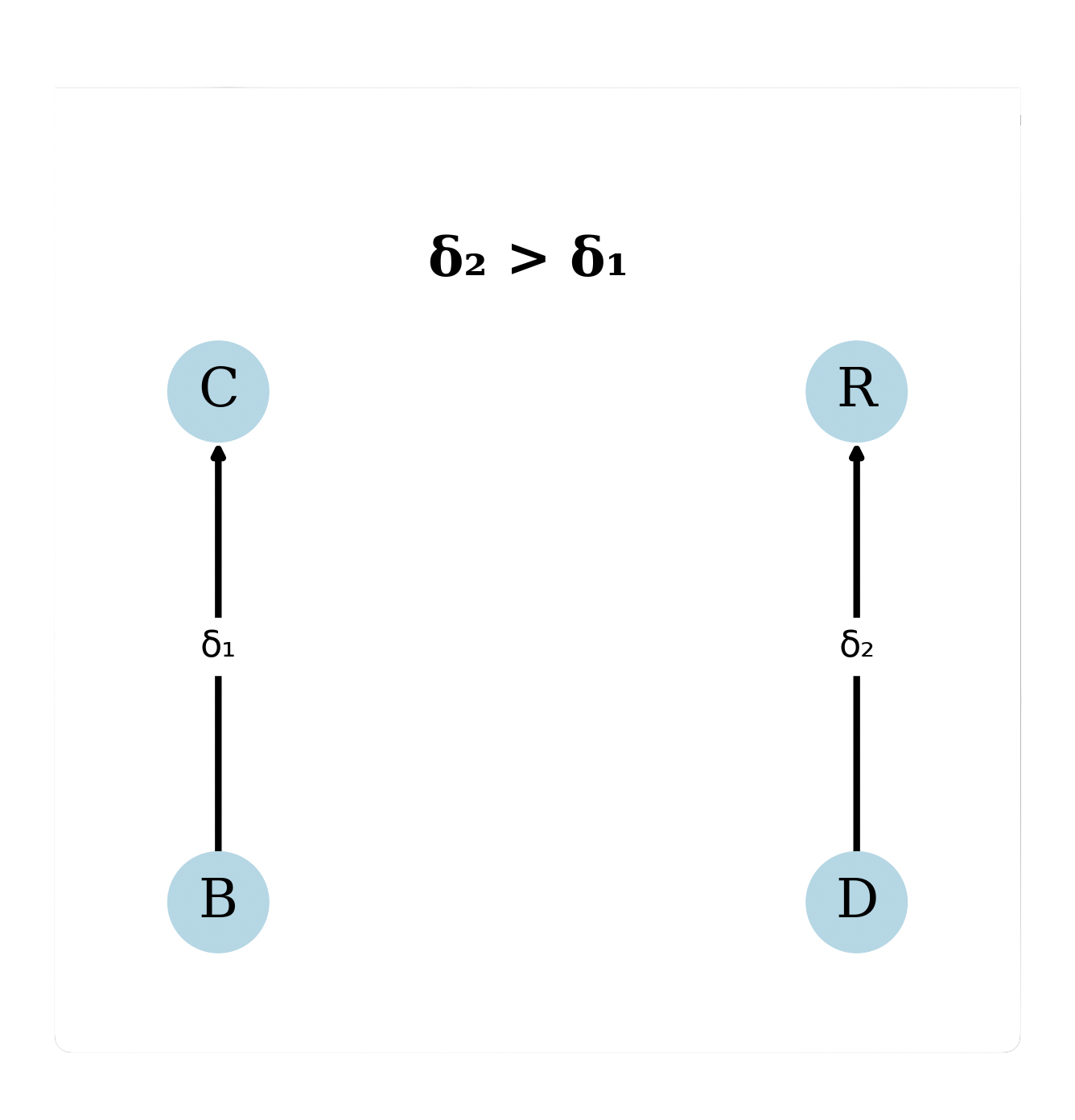}
    \caption{The RBAMA's learned reason theory}
    \label{fig:reas_theory}
\end{figure}

First tests were conducted on a \textit{moral dilemma simulation} (\cref{tab:env_moral_dilemma}) -- an instance of the bridge environment resembling the setting discussed in \cref{2challenges}. In this instance, in order to achieve its instrumental goal of delivering the package to the assigned location, the RBAMA must cross a bridge that is also traversed by a moving person. Additionally, a second person wanders along the lower shore, eventually reaching a dangerous spot where it falls into the water. Following the original story, the RBAMA must learn that it has a pro tanto reason not to push the person crossing the bridge into the water and a pro tanto reason to rescue the person who falls into the water along the lower shore. Furthermore, the moral judge teaches the RBAMA to prioritize rescuing the drowning person over avoiding pushing the individual crossing the bridge. Assume that, in the scenario under discussion, the moral judge has no access to perceptible cues on which a reliable estimate could be based regarding how long the drowning person can keep up over water or the likelihood that a collision would actually result in pushing the other person off the bridge. Under such conditions at least, the prioritization can arguably be regarded as constituting sound normative reasoning.

\begin{figure}[h]
    \centering
    \begin{subfigure}[t]{0.45\textwidth}
        \centering
        \includegraphics[width=7cm]{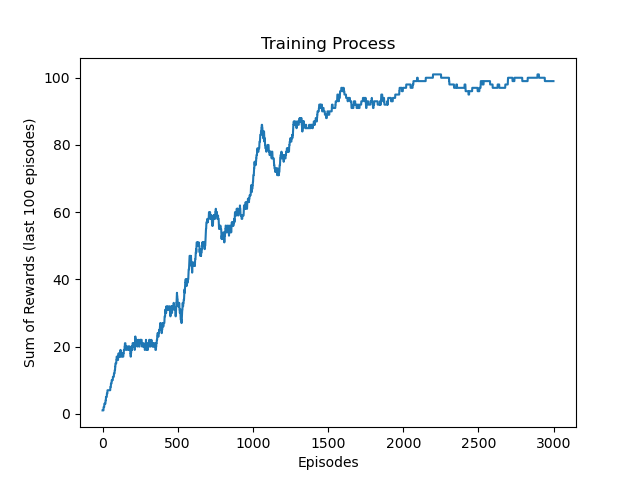}
        \caption{Training process of the instrumental network}
        \label{fig:train_instr_dilemma}
    \end{subfigure}
    \hfill
    \begin{subfigure}[t]{0.45\textwidth}
        \centering
        \includegraphics[width=7cm]{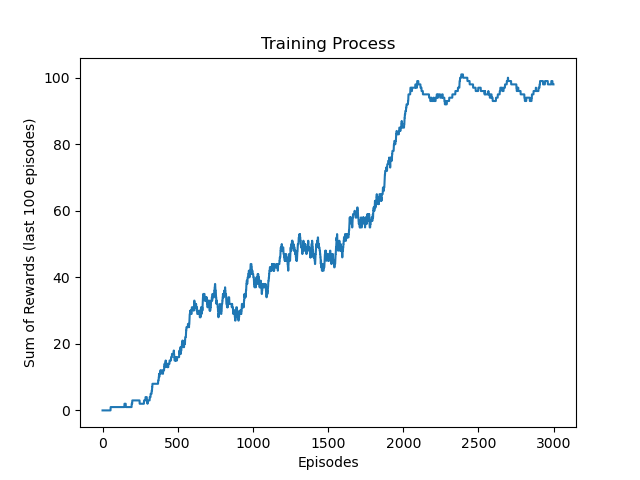}
        \caption{Training process of the rescuing network}
        \label{fig:train_resc_dilemma}
    \end{subfigure}
    \label{fig:two_images}
    \caption{Training of the RBAMA's networks}
\end{figure}

To develop strategies for the morally required behaviors, I trained the networks consecutively. First, the RBAMA’s rescue network was trained for 3,000 episodes (\cref{fig:train_resc_dilemma}); then, the agent's instrumental network was trained across 3,000 episodes (\cref{fig:train_instr_dilemma}) and finally the bridge-guarding network underwent training over 30,000 resets with randomized positions. Following the training of the networks, the RBAMA received feedback from the moral judge for 100 episodes, during which it updated its reasoning theory to ensure that it encountered situations where the moral obligations to rescue individuals and to prevent anyone from being pushed off the bridge come into conflict, thereby enabling it to learn the intended prioritization of the corresponding default rules as specified by the judge.

Visualizing the RBAMA's reason theory after the training phase confirms that it has successfully learned the reasoning taught by the moral judge (\cref{fig:reas_theory}). Furthermore, exemplary demonstrations of the RBAMA's reasoning illustrate its ability to make reason-based decisions grounded in morally relevant facts that supervene on the state of the environment. Positioning the RBAMA next to a person on the bridge, the presence of the person on the bridge -- a morally relevant fact -- is reflected in the labeling of the environment, i.e., $l(s) = \{B\}$. Observing the RBAMA’s reasoning during runtime using the visualization tool shows, that it correctly recognizes its moral obligation to avoid pushing the person off the bridge (\cref{fig:obl_wait}). It adapts its behavior accordingly by waiting for the person to take a step before continuing crossing the bridge. Likewise, when setting the environment to a state where a person has fallen into the water such that $l(s) = \{D\}$, the RBAMA derives its overall moral obligation to rescue the drowning person (\cref{fig:obl_resc}). Again, its normative reasoning guides the RBAMA's behavior -- this time by prompting it to move towards the person and pull them out of the water.

\begin{figure}[h!]
    \centering
    \begin{subfigure}[t]{0.45\textwidth}
        \centering
        \includegraphics[width=7cm,height=7cm]{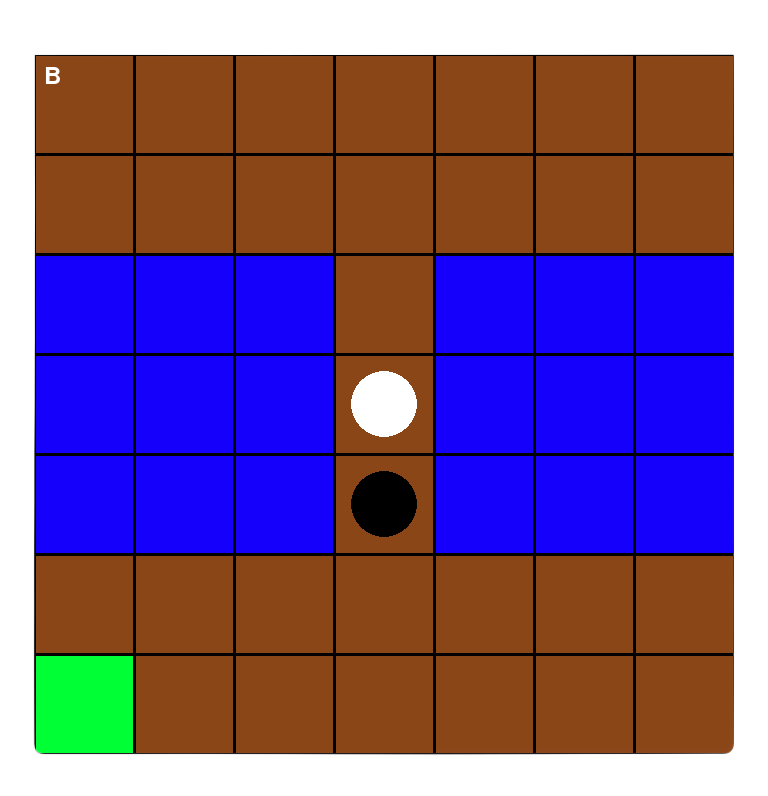}
        \caption{State of the environment with $l(s) = \{B\}$}
        \label{fig:state_push}
    \end{subfigure}
    \hfill
    \begin{subfigure}[t]{0.45\textwidth}
        \centering
        \includegraphics[width=7cm,height=7cm]{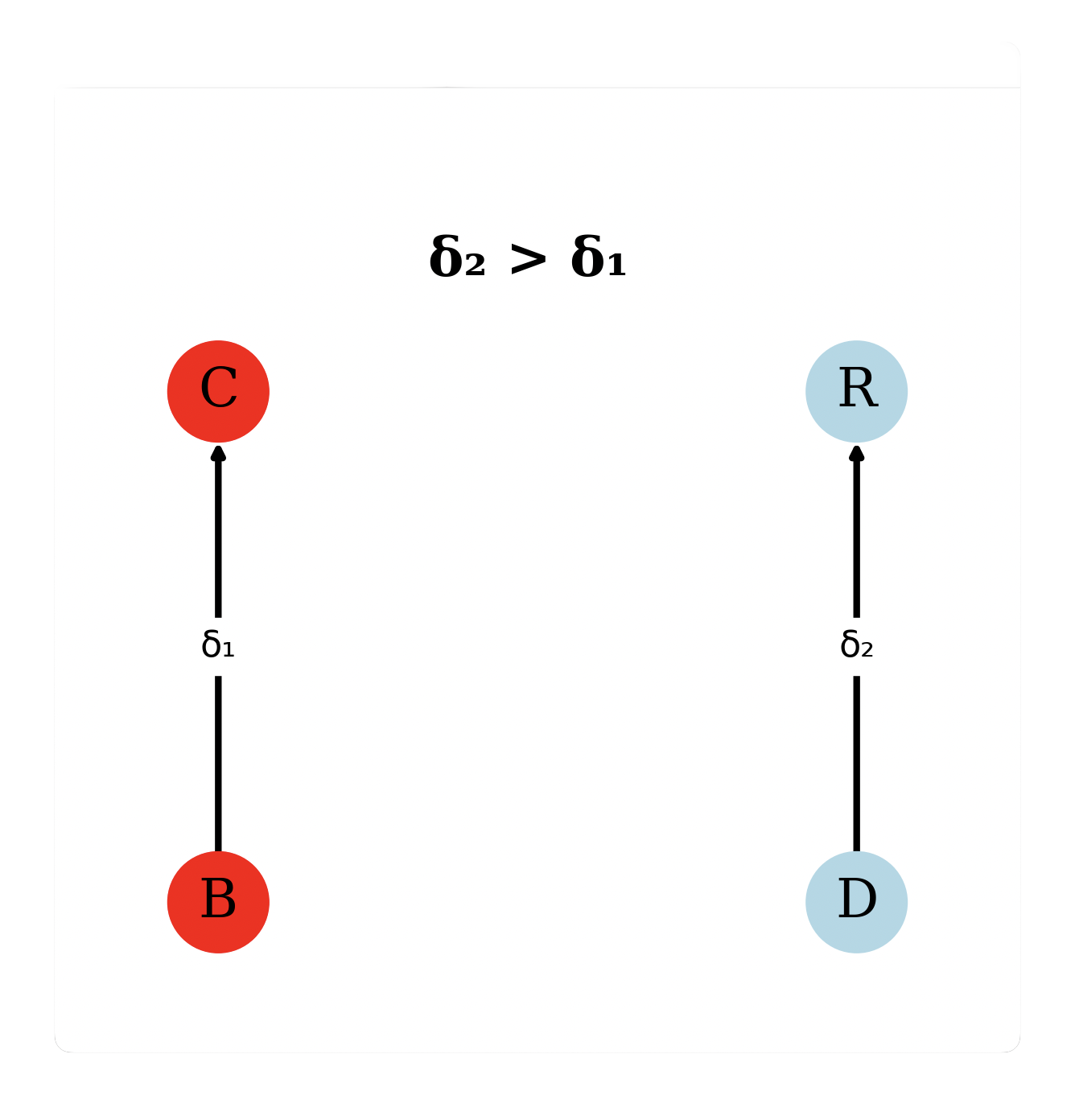}
        \caption{Reasoning of the RBAMA deriving $\{\delta_1\}$ as proper scenario and thereby accepting $C$ as overall moral obligation}
        \label{fig:resaoning_push}
    \end{subfigure}
    \caption{The RBAMA's reasoning under $l(s)  = \{B\}$ prior to not moving for one time step due to being prevented from executing $\textsf{down}$}
    \label{fig:obl_wait}
\end{figure}

Resetting the environment to a state (see \cref{fig:dilemma_state}), where $l(s) = \{B, D\}$  and in which further the moral obligations of pulling the person out of the water and not pushing a person from the bridge are conflicted yielded the most interesting test case. This \textit{dilemma state}, closely resembles the situation described in \cref{2motivation}  in which the agent confronts these two conflicting moral obligations within a simplified real-world scenario. As the RBAMA was successfully trained using feedback from the moral judge, it derives an overall moral obligation to rescue the drowning person, while excluding the conflicting lower-order reason against pushing from its overall binding moral considerations. Most significantly, observing this behavior in conjunction with the RBAMA’s reasoning shows that the RBAMA prioritized its moral obligations correctly -- i.e., in accordance with the prioritization it had acquired through feedback from the moral judge -- and acted accordingly. 

\begin{figure}[h!]
    \centering
    \begin{subfigure}[t]{0.45\textwidth}
        \centering
        \includegraphics[width=7cm,height=7cm]{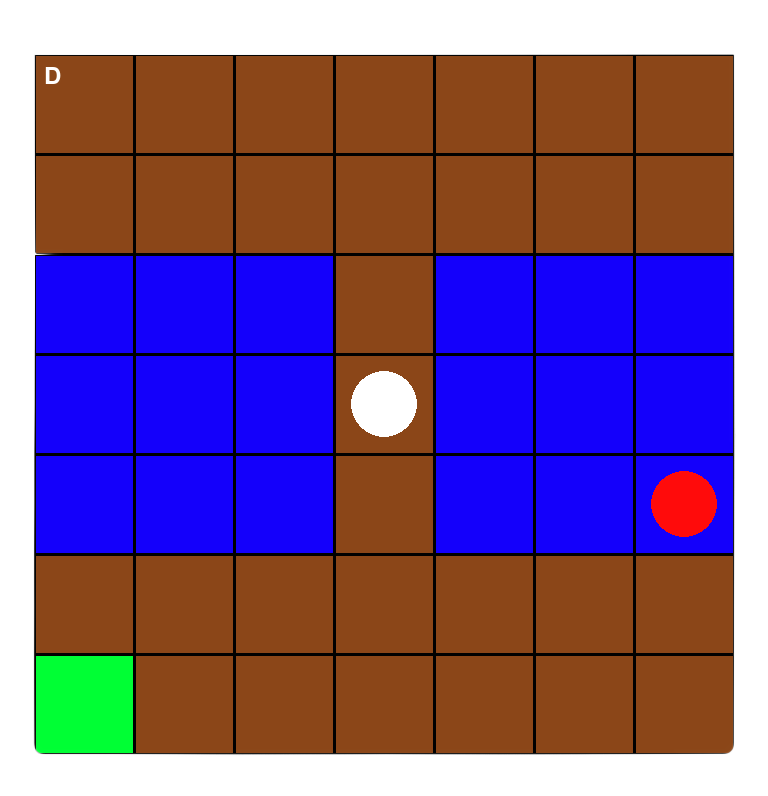}
        \caption{State of the Environment with $l(s) = \{D\}$}
        \label{fig:state_resc}
    \end{subfigure}
    \hfill
    \begin{subfigure}[t]{0.45\textwidth}
        \centering
        \includegraphics[width=7cm,height=7cm]{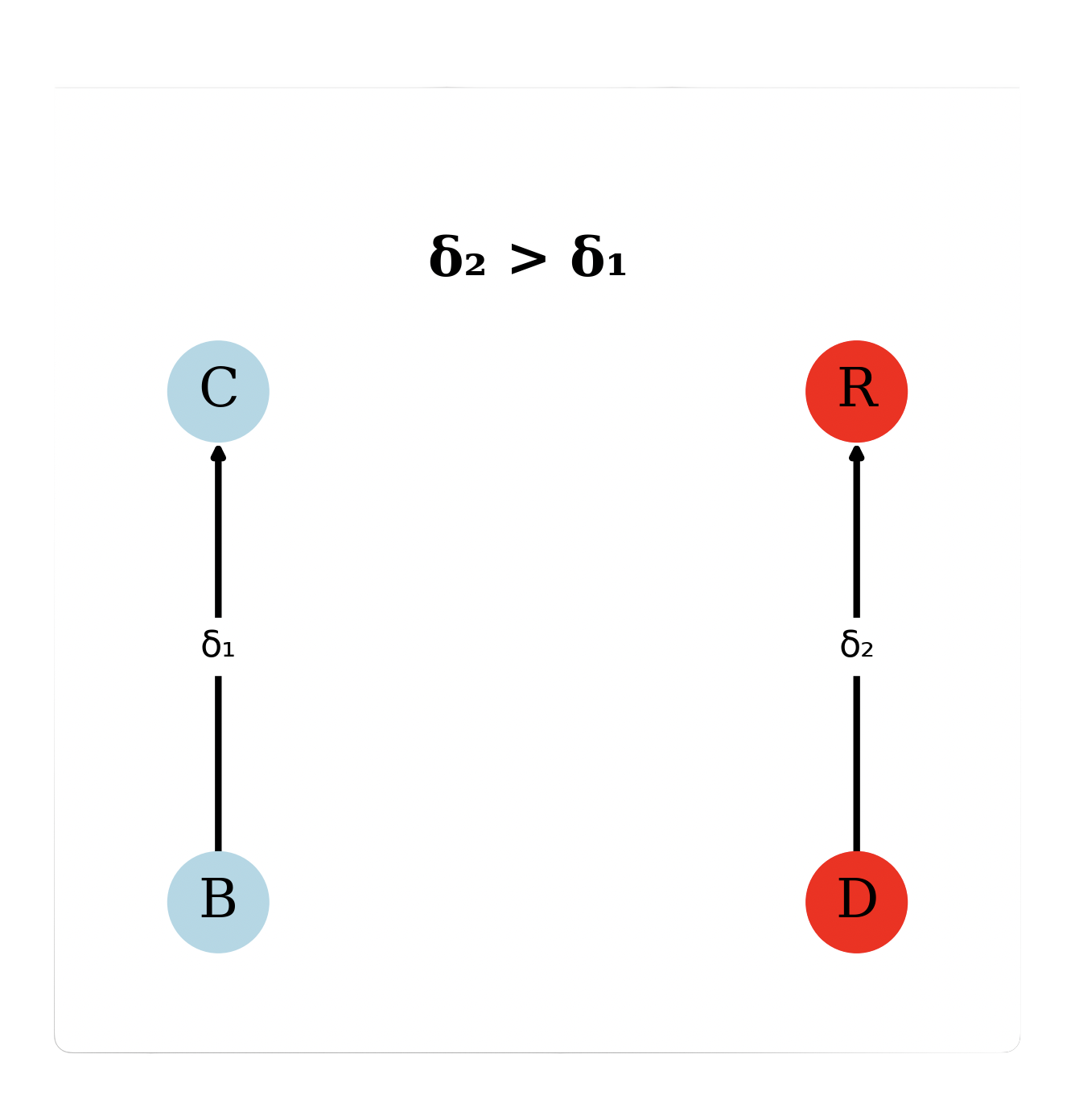}
        \caption{Reasoning of the RBAMA deriving $\{\delta_2\}$ as proper scenario and thereby accpepting $R$ as overall moral obligation}
        \label{fig:reasoning_resc}
    \end{subfigure}
    \caption{The RBAMA's reasoning under $l(s) = \{D\}$ prior to executing $\textsf{down}$ as its next primitive action following its rescuing policy}
    \label{fig:obl_resc}
\end{figure}

Through its demonstrated functionality in the targeted test cases, the fully trained RBAMA successfully realizes the original vision of developing an AMA grounded in normative reasons. It exhibits the capacity to learn and internalize  normative reasons as well as a prioritization among them. The successful implementation of this approach brings with it the key advantages inherent to its design -- most notably, the moral justifiability of its actions, its moral trustworthiness, and a certain degree of moral robustness. Detailed arguments supporting the claim that the RBAMA indeed has these properties is presented in \cref{7motivationRevisited}.

\begin{figure}[h!]
    \centering
    \begin{subfigure}[t]{0.45\textwidth}
        \centering
        \includegraphics[width=7cm,height=7cm]{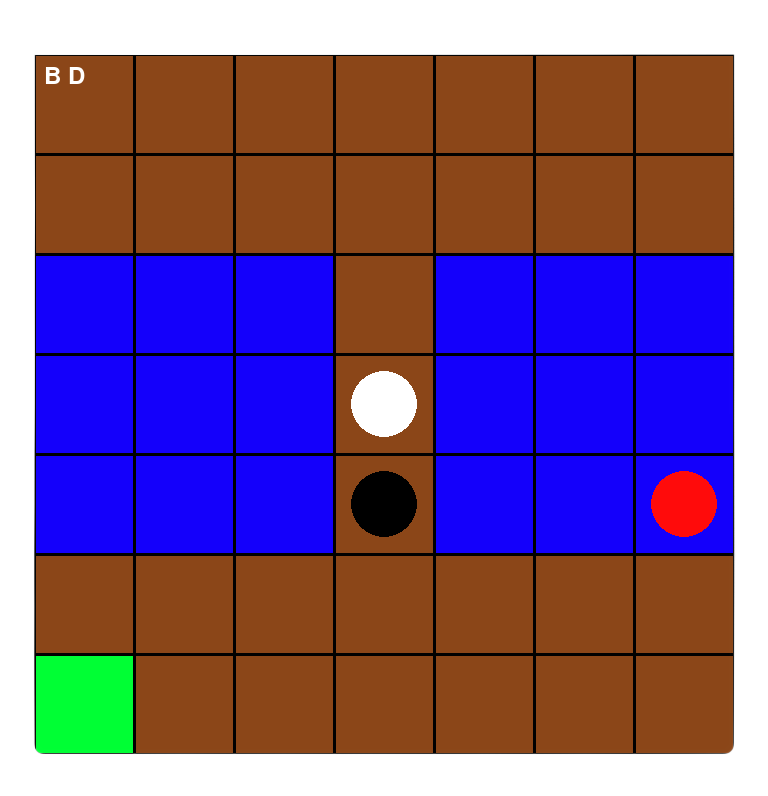}
        \caption{State of the Environment with \\ $l(s) = \{B, D\}$}
        \label{fig:dilemma_state}
    \end{subfigure}
    \hfill
    \begin{subfigure}[t]{0.45\textwidth}
        \centering
        \includegraphics[width=7cm,height=7cm]{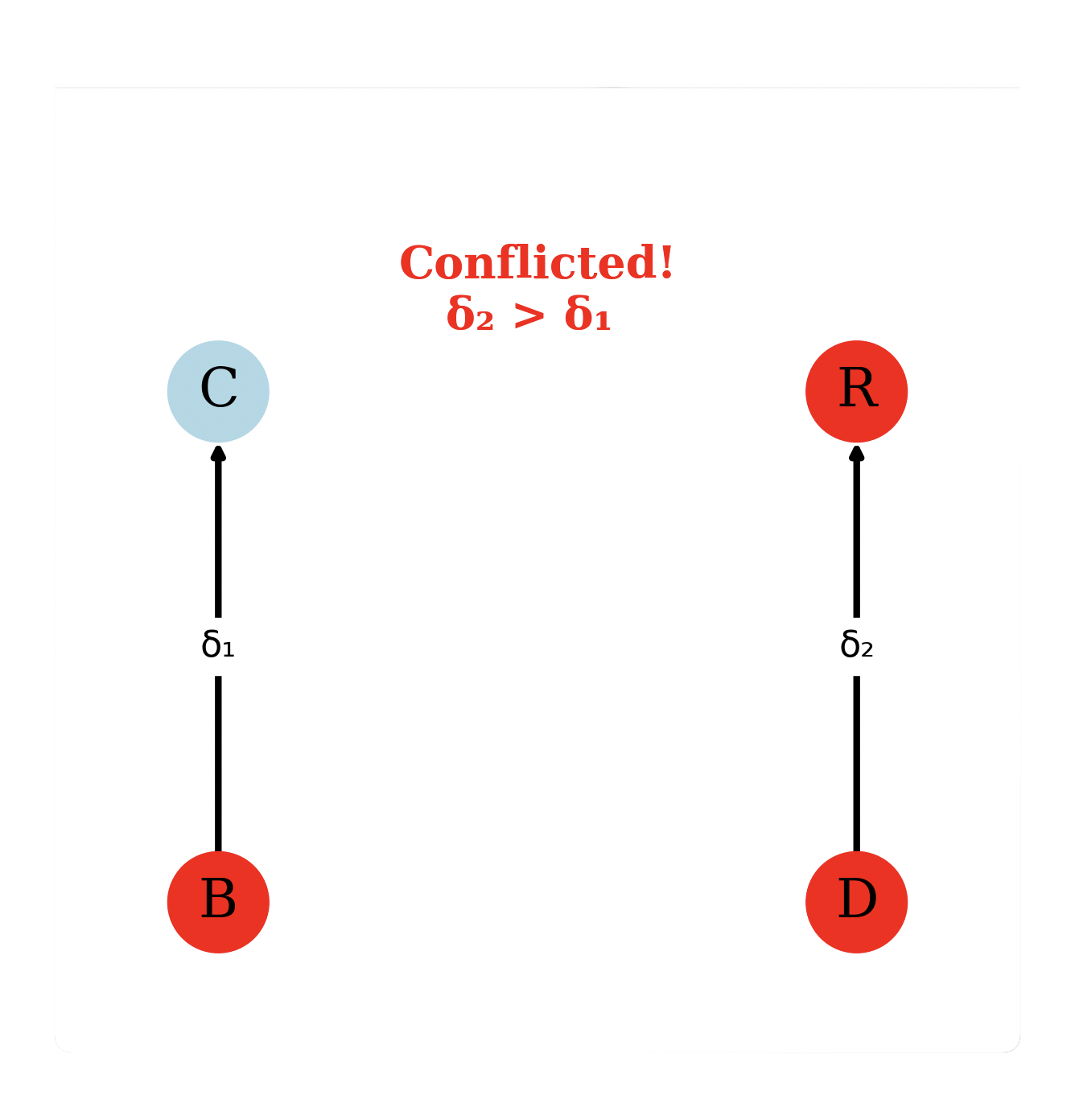}
        \caption{Reasoning of the RBAMA deriving $R$ as Proper Scenario}
        \label{fig:image2}
    \end{subfigure}
    \caption{The RBAMA's reasoning under $l(s) = \{B, D\}$ and $R$ conflicting with $W$}
    \label{fig:moral_dilmma_conflict}
\end{figure}

Additionally to evaluating the trained RBMA based on its reasoning, I evaluated it based on returns over 1,000 episodes in the ethical dilemma simulation. Out of the 1000 test cases, in 631 ($\textbf{count}_{\text{resc}}^{1000}=631$) at least one person fell into the water and the agent detected a moral conflict in 44 cases ($\textbf{count}_{\text{conflict}}^{1000}=44$). The total rewards obtained were $\mathbf{R}^{1000}_{\text{instr}} = 1000$, $\mathbf{R}^{1000}_{\text{push}} = -44$, and $\mathbf{R}^{1000}_{\text{resc}} = 631$. These results indicate that the RBAMA consistently achieved its instrumental goal, always reaching the designated delivery location. Furthermore, in every episode where a person fell into the water, the RBAMA consistently fulfilled its moral obligation to perform a rescue.\footnote{Since this obligation only arises when a person falls into the water or is pushed from the bridge, it does not apply in every episode. Consequently, it is not possible for the agent to receive \( r_{\text{resc}} \) in every episode.} The results also show that the RBAMA occasionally, though very infrequently, violates the moral constraint $\varphi_C$ of not pushing persons from the bridge. Notably, the number of such constraint violations corresponds exactly to the number of episodes in which its moral obligations were conflicting. This indicates that violations of $\varphi_C$ occurred exclusively in situations where the agent was also morally obligated to fulfill $\varphi_R$ -- that is, to rescue a drowning person. These actions reflect the order $\delta_2 > \delta_1$, which represents the prioritization conveyed by the moral judge and learned by the RBAMA. An analysis of the risk estimations produced by the bridge-guarding network reveals that exactly those state-action pairs resulting in the agent stepping onto a bridge tile already occupied by another person exceed the 0.8 threshold (\cref{fig:threshold}). Specifically, the trained adversary generates a shield that prevents the RBAMA from moving downward when positioned directly above a person on the bridge and from moving upward when positioned below a person on the bridge. This further supports the interpretation that the observed constraint violations stem from the RBAMA's adherence to the prescribed hierarchy of moral reasons, which permits risking a pushing action when it constitutes part of the rescuing network's strategy for pulling the person out of the water as fast as possible. 

\section{How the Agent Understands its Moral Duties}\label{6understandingMoralDuties}

Although the RBAMA determines its moral obligations through reasoning -- a symbolic process that enables explicit oversight via direct feedback -- it learns how to fulfill these obligations through reinforcement learning, relying on reward signals. The reward function is manually crafted and thus independent of the moral judge’s understanding of which sequences of primitive actions fulfill moral obligations. As a result, the RBAMA’s learned \textit{behavior} may not fully align with the judge’s expectations of how it ought to act -- even though its \textit{reasoning} is consistent with that of the moral judge.

In designing the reward system to discourage the bridge-guarding network from pushing persons off the bridge, the RBAMA learns to avoid such actions. However, its interpretation of this rule differs from that of the moral judge. Instead of refraining from movement when a person is present on the bridge, the bridge-guarding network learns only to ensure that it does not actively push anyone, because this is what is encoded in the reward design. As a result, it still moves onto the bridge while a person is present, since it is penalized only for actually pushing someone off. This differs from the behavior the moral judge expects from the RBAMA based on its rules, which define entering or moving on the bridge while a person is present as a violation of the RBAMA's moral obligation. The same applies to the moral task of rescuing persons from the water. The judge expects the RBAMA to take the shortest path towards the person who has fallen into the water first. However, the rescuing network is currently rewarded for successfully pulling all persons out of the water as quickly as possible, receiving a reward only when no persons remain in the water. This incentivizes the RBAMA to prioritize rescuing all individuals, but it does not differentiate between the order in which the rescues are performed. Instead, it learns to rescue the person closest to its position first. In both cases, the judge forwards feedback to the RBAMA, not because its reasoning is misaligned, but because it has a different understanding of how to fulfill its moral duties. 

\begin{figure}[h!] 
    \centering
    \includegraphics[width = 0.45\linewidth]{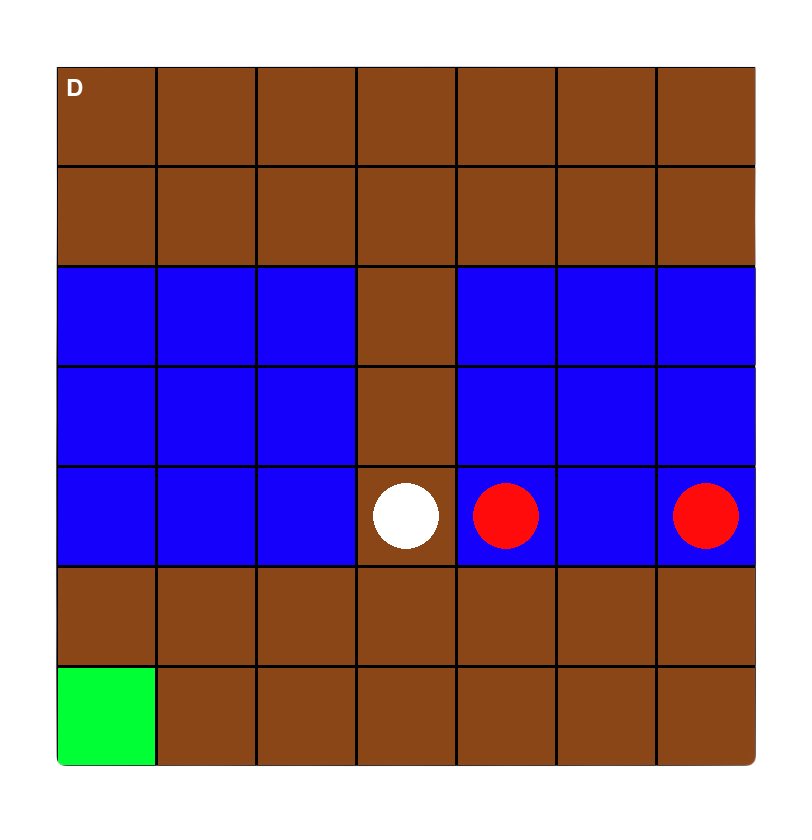}
    \caption{A scenario in which the agent, while attempting to rescue one person, pushes another person into the water and then prioritizes rescuing the person it pushed.}
    \label{fig:stuck_b2_v1}
\end{figure}

Furthermore, the RBAMA's learned behavior suggests that it does not prioritize rescuing a specific individual but instead seeks to achieve a state where no persons remain in the water as quickly as possible. This outcome is unsurprising, as it follows directly from the reward structure. As a result, the RBAMA first rescues individuals it has pushed into the water, which, given the scenario, appears reasonable. Nonetheless, one might prefer an agent that goes beyond an undifferentiated "save everyone" approach, instead developing the capacity to distinguish between different drowning individuals. This could be achieved by teaching the RBAMA a new default rule, the conclusion of which encodes a more fine-grained moral obligation -- specifying, for instance, which particular person, or more generally which moral patient, it ought to prioritize for rescue. The topic is further explored in \cref{7rewardFunction}. 

\section{Training Nets to Work Together}\label{6training_nets_to_work_together}

\begin{figure}[h]
    \centering
    \begin{subfigure}[t]{0.45\textwidth}
        \centering
        \includegraphics[width=7cm]{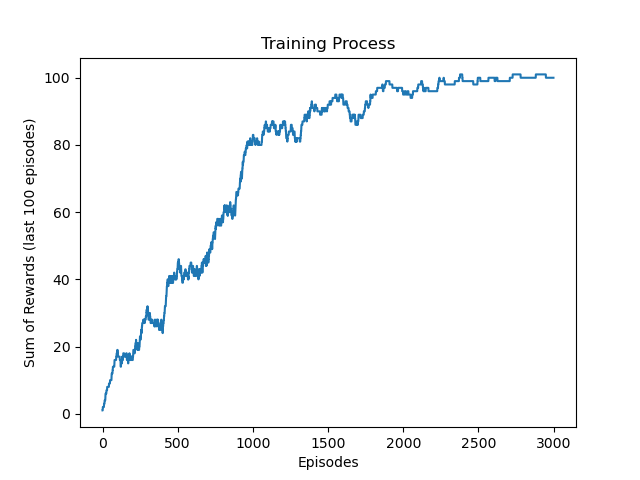}
        \caption{Training process of the instrumental policy under unshielded training}
        \label{fig:train_b2_v1}
    \end{subfigure}
    \hfill
    \begin{subfigure}[t]{0.45\textwidth}
        \centering
        \includegraphics[width=7cm]{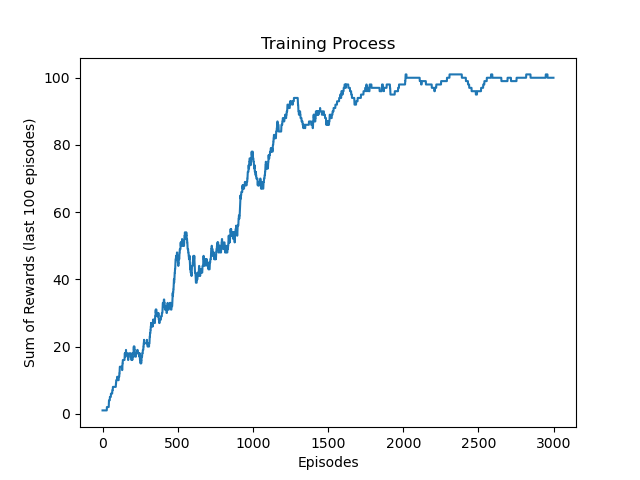}
        \caption{Training process of the instrumental policy under shielded training}
        \label{fig:train_b2_v1}
    \end{subfigure}
     \caption{Training process shielded vs. unshielded}
    \label{fig:two_images}
\end{figure}

The modular architecture of the RBAMA proposes itself as a means to fully decouple the instrumental policy from the moral decision-making. However, this is not possible for moral obligations that require the agent to ensure the non-violation of moral constraints \textit{while} pursuing its instrumental policy. Since this is achieved through the shielding mechanism, the bridge-guarding network must learn a policy that does not include trajectories it would be prevented from following once the shield is activated. This issue is also encountered by Neufeld et al.\ in their evaluation of an RL agent shielded by a normative supervisor (\cite{neufeld2022a}).

\begin{figure}[h!] 
    \centering
    \includegraphics[width = 0.45\linewidth]{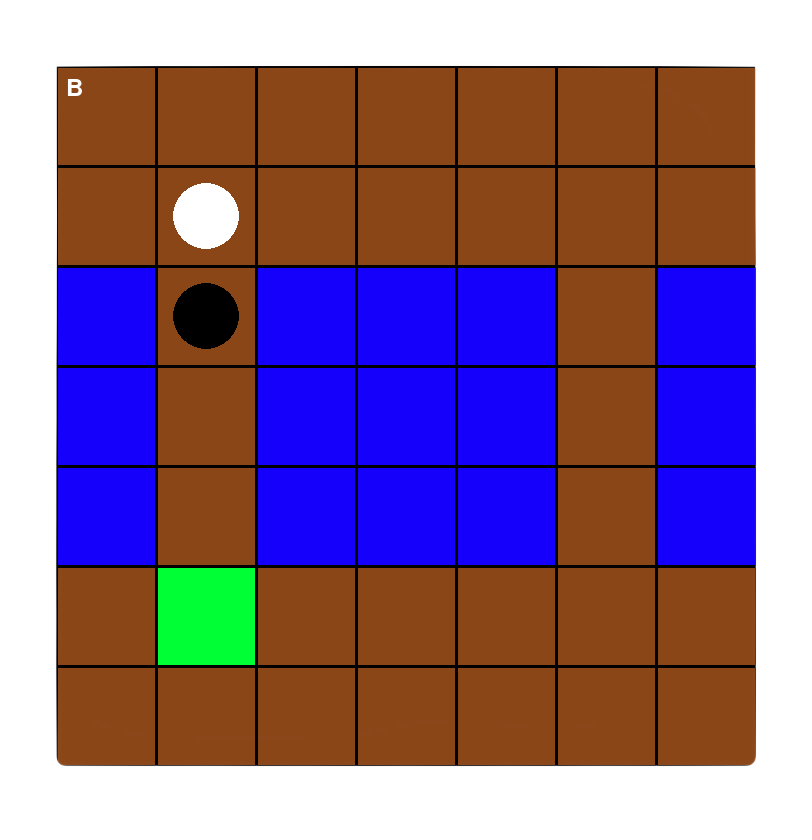}
    \caption{The agent repeatedly trying to enter the left bridge}
    \label{fig:stuck_b2_v1}
\end{figure}

The problems resulting from ignoring this issue can be observed in an RBAMA trained without shielding on the \textit{blocked left bridge simulation} -- a version of the environment that includes two bridges, one of which is permanently blocked by a person that does not move (\cref{tab:env_left_bridge_blocked}). During training, the RBAMA learns to take the path across the bridge -- despite the presence of the person -- when being placed in the upper-left area of the map (e.g., in state $s_0$), since this route offers the shortest path to its goal position. However, once the shield is activated, the RBAMA becomes unable to reach its goal from these states because the learned path is now blocked. It repeatedly attempts to enter the right bridge, only to be prevented from doing so (\cref{fig:stuck_b2_v1}). When tested over 1,000 episodes, with the environment resetting so that the RBAMA starts at $s_0$, the total rewards accumulated across all episodes are $\mathbf{R}^{1000}_{\text{instr}} = 0$ and $\mathbf{R}^{1000}_{\text{push}} = 0$ confirming the RBAMA’s failure to achieve its goal.

The issue can be addressed by training the policy while applying the shield constructed through $\pi_{\theta_W}$, shifting away from the notion of fully separating instrumental decision-making from moral reasoning. Instead, this approach enables the RBAMA to consider moral constraints during learning its instrumental policy. Training the RBAMA under these restrictions yields the expected outcome: when the environment is reset to \( s_0 \), itachieves total rewards of \( \mathbf{R}^{1000}_{\text{instr}} = 1000 \) and \( \mathbf{R}^{1000}_{\text{push}} = 0 \). This demonstrates that (i) the RBAMA consistently reaches its goal and (ii) learns to take a path across the right bridge, thereby avoiding pushing the person from the bridge.

\section{Balancing Moral and Instrumental Objectives for Building an Ethical Bridge Environment}\label{6RodriguezSoto}

To complement the assessment of the RBAMA’s performance by placing it into context with established methodologies, I conducted a comparative evaluation against the MORL-based approach by Rodriguez-Soto et al.  I set the elements of the vectorial reward function $\vec{R} = (R_{0}, R_{\mathcal{N}} + R_{E})$ to $R_{0} = r_\text{instr}$, $R_{\mathcal{N}} = r_{\text{push}}$, and $R_E = r_{\text{resc}}$, to construct an ethical MOMDP as the basis for computing an ethical embedding. The algorithm determines the minimum ethical weight required to construct a scalarized version of the environment in which the optimal policy is guaranteed to be ethically optimal. This weight reflects the relative importance that must be assigned to moral obligations relative to the agent’s instrumental goals. When restricting environment resets to the initial state $s_0$, the algorithm returned an ethical weight of 0.53. When considering the agent’s behavior across the entire state space, the ethical weight required to ensure that the optimal policy remains ethically optimal rose to 3.56. This higher weight corresponds to a state in which the agent is on the verge of reaching its instrumental goal, but must turn back to rescue drowning persons in order to behave ethically optimal (\cref{fig:return_to_rescue}). As a result, the reward returned for fulfilling this moral obligation must be sufficiently high to incentivize the agent to do so.

\begin{figure}[h!]
\centering
\includegraphics[width=0.45\linewidth]{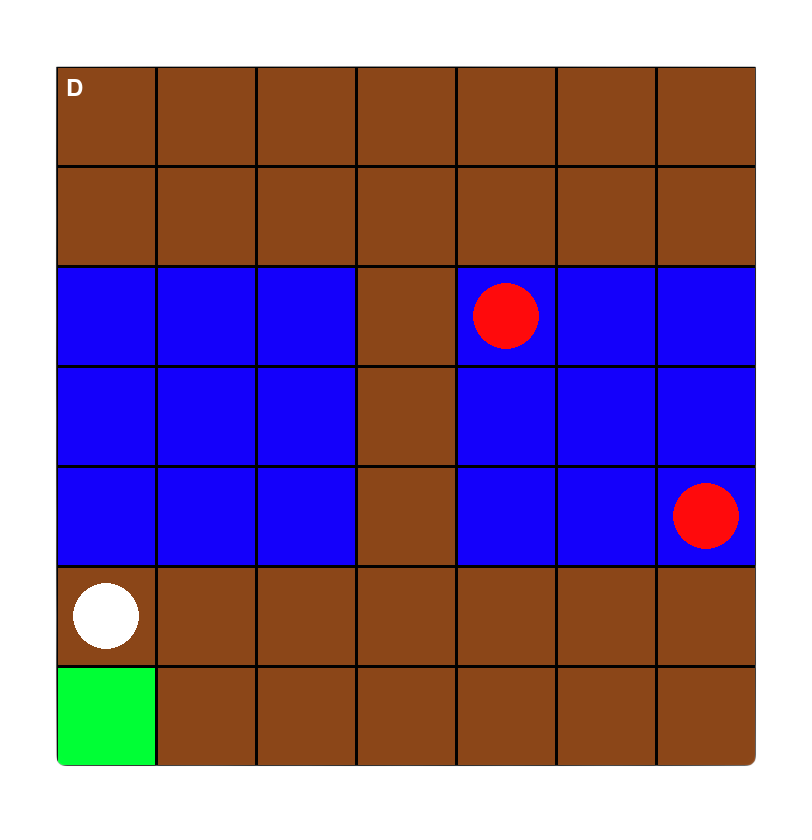}
\caption{Visualization of the state that requires the highest ethical weight to guarantee ethically optimal behavior under the optimal policy}
\label{fig:return_to_rescue}
\end{figure}

I trained a MOBMA for 30,000 episodes on the ethical environment generated from the moral dilemma simulation with random resets, using a scalarized version under an ethical weight of 3.45 (\cref{fig:training_b1_v1_3.45}). Since the MOBMA does not learn to engage in moral decision-making to determine its course of action, but instead learns an overall behavior solely based on expected returns, its performance can likewise only be evaluated based on returns. When tested over 1000 episodes with $\textbf{count}_{\text{resc}}^{1000}=874$, it achieved total rewards of $\mathbf{R}^{1000}_{\text{instr}} = 1000$, $\mathbf{R}^{1000}_{\text{push}} = 0$, and $\mathbf{R}^{1000}_{\text{resc}} = 874$.

Notably, this indicates that the agent successfully avoids pushing individuals off the bridge, manages to carry out rescues in nearly every episode, and achieves its instrumental goal in almost all cases. In direct comparison, the RBAMA, while it also reliably reaching its instrumental goal, occasionally pushed persons from the bridge and performed significantly fewer rescues than the agent trained on the ethically optimal version of the environment (cf. \cref{6originalStory}). 

This difference can be attributed to the fundamental distinctions between the two architectures. Unlike the RBAMA, which strictly prioritizes one moral obligation over the other, based on its normative reasoning, the MOBMA learns a multi-objective policy that leads to the overall highest reward. The environment is configured so that the agent can still reach the drowning person in time, even if it delays moving towards them by one time step. Because pushing a person results in a penalty of $-1$, while delaying the rescue by one time step incurs only a minimal cost in $r_\text{resc}$, the MOBMA maximizes its cumulative reward by holding its position until the person on the bridge moves on, thereby avoiding to violate its moral constraint. Consequently, learning a multi-objective policy to maximize the overall reward enables the agent to recognize that immediately rushing to rescue is not necessary for arriving in time and successfully saving the person.

Observing the behavior of the agent trained on the ethical optimal version of the environment further reveals that it anticipates the moment when the person strolling along the lower shore will fall into the water. Instead of proceeding to the package delivery location, the agent moves toward the designated drowning spot and waits for the person to fall, demonstrating an understanding of the version-specific transition dynamics of the environment. In contrast, the RBAMA remains focused on its instrumental goal until it identifies a moral reason to intervene. It only shifts its objective to rescuing once the person has actually fallen into the water. At that point, the rescue network takes control of the agent's behavior, directing it toward the drowning person. However, it often reaches its instrumental goal before the person falls into the water, which explains why it carries out fewer rescues. 

\begin{figure}[h!]
\centering
\includegraphics[width=7cm]{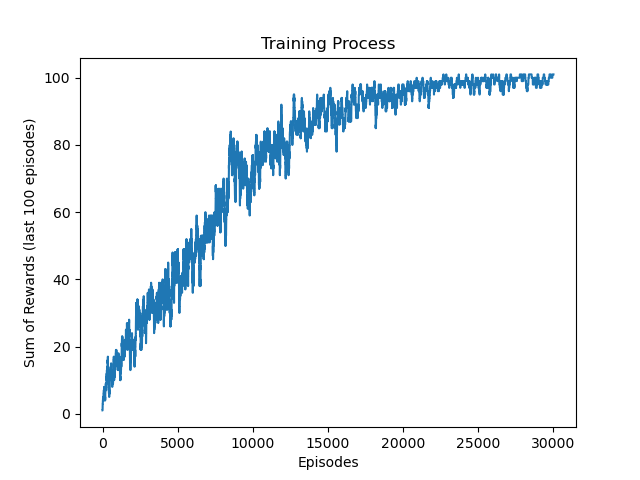}
\caption{Training process of the DRL agent on the scalarized Version of \cref{tab:env_moral_dilemma} setting $e_w = 3.45$}
\label{fig:training_b1_v1_3.45}
\end{figure}

\section{Deploying the RBAMA in Stochastic Environments}\label{6stochEnv}

The RBAMA was originally trained in a fully deterministic environment -- the moral dilemma simulation described in \cref{6originalStory}. However, the moral judge was assumed to lack knowledge of both the drowning time and the probability of pushing someone off the bridge. Under this epistemic limitation, prioritizing $\varphi_R$ over $\varphi_C$ arguably constitutes a form of sound normative reasoning. Accordingly, the moral judge was configured with rules that encoded this prioritization, such that the feedback incentivized the RBAMA to adopt the order $\delta_2 > \delta_1$. 

Importantly, since prioritizing between $\varphi_R$ over $\varphi_C$ is considered to constitute sound normative reasoning on the grounds of the moral judge's lack of knowledge concerning specific environment's dynamics, it remains sound -- on the same grounds -- in a stochastic setting, where drowning time and pushing probability are randomized. Furthermore, since no other aspects of the environment were altered aside from the introduction of stochasticity, the behavior the RBAMA acquired through training its networks remains adequate for translation of its moral obligations to primitive actions. Accordingly, the RBAMA trained in the original deterministic scenario can be directly deployed in a stochastic variant.

To empirically validate the claim, I instantiated a \textit{stochastic moral dilemma scenario} (\cref{tab:env_stochastic_moral_dilemma}), where the probability that a person in the water drowns after each time step was set to $P_{\text{drown}} = 0.1$, and the probability that the agent pushes a person off the bridge upon collision was set to $P_{\text{push}} = 0.5$. Observation of the RBAMA’s behavior in one concrete state with $\delta_1$ and $\delta_2$ conflicting (see \cref{fig:moral_dilmma_conflict}) -- in the further discussion referred to as the \textit{moral dilemma state} -- confirms the expectation that it still consistently prioritizes the moral obligation to rescue the drowning person by executing the learned rescue strategy. On this basis, it can be argued that the RBAMA demonstrates a generalization capability with respect to its moral behavior when transitioning from a deterministic to a stochastic setting.
To evaluate the RBAMA’s performance based on numerical return values, I conducted tests on dedicated evaluation episodes, each time resetting the environment to the moral dilemma state. In this test setup, the RBAMA achieved: $\mathbf{R}^{1000}{\text{instr}} = 1000$, $\mathbf{R}^{1000}{\text{push}} = -456$, and $\mathbf{R}^{1000}{\text{resc}} = 761$, with $\textbf{count}_{\text{resc}}^{1000} = 1000$ and $\textbf{count}_{\text{conflict}}^{1000} = 1000$. These results show that, by consistently rushing to the rescue, the RBAMA pushed the person off the bridge in nearly half of the cases, while successfully arriving in time to prevent drowning in 761 out of 1000 episodes -- thereby fulfilling its rescuing obligation in a majority of trials.

Adjusting the probability parameters does not affect the adequacy of the RBAMA’s acquired reason theory for navigating the environment, based on the same argument that assumes the judge evaluating the RBAMA’s behavior remains unaware of the exact probabilities. With $P_{\text{push}}$ set to $0.5$ and $P_{\text{drown}}$ set to $0.1$, evaluating the agent’s performance by again repeatedly resetting the environment to the moral dilemma state resulted in $\mathbf{R}^{1000}_{\text{instr}} = 1000$, $\mathbf{R}^{1000}_{\text{push}} = -458$, and $\mathbf{R}^{1000}_{\text{resc}} = 719$, with $\textbf{count}_{\text{resc}}^{1000} = 1000$ and $\textbf{count}_{\text{conflict}}^{1000} = 1000$.
As expected, a slight adjustment of $P{\text{drown}}$ from $0.1$ to $0.11$, thus inducing only a minor shift in the probability distribution, did not result in a notable change in the return values. Finally, I evaluated the RBAMA’s performance under a more pronounced shift in the drowning probability by setting $P_{\text{drown}} = 0.3$. The test results were $\mathbf{R}^{1000}{\text{instr}} = 1000$, $\mathbf{R}^{1000}{\text{push}} = -450$, and $\mathbf{R}^{1000}{\text{resc}} = 413$, with $\textbf{count}_{\text{resc}}^{1000} = 1000$ and $\textbf{count}_{\text{conflict}}^{1000} = 1000$. this shift led to failed rescue attempts in more than half of the trials. Nonetheless, in light of the earlier argument, the RBAMA continues to engage in sound normative reasoning by consistently prioritizing $\varphi_R$ over $\varphi_C$. However, compared to the previous cases, these results may more strongly provoke resentment toward the chosen priority setting.

In this context, an important consideration arises regarding the role of the moral judge in evaluating such performance data. If the test results are made visible to the moral judge, they could serve to inform and potentially revise the judge's own reasoning about the adequacy of the agent’s normative reasoning.  The introduction of such a feedback loop is further explored in \cref{7outcomestoreasons}. However, in this testing setup, the outcomes disclose environment dynamics that are \textit{instance-specific}, allowing for a reliable estimation of $P_{\text{push}}$ and $P_{\text{drown}}$, which had previously been assumed to remain unknown to the judge. Revealing these features is arguably problematic, as it undermines the beneficial abstraction layer that is preserved by keeping such information hidden from the evaluator. In fact, one might argue that a significant advantage of the RBAMA framework lies in inherently abstracting away from such instance-specific details when providing an RBAMA feedback about how to set priorities when observing its reasoning and behavior in simulated environments as further discussed in \cref{7overfittingMoralPrio}. 

Another consideration arises  when shifting focus from purely simulated environments to real-world deployments — or to simulations that incorporate probability distributions grounded in real-world statistics. In such settings, cues such as visible signs of distress or frantic movements by the drowning person may indicate a limited remaining survival time, thereby heightening the urgency to intervene. Conversely, calm behavior might suggest that the individual is likely to remain afloat for a longer period. Since such statistical information offers morally relevant insights, it can be argued that the return values from test runs conducted in such environments \textit{should} inform the moral judge’s reasoning. This issue is explored in greater detail in \cref{7relevanceConsequence}. 

Nevertheless, even if these statistical cues provide valuable input for the judge, it remains unclear how precisely they should shape the agent’s own decision-making processes. In particular, the question arises as to whether and when a shift in the probability distribution — for example, an increased likelihood for the person in the water to drown — becomes morally significant enough to justify an adjustment in the RBAMA’s prioritization between $\varphi_C$ and $\varphi_R$. This issue connects closely to the broader question of moral robustness, which is further discussed in \cref{7moralRobustness}.

\section{Ethically Optimal Behavior: Mission Accomplished?}\label{6EthicallyOptimalBehavrio_MissionAccomplished}

The experimental findings show that both the RBAMA and the MOBMA consistently prioritize their moral duties over their instrumental objectives. This strict prioritization has largely been accepted as a basic requirement for constructing AMAs. However, it is better understood as an assumption open to debate rather than an unquestionable fact.

\begin{figure}[h]
    \centering
    \begin{subfigure}[t]{0.45\textwidth}
        \centering
        \includegraphics[width=7cm]{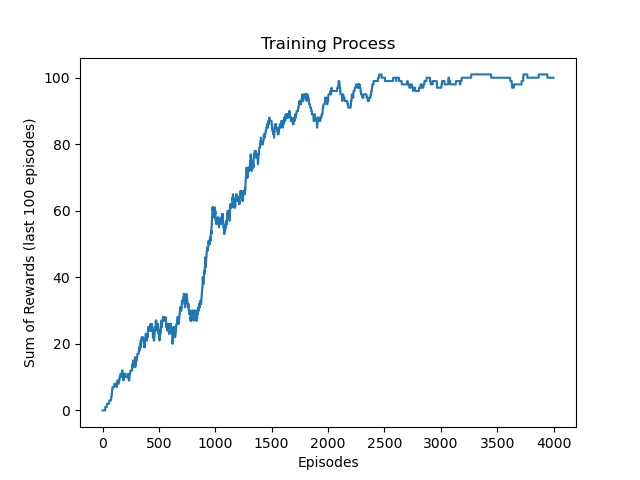}
        \caption{Training process of the instrumental policy}
        \label{fig:reas_resc_longer_path}
    \end{subfigure}
    \hfill
    \begin{subfigure}[t]{0.45\textwidth}
        \centering
        \includegraphics[width=7cm]{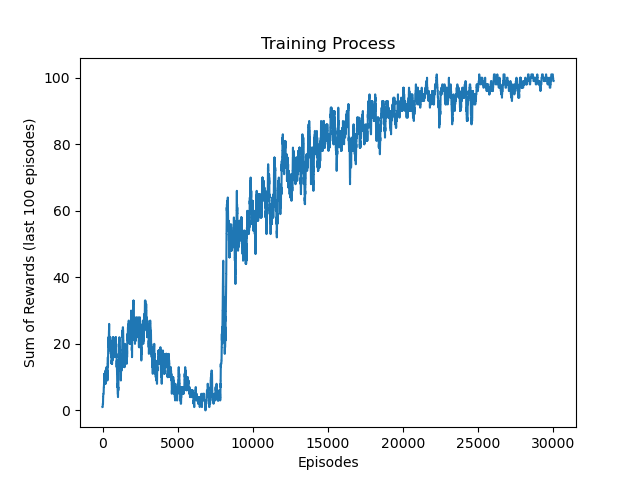}
        \caption{Training process on the ethical environment with $e_w = 2.39$.}
        \label{fig:ethicall_optimal_longer_path}
    \end{subfigure}
    \label{fig:drl_ethcically_longer_paht}
    \caption{Training progression illustrating the success in learning strategies to accomplish the instrumental goal}
\end{figure}

To illustrate a problem with the strict prioritization of moral duties over the instrumental goal, consider the behavior of the RBAMA, which successfully learned a rescuing strategy (\cref{fig:reas_resc_longer_path}) in a \textit{circular path simulation} featuring two bridges and a person (\cref{tab:env_params_longer_path}) in the need for help. The environment setup is designed to present a scenario in which the agent must choose between taking the shortest path over the upper shore to pull the person out of the water or a slightly longer route that first leads over the lower shore, allowing it to deliver the package and achieve its instrumental goal along the way (\cref{fig:ethicall_optimal_longer_path}). The environment is created such that the agent the person while being unable to get out of the water on its own is not drowning. Consequently, while taking the small detour is not ethically optimal, it likely could still be considered morally permissible -- offering the added benefit of significantly improving the agent’s instrumental performance. 

However, by design, the RBAMA learns to take the upper path -- that is, the shortest route -- to the person in the water. It learns to treat the morally relevant fact -- the fact that someone is in the water -- as action-guiding, subsequently handing control over to the rescuing network, which does not care about the RBAMA’s instrumental objective. While there is certainly room for debate about what constitutes the best overall course of action in this scenario, consistently prioritizing normative reasons over the instrumental goal effectively enforces a decision in favor of moral optimality. It prevents the RBAMA from being trained to take the longer, but potentially more efficient, path toward achieving its instrumental goal, as its behavior remains dominated by moral considerations.

The same holds for MOBMA trained on the \textit{circular path simulation} (\cref{fig:ethicall_optimal_longer_path}). By design, the procedure for determining the ethical weight for the environment guarantees that the optimal policy in the scalarized environment strictly prioritizes ethically optimal actions. As a result, the MOBMA -- like the RBAMA -- takes the upper path to the drowning person. 

Moreover, when the agent is placed next to its instrumental goal (\cref{fig:behind_goal}), the MOBMA actively avoids stepping onto the corresponding tile, instead moving around it. This behavior is likely a consequence of the training process’s termination condition: when the agent achieves its instrumental goal, the episode ends, preventing it from also fulfilling its moral task. Such learned behavior is undesirable when applied to a real-world scenario, where the agent would be expected to continuously navigate the environment -- picking up and delivering packages -- without being restricted to fulfilling its moral task only before completing its instrumental task. In contrast, the RBAMA does not learn to avoid fulfilling its instrumental goal, as the rescuing network is trained in a version of the environment that terminates only after the agent has rescued every person from the water. However, the behavior observed stems from the episodic nature of the training process, rather than from design decisions regarding the agents themselves. To avoid such unintended behavior, the training setup must ensure that the agent is not artificially discouraged from completing its instrumental goal before fulfilling its moral tasks.

\begin{figure}[h]
    \centering
    \begin{subfigure}[t]{0.45\textwidth}
        \centering
        \includegraphics[width=7cm]{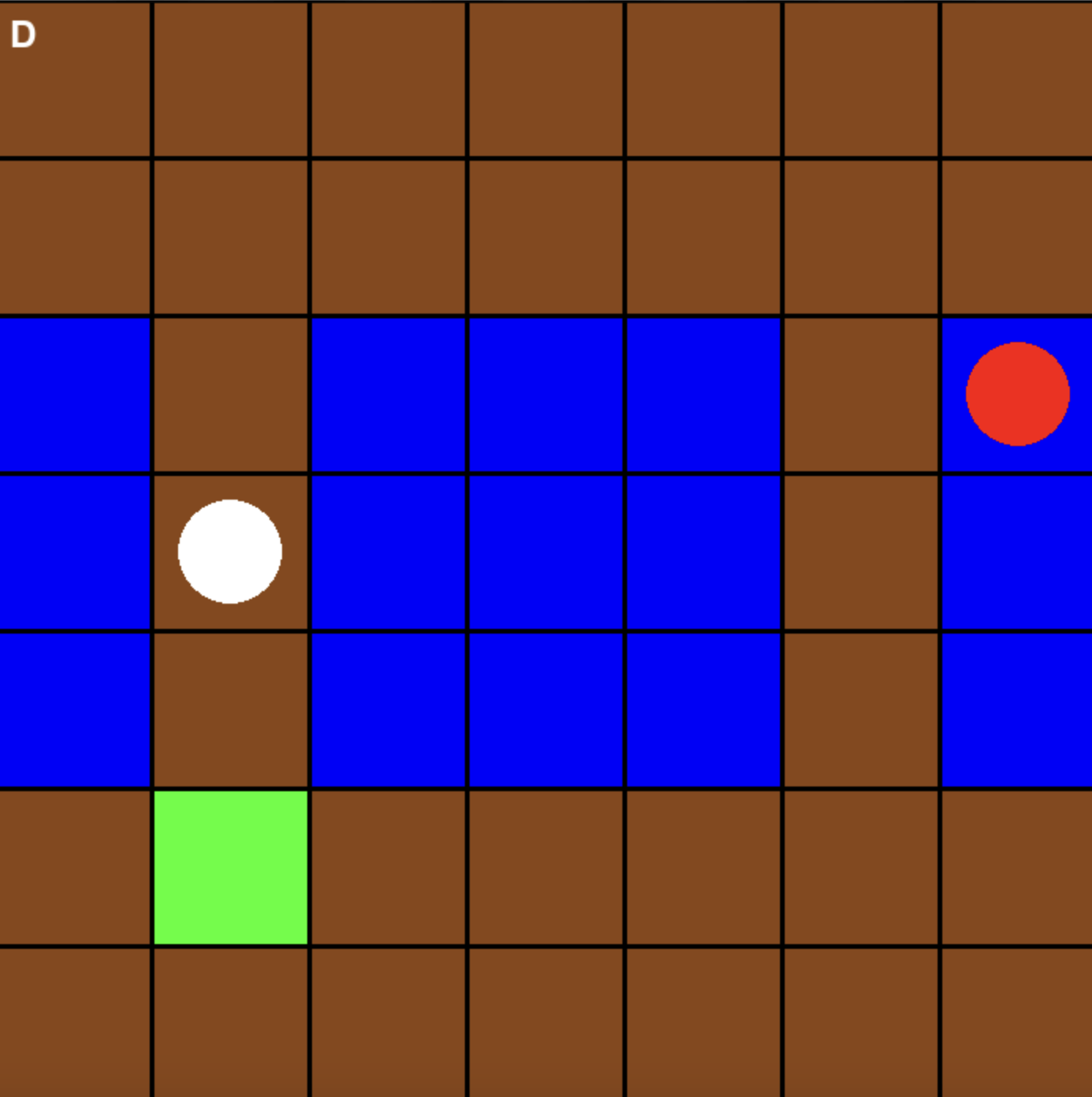}
        \caption{Choice between two paths to the drowning person}
        \label{fig:longer_path}
    \end{subfigure}
    \hfill
    \begin{subfigure}[t]{0.45\textwidth}
        \centering
        \includegraphics[width=7cm]{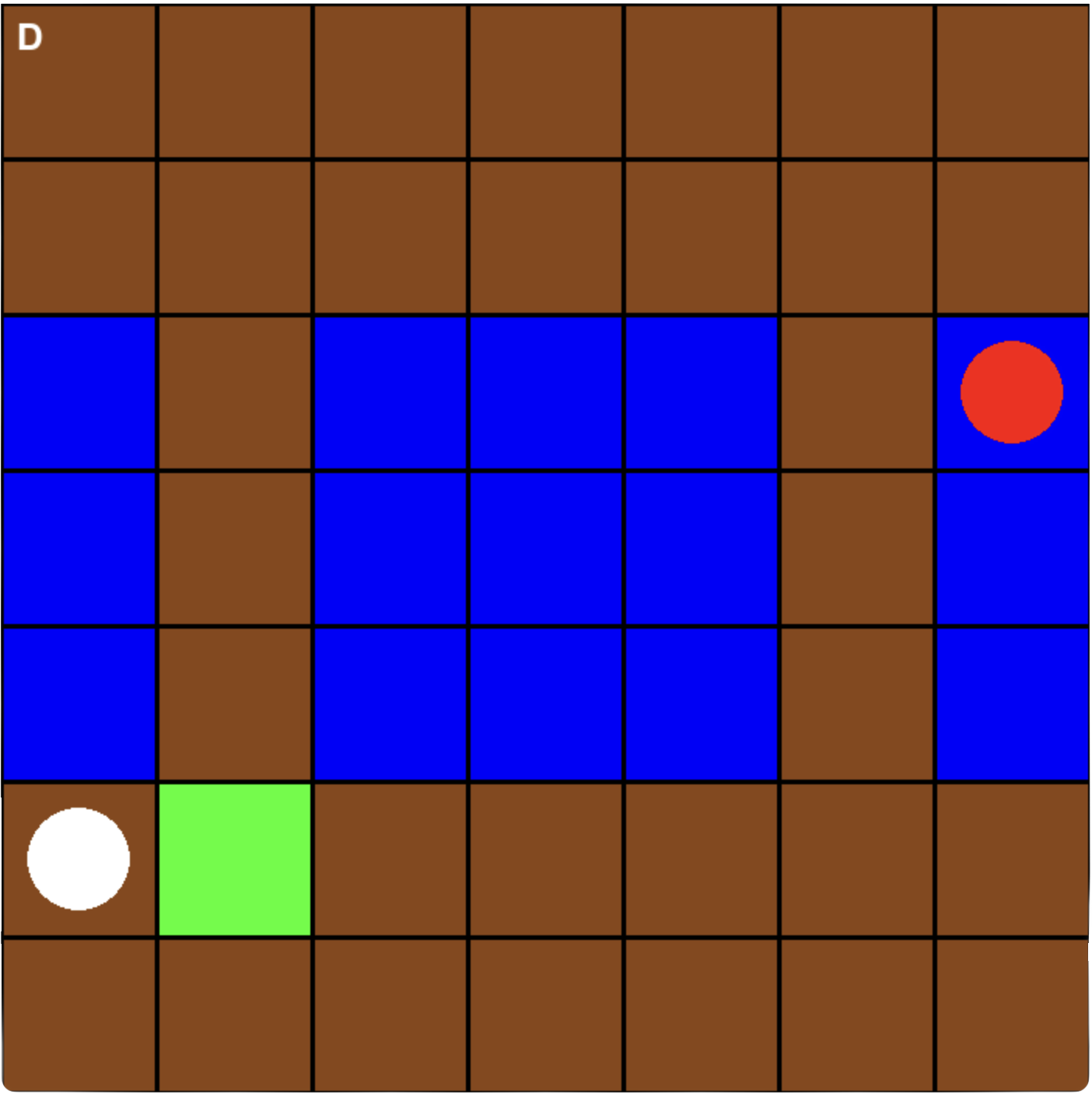}
        \caption{Agent one Step away from its instrumental goal}
        \label{fig:behind_goal}
    \end{subfigure}
    \label{fig:two_images}
     \caption{Two states in which the agent has to choose between fulfilling its instrumental task first and strictly prioritizing the fulfillment of its moral obligations}
\end{figure}

\section{Adaptability to Selective Instance Variations}\label{6Generalization}

As demonstrated in \cref{6stochEnv}, the RBAMA exhibits generalization capabilities in its moral decision-making -- specifically, by learning a prioritization of moral obligations that is not based on instance-specific dynamics such as the precise drowning time. This results from its moral decision-making being grounded in normative reasons, which abstract away such contingent features of the environment. While this highlights a significant strength of the RBAMA, another aspect of its architecture further contributes to its generalizability capabilities -- particularly with respect to its adaptability in learning how to fulfill tasks or comply with constraints in environments that require behavioral adjustments.

Central to this adaptability is the modular design of the RBAMA. It integrates separate components, each trained for a distinct function: the instrumental task of delivering the package, the moral task of rescuing persons from the water, and the detection of situations that risk violating the constraint not to push someone off the bridge. This separation of concerns enables more targeted interventions during retraining.

To empirically substantiate this architectural advantage, I conducted a series of experiments in which the RBAMA was first trained on one instance of the environment and subsequently transferred to another that required modified behavioral responses. The results demonstrate that only selective retraining of specific modules was needed for the RBAMA to effectively adapt to the new environment, confirming the practical benefit of its modular structure in supporting behavioral generalization. This advantage becomes particularly evident when contrasted with the retraining requirements of the MOBMA, which not only involve retraining the entire multi-objective policy but also necessitate rerunning the algorithm for constructing ethical environments for each individual instance.

However, before turning to the experimental results, a brief clarification is warranted. The environmental modifications introduced in the following experiments did not necessitate fundamentally different behavior for task fulfillment or constraint compliance. Adjustments -- such as relocating the location where individuals fall into the water -- could, in principle, have been accommodated by training the networks on a sufficiently diverse set of instances, particularly given that they were trained using deep reinforcement learning methods. The experiments were designed in this manner because the current test suite does not encompass training and evaluation scenarios that demand more substantial forms of generalization. Nonetheless, the experiments effectively illustrate the underlying advantage of the RBAMA’s modular architecture in supporting selective retraining. The structural separation of task-specific components and the localization of retraining procedures remain applicable even in more demanding generalization settings, thereby underscoring the scalability and flexibility of the overall design.

\subsection{The Advantage of the RBAMAs Modular Design}

In the first experimental setup, an RBAMA was first trained on a \textit{dangerous shore simulation}, a version of the environment  featuring two bridges and a dangerous spot on the lower shore, where a person strolls along and falls into the water when they step onto the perilous tile. (\cref{tab:env_params_base}). The rescue network was trained for 7,000 episodes, the bridge-guarding network for 50,000 episodes, and the instrumental network for 3,000 episodes. In addition, the RBAMA received feedback from the moral judge prioritizing the reason to rescue persons over the reason not to push persons off the bridge over 100 episodes. Evaluating the RBAMAs performance after training in the environment  yielded total rewards of $\mathbf{R}^{1000}_{\text{instr}} = 1000$, $\mathbf{R}^{1000}_{\text{push}} = -17$, and $\mathbf{R}^{1000}_{\text{resc}} = 536$ under $\textbf{count}_{\text{resc}}^{1000} = 537$ and $\textbf{count}_{\text{conflict}}^{1000} = 17$.  These results confirm the success of the training process, in particular demonstrating that the rescue network learned to consistently fulfill $\varphi_R$ -- it learned to reliably rescue the person who falls into the water at the dangerous spot (\cref{fig:old_drowning_spot}).

\begin{figure}[h]
    \centering
    \begin{subfigure}[t]{0.3\textwidth}
        \centering
        \includegraphics[width=\linewidth]{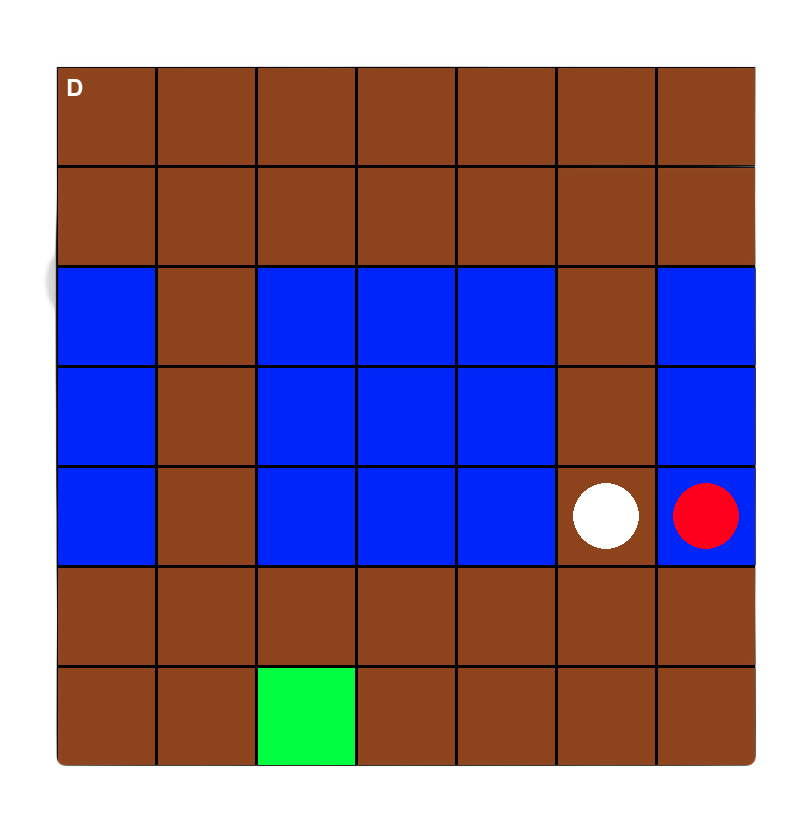}
        \caption{The RBAMA about to pull the person out of the water}
        \label{fig:old_drowning_spot}
    \end{subfigure}
    \hfill
    \begin{subfigure}[t]{0.3\textwidth}
        \centering
        \includegraphics[width=\linewidth]{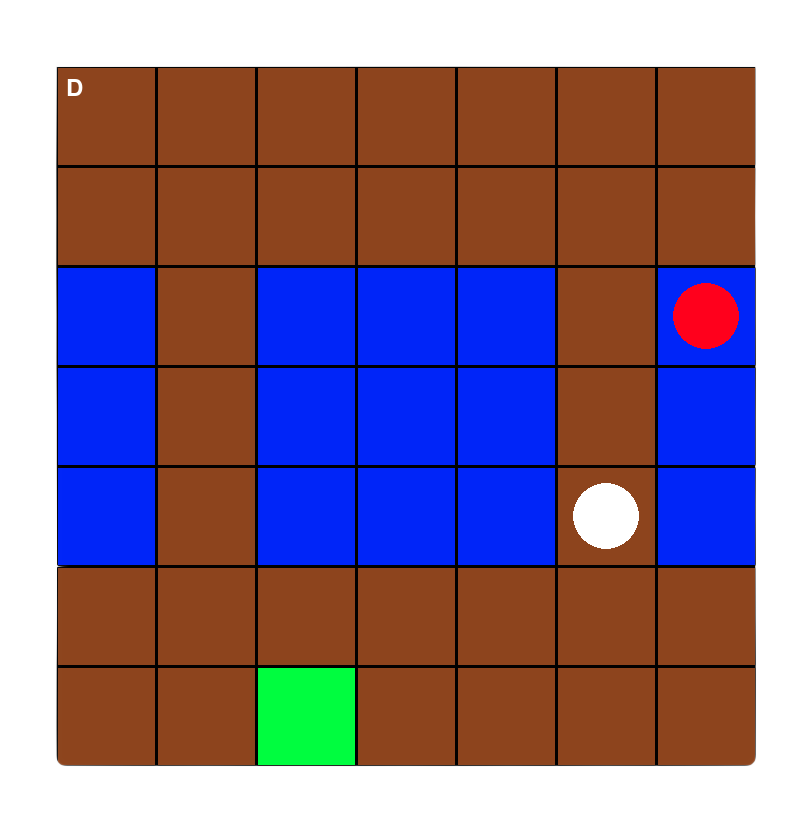}
        \caption{The RBAMA moving to the wrong position}
        \label{fig:drowning_spot_switched}
    \end{subfigure}
    \hfill
    \begin{subfigure}[t]{0.3\textwidth}
        \centering
        \includegraphics[width=\linewidth]{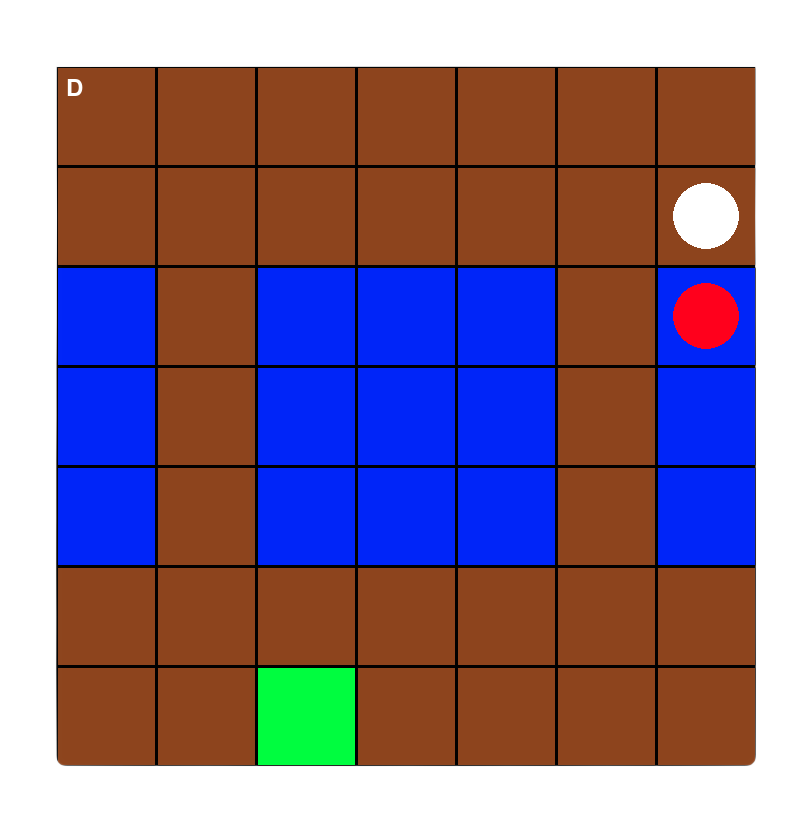}
        \caption{The RBAMA moving to the right position after replacing the rescuing net}
        \label{fig:Resc succcess}
    \end{subfigure}
    \caption{Retraining after changing the position of the drowning spot}
    \label{fig:resc_after_retraining}
\end{figure}

Subsequently, the RBAMA’s performance was tested in a slightly modified version of the \textit{dangerous shore simulation} -- the \textit{dangerous bridge simulation}(\cref{tab:env_params_ds1}) -- in which the dangerous spot is moved from the lower shore to the upper part of the right bridge. Test results of $\mathbf{R}^{1000}_{\text{instr}} = 1000$, $\mathbf{R}^{1000}_{\text{push}} = 0$, and $\mathbf{R}^{1000}_{\text{resc}} = 99$ with $\textbf{count}_{\text{resc}}^{1000} = 396$ and $\textbf{count}_{\text{conflict}}^{1000} = 0$ reveal, that the RBAMA trained on the original settings fails to reliably fulfill $\varphi_R$, which now requires it it move to the upper part of the bridge in order to rescue the person. Instead of moving toward the new drowning spot, the RBAMA continues to head toward the old one (\cref{fig:drowning_spot_switched}). Most likely, this resulted from the rescue network incorporating the path to the initial drowning spot into its rescue strategy. However, after training a new rescue network on the \textit{dangerous bridge simulation} for 7,000 episodes and integrating it into the RBAMA, the system demonstrated strong performance in the new environment. It successfully fulfilled $\varphi_R$ while adhering to $\varphi_C$, and consistently achieved its instrumental goal, reaching total rewards of $\mathbf{R}^{1000}_{\text{instr}} = 1000$, $\mathbf{R}^{1000}_{\text{push}} = 0$, and $\mathbf{R}^{1000}_{\text{resc}} = 417$ with $\textbf{count}_{\text{resc}}^{1000} = 396$ and $\textbf{count}_{\text{conflict}}^{1000} = 0$ -- a result of the newly trained rescue network guiding the RBAMA to the new drowning spot (\cref{fig:resc_after_retraining}). Notably, no adjustments were necessary for either the instrumental network or the bridge-guarding network. Since the position of the instrumental goal remained unchanged, the existing strategy of the instrumental policy continued to function effectively. The instrumental network -- being trained on random resets in the \textit{dangerous bridge simulation} -- had already learned to navigate to the goal from any starting point -- including all possible positions where control is handed back to the instrumental policy after the rescuing network leads the RBAMA to the fulfillment of $\varphi_R$. Furthermore, as only the right bridge is crossed by a person in both the \textit{dangerous shore simulation} and the \textit{dangerous bridge simulation}, the bridge-guarding network consistently correctly assessed the risk of violating $\varphi_C$.

This seamless process of replacing the networks responsible for handling specific changes between versions of the bridge environment becomes more challenging when training an RBAMA on the \textit{blocked left bridge simulation} (\cref{tab:env_left_bridge_blocked}) and then attempting to deploy it on the \textit{blocked right bridge simulation} -- a modification in which the person is moved to the right bridge. 

It has already been shown that for training an instrumental network in the \textit{blocked left bridge simulation}, the RBAMA needs to be shielded by the bridge-guarding network already during the training process (cf. \cref{6training_nets_to_work_together}). Specifically, the bridge-guarding network consistently identifies a high risk associated with actions leading onto the left bridge, where a person is permanently positioned. This prevents the instrumental network from attempting to cross the left bridge. With this shielding in place, the instrumental network learns that it is permanently restricted from entering the left bridge and, as a result, adopts a policy that guides it across the right bridge instead.  Consequently, the instrumental network learns an optimal policy that reliably selects the right bridge as its path toward the goal, as the left bridge is rendered permanently inaccessible by the bridge guarding network. When deploying the RBAMA comprising the bridge-guarding network and the instrumental network trained on the \textit{blocked right bridge simulation} within the \textit{blocked right bridge simulation}, the bridge-guarding network fails to assign a high risk to entering the right bridge. This is because it was not trained on the new configuration, where a person now stands permanently on the right bridge. As a result, the instrumental network executes its previously learned policy without accounting for the updated position of the person leading to the RBAMA crossing the right bridge and thereby pushing the person off. This undesirable behavior is also reflected in degraded performance: the RBAMA achieves total rewards of $\mathbf{R}^{1000}_{\text{instr}} = 1000$ and $\mathbf{R}^{1000}_{\text{push}} = -438$ with $\textbf{count}_{\text{conflict}}^{1000} = 0$ showing that it violates $\varphi_C$ without detecting a conflict. 

To prevent this behavior, the bridge-guarding network needs to be retrained so it stops the RBAMA from entering the right bridge while allowing it to cross the left one. However, merely retraining the bridge-guarding network and deploying it within the RBAMA results in the RBAMA failing to reach its delivery location. While the bridge-guarding network prevents the instrumental network from entering the right bridge, it still attempts to take that route. With its preferred action being blocked, the RBAMA ultimately ends up wandering aimlessly into the water (\cref{fig:agent_in_water}) when its position is initially set to be on the upper shore. This is again reflected in the performance of the RBAMA. The total rewards of $\mathbf{R}^{1000}_{\text{instr}} = 586$ and $\mathbf{R}^{1000}_{\text{push}} = 0$ with $\textbf{count}_{\text{conflict}}^{1000} = 0$ show that the RBAMA -- while never pushing persons off the bridge -- only succeeds in reaching the delivery location in half the test runs. It is only after additionally retraining the instrumental network while being shielded by the new bridge-guarding network that the RBAMA learns to take the left bridge and successfully reaches its goal  (\cref{fig:left_bridge_after_retraining}), which is also reflected in the test episodes yielding $\mathbf{R}^{1000}_{\text{instr}} = 1000$ and $\mathbf{R}^{1000}_{\text{push}} = 0$ with $\textbf{count}_{\text{conflict}}^{1000} = 0$. The experiment demonstrates that, although not every change in the environment’s configuration requires retraining all networks, the interactions between the individual networks must nonetheless be considered when deciding on selective retraining.

The RBAMA's modular architecture, which integrates separate reasoning and behavior networks, also holds the distinct advantage of enabling selective retraining when it learns new kinds of normative reasons during deployment. For instance, consider an RBAMA initially trained within the \textit{dangerous shore simulation} without receiving feedback such that it incorporates $\delta_2$ -- the default rule under which $\varphi_R$ is derived -- in its reason theory. If the RBAMA subsequently learns $\delta_2$ through feedback, the required update is similar to repositioning a dangerous spot within the environment: only a rescuing network needs to be trained. There is no necessity to retrain the other networks, as the modular design ensures that learning to comply with $\varphi_R$ does neither influence how $\varphi_C$ can be properly enforced nor does it affect the effectiveness of the acquired strategy for reaching the delivery location. However, analogous to the scenario where a person is repositioned on the bridge, if the RBAMA learns $\delta_1$ -- the default rule under which $\varphi_C$ is derived -- after the initial training phase, it would require not only training a bridge-guarding network responsible for enforcing conformance with $\varphi_C$, but also retraining its instrumental network to ensure that it can find a route to the delivery location that remains compliant with the newly imposed moral shield. Importantly, determining which networks require retraining in response to an update of the RBAMA's reasoning -- or in response to particular changes in the environment -- may prove to be a non-trivial challenge.

\begin{figure}[h]
    \centering
    \begin{subfigure}[t]{0.3\textwidth}
        \centering
        \includegraphics[width=\linewidth]{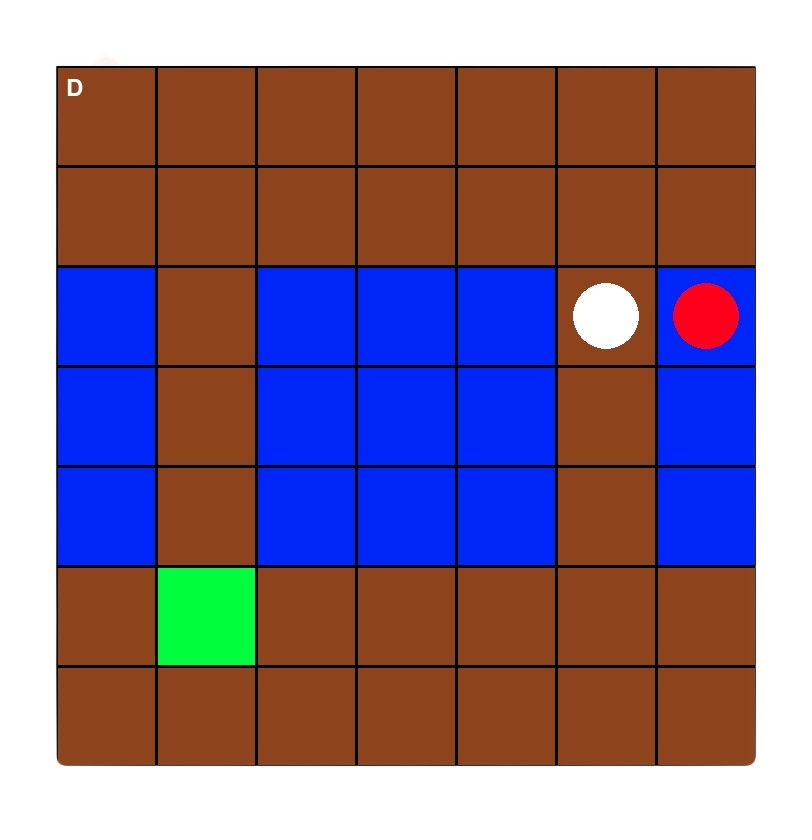}
        \caption{Agent Pushes Person Into Water}
        \label{fig:agent_pushes_person_new_spot}
    \end{subfigure}
    \hfill
    \begin{subfigure}[t]{0.3\textwidth}
        \centering
        \includegraphics[width=\linewidth]{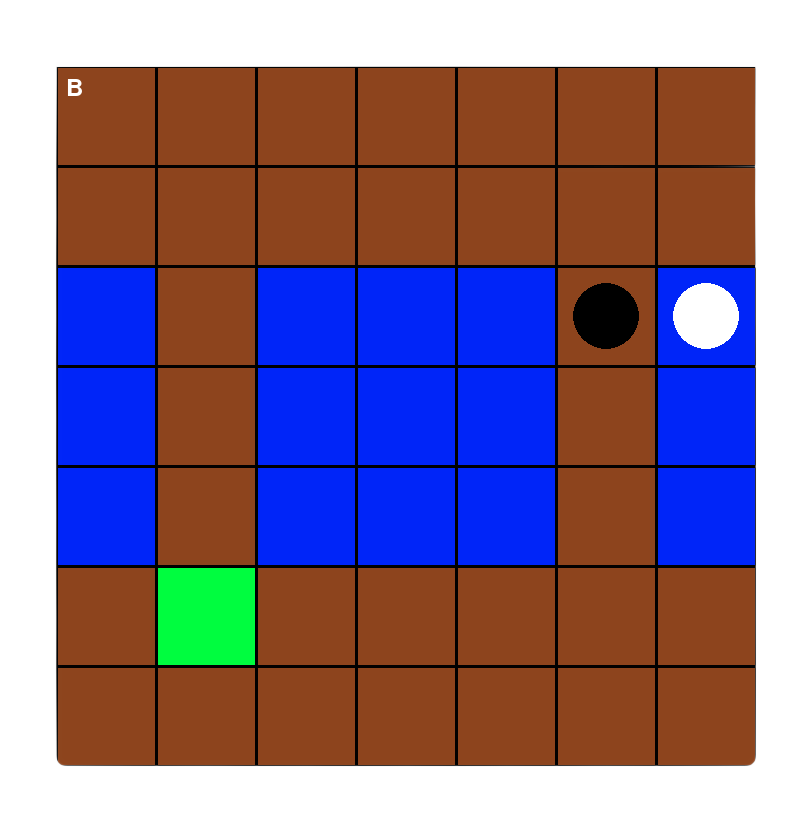}
        \caption{Instrumental network trained on old shield}
        \label{fig:agent_in_water}
    \end{subfigure}
    \hfill
    \begin{subfigure}[t]{0.3\textwidth}
        \centering
        \includegraphics[width=\linewidth]{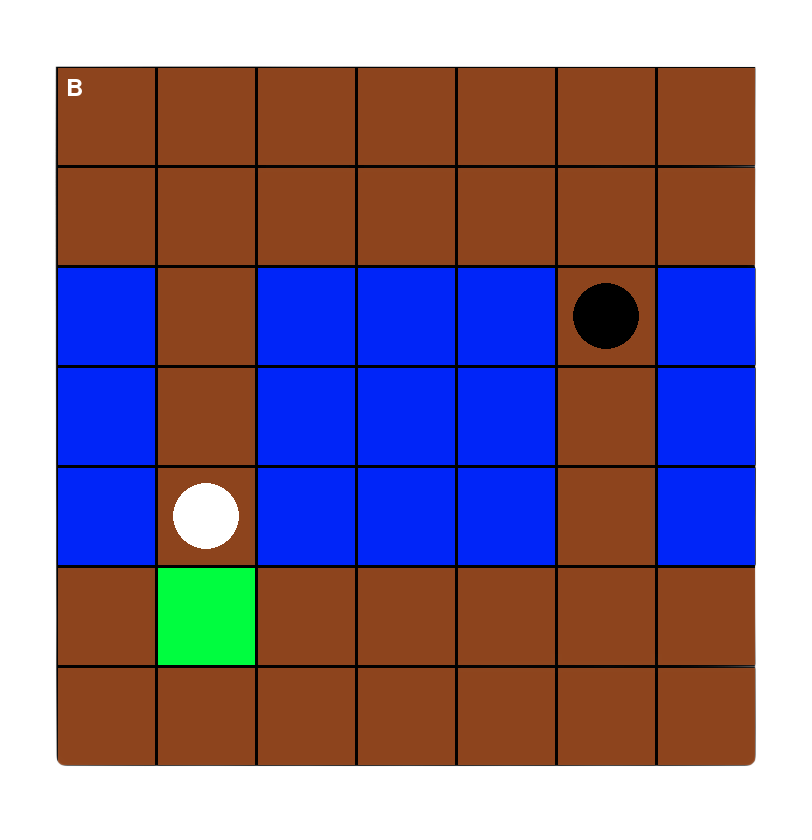}
        \caption{Instrumental network trained on new shield}
        \label{fig:Resc succcess}
    \end{subfigure}
    \caption{Retraining after moving the static person from the left to the right bridge}
    \label{fig:left_bridge_after_retraining}
\end{figure}

\subsection{From Scratch Again}

Turning towards the generalization capabilities of a MOBMA, two key factors limit its flexibility: i) Any changes in the details of the environment can alter the ethical weight, requiring the entire process for calculating these weights to be repeated if the MOBMA needs to be trained on another instance; and ii) The MOBMA relies on a single neural network learning a multi-objective policy, which means there’s no possibility of retraining individual components independently.

The algorithm for generating ethical environments returned an ethical weight of 1.95 for the \textit{dangerous shore simulation}. Training a MOBMA on the corresponding scalarized environment for 30,000 episodes and subsequently evaluating its performance over 1,000 test episodes resulted in total returns of $\mathbf{R}^{1000}_{\text{instr}} = 1000$, $\mathbf{R}^{1000}_{\text{push}} = 0$, and $\mathbf{R}^{1000}_{\text{resc}} = 877$ with $\textbf{count}_{\text{resc}}^{1000} = 877$. These results indicate that the MOBMA successfully learned a policy approximating the ethically optimal policy within this environment. However, when evaluating the MOBMA on the \textit{dangerous bridge simulation}, its performance degraded significantly, yielding $\mathbf{R}^{1000}_{\text{instr}} = 1000$, $\mathbf{R}^{1000}_{\text{push}} = -7$, and $\mathbf{R}^{1000}_{\text{resc}} = 61$ with $\textbf{count}_{\text{resc}}^{1000} = 791$. This shows that the MOBMA fails to fulfill its moral objective of rescuing the drowning person under the modified environment dynamics. To ensure ethical optimality in the \textit{dangerous bridge simulation}, a recalculation of the ethical weight was necessary. Rerunning the iterative process for ethical weight computation produced an updated weight of 2.2. Training a new MOBMA on a scalarized environment using this revised weight led to total returns of $\mathbf{R}^{1000}_{\text{instr}} = 1000$, $\mathbf{R}^{1000}_{\text{push}} = 0$, and $\mathbf{R}^{1000}_{\text{resc}} = 817$ with $\textbf{count}_{\text{resc}}^{1000} = 817$, confirming the training to be successful. Crucially, neither the ethical weight calculated for the original environment can be reused to train an MOBMA on the modified environment, nor can the MOBMA trained on the initially created ethical environment be directly transferred. In contrast, applying the RBAMA to the updated version of the environment required retraining only the rescuing network, due to its modular architecture.

The same procedure must be applied when transitioning from the \textit{blocked left bridge simulation} to the \textit{blocked right bridge simulation}. The ethical weight computed for the \textit{blocked left bridge simulation}, when considering the full state space, is 1.95. Training a MOBMA on the scalarized environment corresponding to this weight for 30,000 episodes results in total returns of $\mathbf{R}^{1000}_{\text{instr}} = 1000$, $\mathbf{R}^{1000}_{\text{push}} = 0$, indicating that the MOBMA successfully learned an ethically optimal policy in this specific configuration. However, when deploying the MOBMA in the \textit{blocked right bridge simulation}, the MOBMA’s performance deteriorates significantly. It pushed persons off the bridge, resulting in total returns of $\mathbf{R}^{1000}_{\text{instr}} = 477$, $\mathbf{R}^{1000}_{\text{push}} = -195$. This outcome reflects not only frequent violations of the moral constraint against pushing person off a bride but also a failure with respect to reliably achieving its instrumental goal. This failure likely can be attributed to the MOBMA's reliance on a single policy -- trained to balance multiple objectives simultaneously (i.e., the moral duties of not pushing and rescuing, alongside the instrumental task of reaching the goal). The joint optimization over moral constraints and instrumental rewards likely leads to a strategy highly sensitive to the specific state distributions encountered during training. To ultimately obtain an ethically optimal policy for the \textit{blocked right bridge simulation}, the ethical weight must be recalculated. Running the ethical weight computation process on the updated environment yields a value of 2.49. Training an MOBMA on the corresponding scalarized environment for 30,000 episodes then again results in strong performance of $\mathbf{R}^{1000}_{\text{instr}} = 1000$, $\mathbf{R}^{1000}_{\text{push}} = 0$. 

\section{Scaling Up: Testing on A Larger State Space}\label{6ScalingUp}

The tests conducted so far were performed on small versions of the bridge environment setting $W=7$ and $H=7$ and featuring a relatively sparse distribution of persons on the map. To evaluate the performance of an RBAMA in comparison to that of a MOBMA in environments with larger state spaces, I trained and tested the models on a map of size $W = 9$ and $H = 9$, while further increasing complexity through a denser configuration by including three bridges and four persons moving across the map (\cref{tab:env_large_map}).

\begin{figure}[h]
    \centering
    \begin{subfigure}[t]{0.45\textwidth}
        \centering
        \includegraphics[width=\linewidth]{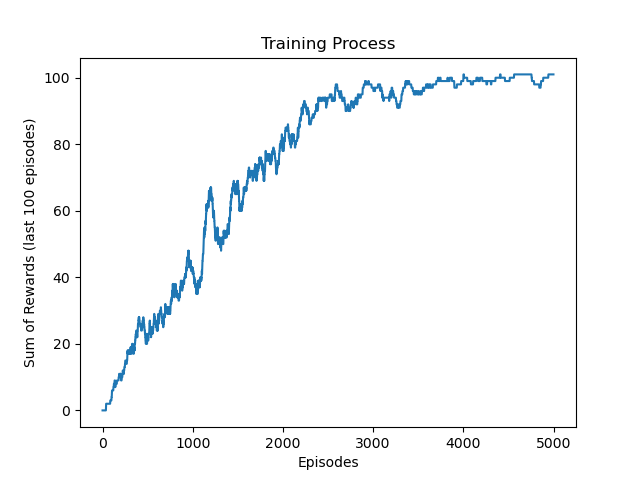}
        \caption{Training process of the instrumental network}
        \label{fig:training_instr_policy_large}
    \end{subfigure}
    \hfill
    \begin{subfigure}[t]{0.45\textwidth}
        \centering
        \includegraphics[width=\linewidth]{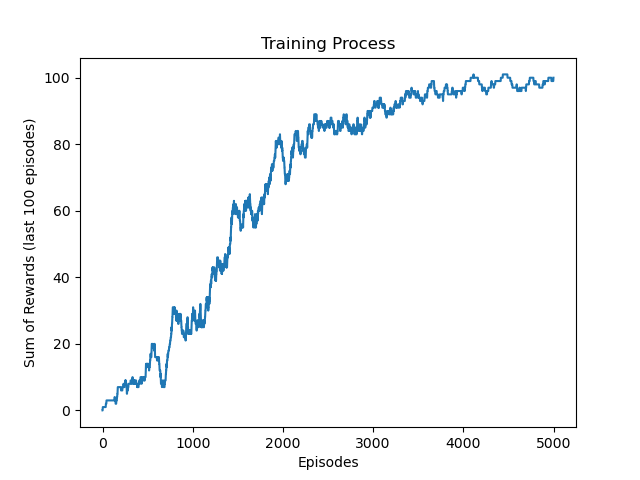}
        \caption{Training process of the rescuing network}
        \label{fig:drowning_spot_switched}
    \end{subfigure}
    \caption{Training the RBAMA on a more complex scenario}
    \label{fig:training_large_map}
\end{figure}

In the RBAMA, the rescue network was trained for 5,000 episodes, the bridge-guarding network for 100,000 episodes, and the instrumental network for 5,000 episodes, as shown in \cref{fig:training_large_map}. Once training was completed, the RBAMA was deployed to navigate the environment over the course of 100 episodes, during which it received evaluative feedback from the moral judge. Across the subsequent test episodes, a total of 602 persons fell into the water ($\textbf{count}_{\text{resc}}^{1000} = 577$), and the RBAMA detected conflicts between $\varphi_C$ and $\varphi_R$ on 40 occasions ($\textbf{count}_{\text{conflict}}^{1000} = 40$). Cumulative rewards of $\mathbf{R}^{1000}_{\text{instr}} = 1000$, $\mathbf{R}^{1000}_{\text{push}} = -40$, and $\mathbf{R}^{1000}_{\text{resc}} = 568$ confirm the RBAMA’s effective performance both in adhering to its moral obligations and in achieving its instrumental objectives within the more complex environment configuration. By contrast, when applying the algorithm proposed by Rodriguez-Soto et al. to generate an ethical environment from an expanded version of the bridge environment, the process of calculating the ethical weight exceeded a predefined timeout threshold of two hours. This highlights a critical limitation of the approach, as it already fails to scale to environments with only moderately increased state space size.

    \chapter{Discussion}\label{7Discussion}

\section{Philosophical Reflections on Motivation and Results}\label{7motivationRevisited}

In the experiments whose results are presented in \cref{6results}, an RBAMA was trained and evaluated in several instances of an environment inspired by the real-world scenario introduced in \cref{2challenges}, in which the agent faces a moral conflict between two duties: rescuing a person who has fallen into the water and is at risk of drowning, and ensuring that it does not push another person off the bridge. Specifically, the agent was trained on an instance with fully deterministic transition dynamics and tested both on this instance and on two instances, in which randomness was introduced to the drowning time. The agent received feedback from a moral judge, through which it integrated two default rules: a default rule $\delta_1$ that connects the morally relevant fact that there is a person on the bridge to a moral obligation $\varphi_C$ not to push them into the water, and a default rule $\delta_2$ connecting the fact that there is a person in the water moral obligation $\varphi_R$ to rescue them as quickly as possible. Moreover, the moral judge-- assumed to lack knowledge about the specific drowning time and the probability for pushing persons off the bridge -- taught the agent to strictly prioritize rescuing the drowning person,i.e., to learn the ordering $\delta_2 > \delta_1$. During testing, the RBAMA demonstrated that it had learned to reason in accordance with the moral judge and to act in conformity with the overall moral obligations it derived. The following section revisits the desiderata of trustworthiness, moral robustness, and moral justifiability as introduced in \cref{2reasonsPhilo}, examining how the experimental results underscore the inherent advantage of the RBAMA in  serving these criteria.

\subsection{Demonstrated Trustworthiness}

The RBAMA, as trained and deployed in \cref{6results}, represents a successful implementation of the original vision of developing an artificial moral agent whose decision-making is grounded in normative reasons. Moreover, it was demonstrated that the RBAMA also successfully \textit{internalized} the normative reasoning of the \textit{moral judge}, which was implemented as a dedicated module to teach the agent moral directives for its behavior. Assuming that the judge’s reasoning constitutes a \textit{sound normative reasoning}, the RBAMA thereby acquired \textit{reliable competence} in moral decision-making. In addition, the RBAMA consistently acted in accordance with the moral obligations it derived, thereby demonstrating not only moral competence but also an enforced \textit{reliable willingness} to follow the moral obligations it recognized as overall binding.  

Under a plausible account of trustworthiness -- adapted to artificial agents as outlined in \cref{2trustworthiness} -- this combination of reliable competence and reliable willingness provides a sufficient basis for considering the RBAMA \textit{morally trustworthy}. Moreover, the visualization of the RBAMA’s reasoning process allows for direct \textit{trustworthiness assessment} \cite{schlicker2025we}, providing explicit evidence of its competence and further supporting the case for \textit{fostering justified trust} in the agent. 

\subsection{Moral Robustness Against Morally Unsubstantiated Behavioral Shifts}\label{7moralRobustness}

In \cref{2robustness}, moral robustness was defined with reference to the established technical term of robustness as the guarantee that morally significant changes in the agent's behavior are proportional to morally significant changes in the circumstances. Achieving this property is particularly challenging in other approaches, such as conventional MORL frameworks, where discontinuities in the agent’s learned behavior, are inherent to the method. Crucially, the point at which a behavioral shift occurs is indirectly determined by the design of the reward function and the dynamics of the environment. 

For example, consider the experiments presented in \cref{6stochEnv}. While the RBAMA is guaranteed to prioritize rescuing drowning persons over ensuring that it does not push anyone off the bridge, an agent trained using conventional MORL methodology—i.e., learning which course of action to take based on expected returns as determined by the reward function—\textit{could} unexpectedly adopt different behavior when the probability of drowning, $P_{\text{drown}}$, shifts from $0.1$ to $0.11$, even though such a change likely lacks moral significance. If the design of the reward function is not carefully considered, this shift in behavior might merely be an unintended side effect. However, even if one attempts to implicitly set a threshold for a behavioral shift at a specific value, directly designing the reward structure to establish such a precise point of reprioritization remains extremely difficult, if not impossible, as discussed in \cref{7rewardFunction}. While reinforcement learning methodologies such as inverse reinforcement learning (cf. \cite{ng2000algorithms, abbeel2004apprenticeship, ziebart2008maximum}) and reinforcement learning from human feedback (cf. \cite{knox2008tamer, christiano2017deep}) offer ways to infer reward functions rather than imposing them top-down, and might thus help mitigate the problem, the threshold remains a critical parameter with significant influence -- in particular also to the agent’s moral robustness -- and arguably ought to be subject to deliberate and explicit control, which is further discussed in \cref{7relevanceConsequence}. 

In this regard, the reason-based approach offers a distinct advantage: it ensures that no abrupt behavioral shift occurs due to thresholds that are morally irrelevant, since expected outcomes do not factor into the RBAMA’s reasoning. However, the insensitivity of the RBAMA’s moral decision-making to estimated action consequences can also be seen as a significant limitation, as further discussed in \cref{7probs}.

\subsection{On the Moral Justifiability of the RBAMAs Actions}

The RBAMA is, by design, guided in its moral decision-making by normative reasons. As such, its architecture provides a foundation for rendering its actions morally justifiable, based on the plausible claim that \textit{if an agent’s action is supported by normative reasons, then the action is morally justifiable} (see \cref{2justifiability}). As the RBAMA was trained by a moral judge to adopt the default rules $\delta_1$ and $\delta_2$, along with the prioritization $\delta_2 > \delta_1$, it bases its moral decision-making, and consequently its behavior, on a reason theory composed of these rules and their prioritization, which arguably give ground to \textit{sound normative reasoning}. As a result, its actions can be seen as supported by normative reasons and are therefore \textit{morally justifiable}. This is particularly significant in scenarios involving conflicting moral obligations, where the RBAMA consistently prioritizes the duty to rescue based on this sound normative reasoning, thereby ensuring that its actions remain morally justifiable even under such critical circumstances.

\section{The Reason-Based Framework Under Scrutiny}
It has been argued that the RBAMA, as trained and evaluated in the discussed instances of the bridge simulation, exhibits trustworthiness, moral robustness, and moral justifiability in its actions. Crucially, however, these claims rest on two underlying assumptions: first, that there is a clear understanding of what constitutes sound normative reasoning; and second, that the reduction of moral situations to propositional representations fully captures all morally relevant aspects. The following section examines these foundational assumptions and discusses what would be required either to substantiate them or to revise the framework in a way that renders them dispensable.

\subsection{What Makes Normative Reasoning Sound?} 
When claiming that the agent exhibits moral competence in its decision-making it is assumed that the agent has acquired \textit{sound normative reasoning}.  Crucially, conclusively determining what constitutes sound normative reasoning would presuppose access to an ethical ground truth. However, this does not undermine the reason-based approach toward building AMAs. Rather than aiming to ensure that RBAMAs arrives at morally correct decisions in an absolute sense, the aim is to teach them normative reasoning which guarantees that their actions satisfy ethical desiderata such as moral competence, trustworthiness, and justifiability. The RBAMA is designed to fulfill these desiderata without relying on any particular ethical standpoint presumed to reflect an ethical ground truth. More concretely, it is equipped with the capability to \textit{learn} meeting them -- a learning process guided by a moral judge as a moral authority upon whom the quality of the agent’s reasoning ultimately depends. 

Currently, the moral judge is implemented as a module that provides rule-based feedback. However, setting such rules at the design phase, stands in tension with the foundational idea that constructing an RBAMA should not require access to an ethical ground truth. Looking ahead, the moral judge is envisioned as a human -- or a group of humans -- who make case-based assessments about what constitutes the morally right action in a given situation, and which reasons support that decision. However, not just any individual or group should have the authority to shape an RBAMA’s reasoning according to their own moral convictions, thereby imposing their particular view of what counts as sound normative reason on others. Crucially, relying on public opinion -- shaped by vague intuitions or prevailing sentiments -- as the basis for feedback raises the further concern that the system may derive its legitimacy solely from popular approval. This, in turn, risks entailing an implicit commitment to ethical relativism as its metaethical foundation -- a commitment that could remain unacknowledged and unexamined. 

The general question of the assignment of moral authority for determining the morality upheld by an AMA, and under what procedures such decisions should be made, has already been addressed in the literature (cf. \cite{gabriel2020a, gabriel2025matter}).  Applications of social choice theory, for instance, frame the alignment problem in general as one of aggregating preferences or reaching agreement through voting and public deliberation \cite{gabriel2020a, baum2020social, conitzer2024social}. This body of work also provides a foundation for deciding who should be appointed as a moral judge. In addition to these considerations, one concrete way to mitigate the risk of teaching RBAMAs reasoning that does not qualify as sound normative reasoning is to involve ethicists as experts in the feedback process. Their role could help ensure that the agent’s reasoning is anchored in well-considered normative standards. This approach could also be complemented by not starting with an entirely empty framework. Instead, the agent could be pre-equipped with a set of initial reasons and an initial order --  possibly derived from principles supported by a wide scope of ethical theories -- that provide a rough normative orientation. This reason-theory could then serve as a basis for further refinement and expansion through the feedback process.

These reflections while offering some possible direction do not aim to offer a complete solution, but instead primarily serve to highlight the depth of the problem posed by the lack of clarity regarding what constitutes a good normative reason. Nonetheless, in order to substantiate any claims about the moral justifiability of the RBAMA’s actions, it remains essential to address the question of what qualifies as a good reason and who should have authority as moral judge.

\subsection{Neglecting Consequences and Their Probabilities}\label{7relevanceConsequence}

In the current implementation, the RBAMA is only able to process normative reasons whose premises consist of morally relevant propositions; that is, probabilities concerning action outcomes do not factor into its moral decision-making. For instance, when confronted with the moral dilemma, which was discussed throughout this work, the reason for pulling a person out of the water is consist in the morally relevant fact of there is a person in the water, which the agent is taught to strictly prioritize over its conflicting reason to not push persons off the bridge. Considerations about the potential consequences of acting immediately versus waiting a time step are not part of the reasoning process. As previously discussed, this restriction offers an advantage in terms of moral robustness: the RBAMA's course of action is not determined through a threshold that is implicitly set by the design of the reward-function. Instead, if it was trained to adhere to the reason theory provided by a moral judge, it reliably follows the established priority of reasons -- that is, it consistently prioritizes $\delta_R$ over $\delta_C$. 

However, arguably, the RBAMA's should, in principle, be capable of taking estimated action consequences into account. Consider, for example, the test runs conducted on the stochastic instance of the bridge environment, in which the probability of a person drowning after each time step was set to $P_{\text{drown}} = 0.3$. Based to the corresponding return values, which show, that the RBAMA fails to successfully rescue the drowning person in the majority of the test runs, one could argue -- assuming that the probability distribution reflects real-world stochastic properties rather than merely instance-specific dynamics of the training environment -- that such a shift in the probability distribution should influence how the RBAMA prioritizes among its moral obligations. 
Thus, while implicitly introducing a threshold for prioritizing between $\varphi_C$ and $\varphi_R$ is highly problematic -- since such a thresholds is unlikely to be morally grounded -- one might nonetheless maintain that, in principle, a morally significant threshold could exist.

Insisting on a strict prioritization among moral obligations, may invite criticism akin to that leveled against deontological ethics under the charge of fanaticism -- namely, that strict adherence to deontic rules can in the worst case lead to moral catastrophes \cite{sep-ethics-deontological}.\footnote{This can be illustrated by a well-known example: Suppose a terrorist has planted a bomb in a crowded city, set to detonate soon and kill thousands. The only way to discover its location is to torture the captured suspect. According to strict deontological ethics (e.g., Kantianism), torture is categorically forbidden, regardless of the consequences. Even if torturing the suspect would clearly prevent mass casualties, the deontologist maintains that violating the moral rule is impermissible. Fanaticism, in this context, refers to the refusal to violate a moral rule even when doing so would prevent a moral catastrophe.} As the motivation for developing an RBAMA was grounded in the notion that reasons are not inherently tied to any particular ethical theory, thereby enabling the construction of an open moral framework, the framework should in principle allow for reasoning based on consequentialist intuitions -- i.e., reasoning, in which expected outcomes of actions play a role. After all, whether presumed action consequences should indeed count as morally relevant ought to remain a decision left to the moral judge as a legitimate authority. However, the current implementation of the RBAMA does -- at least not straightforwardly -- support the integration of such considerations. Possible directions for addressing this limitation are discussed in \cref{7probs}.

\section{Generalization Capabilities in Identifying the Right Course of Action: A Conceptual Advantage of Reason-Based Artificial Moral Agents}\label{7behaviorComparison}

The experiments conducted in the deterministic moral dilemma simulation revealed a striking behavioral difference between the RBAMA and the MOBMA in how they prioritized their moral obligations within the dilemma state that links to the a priori comparison of the paradigms under which they operate as discussed in \cref{2introInformalRewardBased} and \cref{2reasonsPhilo}. This section explores how the RBAMA’s reason-based moral decision-making yields better generalizability capabilities by laying out the conceptual basis for better avoidance of overfitting -- a risk present in the MOBMA approach.

\subsection{Learning the Wrong Lesson: Instance-Specific Overfitting in Moral Prioritization}\label{7overfittingMoralPrio}

When confronted with the conflict between its moral obligations, the RBAMA immediately moved to rescue the drowning person, whereas the MOBMA learned to wait until the bridge was clear before approaching the person in need. The RBAMA’s decision-making was grounded in the moral duties it had learned to recognize as overall binding, as specified by its reason theory. It was explicitly taught which reasons to endorse and followed the prioritization of obligations it had acquired -- specifically, to always prioritize the duty to rescue over the duty to avoid pushing someone off the bridge, an ordering that can be regarded as plausible, given the assumption that the drowning time is unknown to the moral judge in its role of evaluating the RBAMA's behavior. In contrast, the MOBMA developed a multi-objective policy in which the prioritization of competing moral duties was guided solely by expected outcomes. Consequently, its decision -- whether to wait on the bridge or immediately move toward the drowning person -- was driven by the expected return, as shaped by the structure of the reward function and the dynamics of the environment. Specifically, the environment’s \textit{instance-specific} dynamics were configured such that, in the moral dilemma state, the agent had sufficient time to wait for the person on the bridge to move out of the way and still reach the drowning person in time. Crucially, the MOBMA learned to integrate these \textit{instance-specific} dynamics into its strategy for maximizing expected return -- effectively uncovering and responding to a hidden state variable: the drowning time. This likely caused its multi-objective policy to prioritize the moral obligation to avoid pushing someone off the bridge over the obligation to rescue the person from the water, which then guided the MOBMA's overall behavior. 

How should the difference in the prioritization of moral obligations be evaluated? When considering only the return during the test phase in this specific instance, the MOBMA appears to perform better: it successfully rescued every person from drowning while never pushing anyone off the bridge. In contrast, although the RBAMA also managed to rescue every person, it consistently did so by pushing the person on the bridge into the water as a side effect. As a result, when focusing solely on overall behavior as measured by return in this particular environment configuration, one could argue that an agent following a multi-objective policy is capable of learning an overall morally preferable behavior -- precisely because it uses the specific environment dynamics to resolve the dilemma situation.

However, the overarching motivation behind the RBAMA was to develop an approach for building AMAs suitable for real-world deployment. In light of this motivation, one would not want an agent to exhibit good generalization capabilities. In particular one would want the agent not to learn and rely on environment-specific dynamics as allowing it to base the prioritization of moral obligations on such features would constitute a form of overfitting to an oversimplified model. In this respect, the RBAMA holds a clear advantage. The moral judge, which decides about the prioritization of the moral obligations, is assumed to have no knowledge about these environmental specific dynamics, such that the grounds on which this prioritization is taught to the RBAMA does do not take them into account. Consequently, learning the prioritization through the case-based feedback process guarantees to prevent such cases of overfitting. Put differently, the moral judge has direct \textit{control} over the prioritization of the RBAMA’s moral obligations, thereby ensuring that instance-specific dynamics are systematically abstracted away. This advantage of the RBAMA has been experimentally demonstrated by deploying it without any retraining of the networks or adaption of its reason theory in the two \textit{stochastic moral dilemma simulations} (see \cref{6stochEnv}), where it exhibited the behavior, it was intended to. 

Of course, there are in principle alternative strategies for preventing overfitting -- for example, training a multi-objective policy on environments that vary in drowning time, such as the stochastic moral dilemma simulation. However, the approach proposed by Rodríguez-Soto is not applicable to such environments. Employing a different MORL method to train a multi-objective policy under stochastic conditions in the bridge environment for comparison with the RBAMA thus represents a promising direction for future research. Independent of possible future empirical evaluation, however, it can be stated that the RBAMA holds a principled advantage: by relying on a reasoning process decoupled from environment-specific dynamics, it structurally avoids overfitting in terms of its prioritization of moral obligations, while this is not the case for MORL approaches in general.  

\subsection{Misplaced Anticipation: Overfitting the Transition from Instrumental to Moral Action}

The second aspect in which the the RBAMA and the MOBMA substantially differ in terms of their behavior is based on  the MOBMA’s anticipation that the person walking along the lower shore would fall into the water at the designated drowning spot. It proactively moved toward that location and waited for the person to fall in. In contrast, the RBAMA continued pursuing its instrumental goal until the person had already fallen in, and only then redirected its attention. This difference can again be attributed to the MOBMA’s multi-objective policy, which enabled it to learn a single strategy to satisfy all of its goals simultaneously in combination with the MOBMA having once again internalized an instance-specific dynamic of the environment -- namely, that the person would inevitably fall into the water at that spot. These two factors, taken together, likely incentivized its proactive behavior. In contrast, the RBAMA focuses solely on what it identifies as binding moral obligations based on its reasoning theory, and acts according to the outputs of the networks responsible for ensuring its behavior aligns with those obligations. Since it only derives the moral obligation to rescue persons from the water once the morally relevant proposition that a person is in the water holds true, it activates its rescue network only at that point to guide its behavior. 

One might again argue that, in this particular configuration of the environment, the MOBMA learns a morally preferable behavior by preemptively positioning itself to carry out its rescue task. However, again, from the perspective of generalization -- crucial for real-world deployment -- this behavior reflects a form of overfitting. If the location of the drowning event were randomized across instances, the MOBMA’s anticipatory strategy would likely be misguided -- failing to improve the fulfillment of its moral obligation while hindering it to fulfill its instrumental goal. The more appropriate overall strategy -- assuming that the drowning location is specific to the training environment instance and thus does not reflect real-world conditions -- would be to continue pursuing the instrumental objective until the morally relevant proposition that someone is in the water actually holds true. This is precisely the behavior exhibited by the RBAMA, whose architecture is designed to direct moral attention only in response to the presence of morally relevant facts, thereby inherently preventing overfitting to instance-specific transition dynamics that determine when, where, and how moral obligations arise. This, in turn, suggests that retraining and transfer learning are likely to be significantly more effective with RBAMAs than with MOBMAs.

\section{The Supervisor and the Reasoner}\label{4rechCompNeuf}

As discussed before, the reason-based approach shares some significant similarities with the work of Neufeld et al.\ as a rule-based approach toward building AMAs (see \cref{2introInformalRuleBased} and \cref{3shielding}), while also differing in key aspects. With the architectural details and inner workings of the RBAMA fully introduced, this section aims at outlining both the shared elements and the key differences between the two approaches more comprehensively.

An outstanding technical similarity to the work of Neufeld et al.\ (cf. \cite{neufeld2021, neufeld2022, neufeld2022a}) is the use of a shielding mechanism to ensure compliance with moral constraints. In both approaches, the shield is generated based on a logical framework that incorporates the defeasibility of rules. However, Neufeld et al.\ impose adherence to a deontic theory, whereas the reasoning unit in an RBAMA operates on a logic formalizing moral decision-making based on normative reasons.  This key distinction arises from the philosophically motivated shift in the reason-based approach from implementing an ethical theory to grounding the agent’s moral decision-making in normative reasons. On the same grounds, RBAMAs are endowed with the capability to learn moral rules from case-based feedback to gradually refine the agent's reasoning, helping it to develop into a better reasoner, alleviating the presumably impossible to meet demand to come up with a full-grown reason-theory during the design phase. This stands in contrast to top-down implementing of a predefined moral rule-set as it is done in the normative supervisor. Crucially, the reason-based approach removes the need to devise a fully comprehensive and exhaustively correct rule set in advance. A further and particularly significant distinction lies in the normative supervisor’s limited perspective, which effectively reduces the task of building AMAs to the enforcement of moral constraints, while an RBAMA's capabilities extend beyond this. Depending on the reasons an RBAMA has learend, it is not only capable of inferring moral obligations toward ensuring that \textit{moral constraints} hold, but also of inferring moral obligations toward fulfilling \textit{moral tasks}. This enables RBAMAs to recognize and act upon categories of moral obligation that cannot be accommodated within the framework of agents governed by a normative supervisor.

Despite their methodological differences however, both the reason-based and rule-based approaches share a crucial feature that distinguishes them from MORL frameworks. In MORL, agents learn to balance multiple moral objectives by optimizing multi-objective reward functions, implicitly establishing priorities based on expected outcomes. In contrast, both the reasoning unit’s hierarchy of default rules and the normative supervisor’s hierarchy of deontological rules allow for explicit \textit{control} over how moral obligations are prioritized. This element of controlled prioritization helps to prevent overfitting to environment-specific dynamics—as discussed in \cref{7overfittingMoralPrio}. This highlights that the shared feature of both the reasoning unit and the normative supervisor in the -- making decisions about prioritization based on an explicit order among rules -- has significant practical relevance, especially when contrasted with the outcome-based prioritization in MORL approaches.

\section{Revisiting Ethics, Safety, and the Challenge of Overall Alignment}\label{7revisitingEthicsSafetyAlignment}

In the moral dilemma discussed throughout this work, a central issue was the prioritization within the RBAMA between two pro tanto normative reasons: the obligation to ensure that the agent does not push anyone off the bridge, $\varphi_C$, encoded in the default rule $\delta_1$, and the obligation to rescue the person who has fallen into the water, $\varphi_R$, encoded in $\delta_2$. It was argued that giving priority to the latter in this dilemma reflects sound moral reasoning. 

Importantly, the moral conflict between the obligations $\varphi_C$ and $\varphi_R$ exemplifies the broader tension between \textit{moral obligations} and \textit{safety concerns}. While adherence to $\varphi_C$ is morally required -- in the sense of constituting a moral safety constraint -- it also aligns with the notion of safety as it is understood in safe reinforcement learning (see \cref{2safety}). In contrast, the obligation to fulfill $\varphi_R$ represents a genuinely moral demand. Accordingly, when determining the priority between $\delta_1$ and $\delta_2$, it is essential to recognize that opting for $\delta_2 > \delta_1$ entails making an explicit trade-off between moral optimality and (moral) safety, in favor of the former. 

Beyond this, a related tension arises between \textit{moral alignment} and \textit{overall alignment} as introduced in \cref{2alignment}. This was exemplified in \cref{6EthicallyOptimalBehavrio_MissionAccomplished}, where the behavior of an RBAMA trained in an environment where the agent faced a choice between taking the shortest path to reach a drowning person -- thereby fulfilling its moral duty to rescue as quickly as possible -- or taking a slightly longer route that also allowed it to achieve its instrumental goal of delivering a package was examined. While it may be morally acceptable to allow the agent to choose the longer path, the current RBAMA architecture is not equipped to evaluate such trade-offs. Once the agent identifies a moral obligation, it strictly prioritizes behaving conform with it over its instrumental objective. 

However, it is presumably rather uncontroversial that it should in principle be possible to train the agent to prioritize its instrumental goal over moral obligations at least in certain contexts. While many may strongly reject the notion that the agent ought, all things considered, to take a longer route to the drowning person simply to deliver a package en route, intuitions become less clear in a slightly different scenario — namely, one in which persons who fall into the water are not at risk of drowning but are merely unable to climb out on their own. In such a case, the agent presumably ought, all things considered, to take the slightly longer path and thereby deliver the package en route -- particularly given that it was designed and deployed primarily to fulfill this instrumental task. This rests on the assumption that the aim is not necessarily to develop artificial agents that act in a morally optimal manner, but rather ones whose behavior can plausibly be considered morally acceptable while primarily serving the instrumental purposes for which they were created.

Determining what qualifies as morally acceptable -- and how to appropriately balance moral and instrumental goals -- is undoubtedly a non-trivial philosophical challenge. More generally, it is debated if reconciling requirements from different normative domains is possible at all \cite{baker2018skepticism, brown2024scepticism}. However, as with the prioritization among \textit{normative} reasons, the current architecture of the RBAMA could in principle be naturally extended to account for reasons of different domains. These may include instrumental reasons, reasons grounded in cultural norms, or reasons serving norms of etiquette. This would allow the agent to learn, through iterative case-based feedback, not only what it \textit{morally} ought to do, but what it \textit{overall} ought to do in specific contexts without requiring a fully developed philosophical theory to be specified in advance. Consequently, thereby also case-by-case deliberation would be supported over when an instrumental reason should take precedence, thereby contributing to the clarification of where the moral standard for an AMA ought to be set.

The tension the between safety and morally optimal behavior and the tension between moral optimal behavior and overall optimal behavior taken together underscore the inherent difficulty of balancing moral alignment, safety considerations, and overall alignment -- each representing distinct desiderata that may pull in different directions and require trade-offs (cf. \cite{AISoLA2025}). While the declared goal of building AMAs -- and thereby the primary aim of the RBAMA framework -- is to achieve moral alignment, this focus brackets broader questions about how to weigh competing priorities. Yet, when it comes to training and deploying such systems in real-world contexts, these trade-offs cannot be ignored. Importantly, the RBAMA framework is, in principle, flexible enough to accommodate these additional desiderata. It has been shown that the framework allows for the integration of safety and instrumental considerations alongside moral reasoning. Nevertheless, the precise manner in which these different concerns should be balanced remains both an open and pressing challenge for the design of agents aimed at real-world deployment.
    \chapter{Future Work}\label{8FutureWork}

This chapter outlines several promising directions for advancing the development of reason-based artificial moral agents (RBAMAs). While the core framework introduced and evaluated throughout this work has demonstrated the feasibility of equipping artificial agents with the capability to conduct sound normative reasoning, a wide range of open challenges remain -- both technical and conceptual. These challenges span multiple dimensions of research, from scaling agents' competencies to operate in complex environments, to refining the underlying learning mechanisms and evaluation methodologies.

The first section identifies the key areas that must be addressed in environments designed for training and testing AMAs in the context of real-world deployment, along with potential future enhancements to the bridge environment that could align it with these requirements. The second explores the limitations of current models in representing diverse types of moral patients toward whom the RBAMA may have analogous duties, but whose differing moral status might justify distinct prioritization or treatment. Subsequent sections explore how the project of building RBAMAs could draw on methodologies from Safe Reinforcement Learning, how to improve sample efficiency and architectural flexibility in the RBAMAs modular design, and how to derive reward functions that convey an understanding of duty fulfillment in line with the one of human moral judges. The final sections propose ways of incorporating expected outcomes into the agent's moral decision-making without abandoning the reason-based structure, and argue for more nuanced classifications of moral obligations to support principled guidance on priority setting. Taken together, these discussions aim not only to highlight immediate next steps for improving the RBAMA framework, but also to outline long-term research trajectories.

\section{Toward Building Comprehensive Test Suites for Artificial Moral Agents}

To advance the development of AMAs toward real-world deployment, it is essential to build environments that adequately reflect \textit{real-world complexity}. In particular, they must capture a corresponding level of \textit{moral} complexity. Yet, existing research lacks both comparative benchmark sets and a clear standard for what such benchmarks should entail to reflect real-word complexity in the moral dimension  \cite{vishwanath2024reinforcement}. 

\subsection{Environmental Features as Reflections of Project Scope}

\begin{figure}[h]
    \centering
    \begin{subfigure}[t]{0.45\textwidth}
        \centering
        \includegraphics[width=\linewidth]{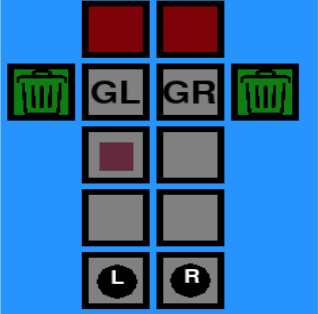}
        \caption{The public civility game}
        \label{fig:civility}
    \end{subfigure}
    \hfill
    \begin{subfigure}[t]{0.45\textwidth}
        \centering
        \includegraphics[width=\linewidth]{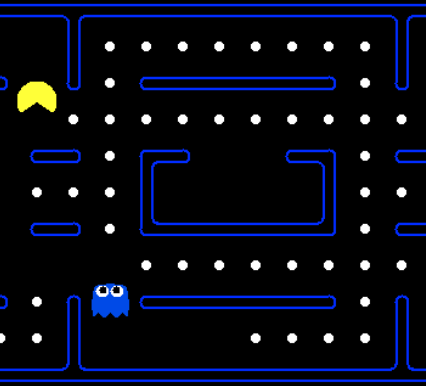}
        \caption{The vegan Pac-Man game}
        \label{fig:pac_man}
    \end{subfigure}
    \caption{Visualization of environments for training and testing AMAs}
    \label{fig:envs_AMAs}
\end{figure}

The design of existing environments for training and evaluating AMAs often mirrors the underlying assumptions of the approaches they support. For example, Emery Neufeld et.\ al introduced a modified version of the classic \textit{Pac-Man} game (\cref{fig:pac_man}) to test a normative supervisor on it \cite{neufeld2021, neufeld2022, neufeld2023}. In this adaptation, the traditional objective of navigating mazes to collect food pellets remains, but additional ethical constraints are imposed to guide the agent's behavior. For instance, in the "vegan" variant, Pac-Man is prohibited from eating ghosts. Importantly, by focusing on the implementation of a normative supervisor, their approach implicitly adopts a view of ethical reinforcement learning as a form of safe RL. The agent is exclusively supposed to adhere to what has been termed moral constraints throughout this work, effectively restricting the design of the environment such that all moral obligations are moral constraints. However, as argued throughout this work, the capabilities of an AMA must stretch beyond this; it must at least also possess the capacity to pursue and fulfill moral \textit{goals}, not \textit{merely} avoid moral \textit{constraint violations}.

Also Rodriguez-Soto et.\ al introduced with the \textit{public civility game} (\cref{fig:civility}) their own environment for training and testing ethical RL agents. The public civility game is a multi-agent environment, in which two agents navigate a shared grid world with the objective of reaching their respective goal locations \cite{rodriguez-soto2020, rodriguez-soto2021a}. Along the way, one of the agents may encounter garbage on its path. The agent morally ought to properly dispose the garbage by placing it in a bin. Further it has a moral obligation to avoid pushing the garbage on other agent. Consequently, the setting incorporates both a moral constraint and a moral goal. However, the grid world in which the agents operate is very small in scale. Presumably, this design choice was made, because the applied algorithm does not scale effectively. Such a limitation may simply reflect the early stage of research. However, the general absence of a comparative benchmark set makes it difficult to assess the specific strengths and weaknesses of different approaches to building AMAs -- particularly those, like the reason-based approach, that are designed from the outset with the goal of achieving practical applicability in complex, real-world environments.

\subsection{The Bridge Simulation as a Foundation for Broader Benchmarking}

The framework for constructing bridge scenarios introduced in \cref{5implementation} could serve as a starting point for developing a comprehensive benchmark suite. It allows for the \textit{flexible configuration of key parameters} -- such as enabling or disabling the potential for specific moral duties to arise, adjusting the size and layout of the map and selectively introducing or tuning uncertainty in the transition dynamics of the environment. Nonetheless, substantial work remains to bring the simulation anywhere close to the complexities encountered in real-world deployment. 

One key simplification lies in the behavioral model governing the persons on the map: their actions follow a small set of predefined rules, making them significantly more \textit{predictable} than real humans, whose behavior is shaped by internal intentions and responsive adaptation to environmental changes. Modeling the persons as agents and thereby constructing a \textit{multi-agent scenario} may bring the simulation closer to real-world conditions. However, this would still result in creating artificial entities that behave in \textit{fully rational and predictable} ways, failing to capture the often more chaotic nature of \textit{human} behavior. While a certain degree of abstraction and simplification is inherent to modeling, it is important to remain mindful of the implications such choices have for the prospect of real-world deployment. In particular, the resulting limitations in the agent's competence to handle complex, more chaotic environments become especially problematic when performance in terms of morality and safety is at stake -- areas in which failure is effectively intolerable. Consequently, for AMAs intended to operate in environments shared with humans, it is essential that they are equipped to handle such unpredictability in a way that their behavior continues to reflect competence.

However, regardless of whether human behavior can be adequately modeled by treating persons as agents within a multi-agent scenario, it would be valuable to support instances of the bridge scenario involving multiple agents. In real-world deployments, it is highly likely that more than one autonomous system will be operating within the same environment, requiring them to learn how to \textit{interact} and \textit{coordinate}. For example, consider a situation in which a person is drowning: one agent could reach them only by violating the moral constraint of not pushing someone off the bridge, while another agent could intervene without breaching that constraint. In such a case, the agents should coordinate their decision-making so that latter performs the rescue, allowing the other to remain in compliance with its moral obligations.

One more significant limitation lies in the moral complexity of the bridge setting, which currently incorporates only two moral obligations. Expanding this scope to include a broader range of moral obligations would ideally be guided by an investigation into an appropriate \textit{categorization of moral obligations} -- potentially building on the foundational distinction between ethical constraints and ethical tasks discussed throughout this work. However, the development of a \textit{comprehensive categorization}, and thereby clarifying the \textit{full range of challenges} involved in building AMAs, remains a direction for future work. Furthermore, moral complexity should also encompass a diverse range of moral patients, which potentially should be treated with different priorities. This could be addressed by not only letting persons navigate the map, but also, for example, including animals. 

Lastly, a desirable extension of complexity is not limited to the moral domain. As noted in \cref{6Generalization}, the current framework does not yet support the creation of instances tailored to training \textit{generalization capabilities} in general. Expanding this aspect would further support the creation of \textit{more diverse and realistic} scenarios.

\section{Handling Different Classes of Individuals}

In the bridge scenario as currently implemented, all moral patients are modeled as persons, without any individuating attributes that might be relevant to moral reasoning. However, this represents a strong and generally unrealistic assumption. In real-world settings, agents are likely to encounter a diverse range of moral patients -- for example, animals in addition to humans -- which potentially should be treated with different priorities. This section explores how the RBAMA's reasoning module can be enabled to accommodate multiple classes of individuals -- such as persons and animals -- toward which it has the same moral obligation. This enables the system to generalize across categories while still making morally significant distinctions between subclasses.

Reconsider the scenarios experimentally investigated. In all of these scenarios, the agent is tasked with fulfilling a single moral goal: rescuing persons who fall into the water. This moral obligation was derived from a default rule $\delta_1: D \rightarrow \varphi_{R}$. Crucially, in this default, the moral patient is specified to be a person. Now, assume, that there are not only persons on the map, but also animals. Persons and animals can both be considered subclasses of a broader class "living being" (see ~\cref{fig:hierarchy}). Furthermore, suppose $\delta_1: D \rightarrow \varphi_{R}$ is redefined to extend the moral obligation of rescuing toward both persons and animals, that is, it is redefined to leave the object of the moral obligation unspecified. Consequently, through $\delta_1$, the RBAMA would recognize an equally strong moral obligation toward rescuing persons and animals.

\begin{center}
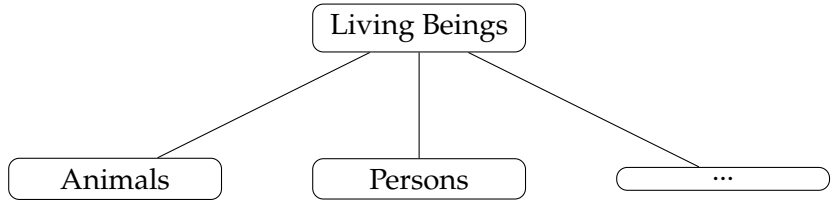

\begin{tikzpicture}[
  sibling distance=40mm,
  level distance=20mm,
  every node/.style={draw, rectangle, rounded corners, align=center, minimum width=2.8cm},
  dot/.style={draw=none, fill=none, text centered}
]

\node {Living Beings}
  child {node {Animals}}
  child {node {Persons}}
child {node {...}};

\end{tikzpicture}
\captionof{figure}{Subclasses of Living Beings}
\label{fig:hierarchy}
\end{center}

A more fine-tuned reasoning process can be induced by introducing two distinct default rules that apply each to members of exactly one of the subclasses: a default rule $\delta_2: D_P \rightarrow \varphi_{R_{P}}$, from which the moral obligation to rescue \textit{persons} is derived, and a default rule $\delta_3: D_A \rightarrow \varphi_{R_{A}}$, from which the moral obligation to rescue \textit{animals} is derived. To relate the objects of all three default rules among each other, the knowledge about persons and animals being distinct subclasses of living beings could be included in the RBAMA's static background knowledge $\mathcal{K}$. By explicitly representing this distinction between the classes of moral patients, the agent is enabled to differently prioritize different moral patients, e.g., prioritize rescuing humans over animals, which arguably is part of sound normative reasoning. This prioritization can be conveyed to the agent by teaching it a corresponding order over the default rules. 

Such an order could be taught to the RBAMA through feedback provided in a situation like the following. Assume that the RBAMA is confronted with a situation in which both a person and an animal are drowning at different locations on the map with the drowning spot of the animal being closer to the RBAMA's position than that of the person. The RBAMA would then conduct normative reasoning based on the activation of all three default rules -- $\delta_1$, $\delta_2$, and $\delta_3$. However, fulfilling the obligation to rescue the animal as quickly as possible would require the agent to move toward the animal, while rescuing the person as quickly as possible would require moving in a different direction. Consequently, in this scenario, $\delta_2$ and $\delta_3$ are in conflict. Moreover, as the drowning spot of the animal is closer to the RBAMA than the one of the person, rescuing the person first, would likely conflict with rescuing all living beings -- at least when rescuing is understood as an optimization task, such that also  $\delta_1$ and $\delta_2$ are conflicting. Thus, if it receives the feedback $P \rightarrow \varphi_{R_{P}}$ from the moral judge, it would not only learn the order $\delta_2 > \delta_3$, but also $\delta_2 > \delta_1$.

Crucially, the broader class of living beings could encompass more subclasses than just persons or animals, such that while $\delta_2$ and $\delta_3$ enable the agent to \textit{differentiate} between subclasses, learning $\delta_1$ enables the agent to apply rules defined for more general categories, such as living beings or moral patients, to all subclasses. This allows the agent to recognize its general moral obligation to rescue any living being without requiring the learning of additional rule for each individual subclass. 

Furthermore, it is worth noting that representing the default rules in a manner that maintains the distinction between action type and object type, as opposed to one that abstracts away this separation, provides a technical advantage. In the current configuration, a separate neural network is trained for each default rule, despite the fact that the strategy for rescuing a person does not substantially differ from the strategy for rescuing an animal. It would be more efficient to employ a single neural network\footnote{Assuming no adaption of the strategy is needed when the object of the rescuing task changes.}, to learn and execute the rescue behavior, irrespective of the object type.  By explicitly representing the action type, this information can be leveraged to automatically invoke the same network each time a moral obligation toward executing an action of the same action type is derived, regardless of the object the action is directed toward. 

However, the precise representation of the default rules, as well as the method by which the agent is endowed with the ability to differentiate between classes of individuals, remains an important avenue for future research.

\section{Exploring Safe Reinforcement Learning for Moral Constraint Compliance}\label{7critic/shield}

As briefly discussed in \cref{5trainingBehavior}, the training a contextual bandit (CB) for use as a shield consists in training it to detect actions that would violate the moral constraint of not pushing individuals into the water. Crucially, this approach depends on \textit{specific structural assumptions} about the environment -- namely, that in each state, at most one action can lead to such a violation. Moreover, it presupposes that the agent is not expected to plan ahead to avoid proximity to persons on the bridge, but only to recognize whether a given action would create a risk of pushing someone off in the immediate next step.

The broader challenge of enforcing such constraints has been extensively explored within the field of Safe RL, as outlined in \cref{3safeRL}. The concept of \textit{shielding} -- discussed in greater detail in \cref{3shielding} -- also originates from this area of research.  Accordingly, to refine the current implementation of the RBAMA with respect to constraint adherence, one can draw on \textit{established methods} from the Safe RL literature. For retaining the shielding mechanism, prior work offers principled approaches for defining the criteria by which the shield excludes unsafe actions. 

Shield construction typically starts from a formal safety specification. This specification is often formulated in \introterm{linear temporal logic} (LTL) -- a formalism used to describe sequences of events over time, allowing statements about the temporal behavior of systems using temporal operators like "eventually," "always," and "until". Furthermore, shield construction typically involves learning or assuming an abstract model that captures safety-relevant aspects of the environment’s dynamics usually beforehand (cf. \cite{konighofer2020, alshiekh2018}. This model forms the basis for synthesizing a shield that filters out unsafe actions, thereby ensuring adherence to the specified constraints during both training and execution. 

Alternatively, safety can be learned directly from interaction data via a \emph{safety critic}. A \introterm{safety critic} (cf.\ \cite{bharadhwaj2021, yang2023}) is a learned component -- typically a neural network -- that estimates the safety or risk associated with taking a particular action in a given state. The safety critic estimates the likelihood and severity of constraint violations. It is typically \textit{trained in parallel} with the policy, using the same interaction data, which is enriched with information about rule violations as safety-relevant signals. These can, for example, be provided through a cost function defined in a CMDP (see \cref{3safeRL}), where costs indicate the occurrence of safety violations. This enables the agent to avoid unsafe actions during both learning and deployment by incorporating safety assessments into its decision-making process. For instance, Srinivasan et al.\ \cite{srinivasan2020} propose a model-free approach to training a safety critic based on Q-values and policy-generated samples. In terms of both its reliance on Q-values and on-policy data, their method closely aligns with the current implementation and could be leveraged to train the RBAMA to respect safety constraints.

\section{Integrating Reason Acquisition and Simultaneous Network Training}

In the current implementation, the neural networks responsible for learning the instrumental and moral policies within the RBAMA are pre-defined and trained sequentially, each within a separate and complete training loop, and prior to the agent acquiring its corresponding default rules. This setup inherently assumes that the relevant moral obligations are known in advance, which contradicts the central idea that agents should be able to automatically adapt to new normative reasons they learn through feedback -- that is, that the default rules need not be set beforehand. Moreover, it presupposes that the classification of the moral obligations derived from the rules is also known in advance, thereby necessitating that the corresponding architectural components are statically defined rather than dynamically added at runtime.

This static and staged training procedure is also sample-inefficient, as collected experiences are used to train only a single network at a time. In a more developed system, by contrast, training would proceed in a more integrated and adaptive fashion. The agent would be placed in a training environment and initially begin learning to pursue its instrumental goal. Upon encountering feedback it would incorporate a new default rule into its reasoning structure. At this point, the type of obligation would be classified, and the relevant component would be instantiated dynamically. The information needed to determine the kind of moral obligation the agent should incorporate is implicitly contained in the moral judge’s feedback $X \rightarrow \varphi$, with reason $ X $  and the connected moral obligation $\varphi$. For instance, if the obligation is 'pull persons out of the water as fast as possible', this implicitly indicates that the agent should treat the obligation as a \emph{moral goal} -- that is, as an optimization problem -- rather than as compliance with a \emph{moral constraint}. In a more sophisticated system, the classification of the obligation could be performed automatically. 

Once instantiated, the module would be trained in parallel with the rest of the system. In the case of moral goals, the newly added network would take control over action selection whenever the RBAMA’s reasoning framework recognizes the goal as overall binding. By contrast, if a moral constraint is recognized as binding, control would remain with the instrumental policy. Importantly, if constraint adherence is learned using interaction data like when training a safety critic (see \cref{7critic/shield}), the same samples -- collected by the action selection of the instrumental policy -- can also be used to train the critic.

\section{Finding the Right Reward Function: The Challenge of Translating Moral Obligations into Behavior}\label{7rewardFunction}

In the experiments conducted, the RBAMA acquired its reason theory directly through feedback from the moral judge. However, its understanding of how moral duties translate into executable sequences of primitive actions was not shaped by that same feedback. Instead, it was determined by the design of the reward function used during training. As a result, the judge may assess the RBAMA’s behavior as morally incorrect and issue corrective feedback on its reason theory -- even when the agent is, in fact, acting in accordance with the reason the judge intended to convey. The issue was raised and demonstrated in \cref{6understandingMoralDuties}. To enforce conformance with the moral obligation that the agent must not push persons off the bridge, the reward function penalized the agent only when this obligation was directly violated.  However, according to the moral judge -- intended to represent a plausible human interpretation of how conformance with the moral obligation not to push persons off the bridge translates into sequences of primitive actions -- the RBAMA was expected to refrain from entering (or continuing on) the bridge whenever another person was already present.

While the moral judge is currently implemented as a rule-based module, allowing the reward function to be directly linked to the encoded rules in principle, this setup would not reflect the broader goal of the RBAMA framework. A central aim of building an RBAMA is to enable moral learning through direct human feedback, rather than through the implementation of a fixed ethical theory. This implies that the moral judge is ultimately conceived as a human interlocutor, not a rule-based system, and that the feedback the agent receives is not grounded in an explicit formalization of moral rules. As a result, linking the reward function to the moral judge’s understanding of how the RBAMA’s moral obligations should translate into sequences of primitive actions cannot, in principle, rely on an explicit rule-based representation of that understanding.

Independent of the previously discussed issue, manually engineering the reward function introduces a problem in its own right. In the current implementation, the reward function for teaching the agent to rescue persons from the water was carefully designed: the agent receives a reward only if the environment transitions from a state in which persons are in the water to one in which no persons remain. This design was deliberately chosen to avoid morally problematic behavior, which can easily arise if the reward function implicitly incentivizes the wrong actions. For example, a prima facie plausible alternative would be to assign a reward for each individual rescue. However, such a design would likely lead the rescue policy to intentionally push persons into the water in order to create more opportunities for earning rewards. Consequently, while the agent would still appear to uphold the moral obligation to pull persons in the water out as quickly as possible, its flawed understanding of how this obligation translates into sequences of primitive actions would give rise to clearly unethical behavior. This highlights a deeper risk: morally unacceptable outcomes can result even when the RBAMA has learned a perfectly sound reason theory, if its behavior fails to qualify as a fulfillment the moral obligations it infers. More broadly, such exploitative adaptations exemplify \introterm{reward hacking} \cite{skalse2022, clark2016faulty, bird2002evolved, golden2001glass} -- a well-documented challenge in reinforcement learning. 

Given this problem, and the requirement to ensure that the behavior incentivized by the reward function aligns with the moral obligations as understood by the moral judge, it is advisable to explore alternatives to manually designing the reward function. One promising approach is to leverage methods from \introterm{reinforcement learning from human feedback} (RLHF). In RLHF, the agent learns a reward model from human-provided feedback, which is subsequently used to guide its policy learning \cite{knox2008tamer, christiano2017deep}. This would enable the agent not only to acquire its reason theory through feedback from the moral judge, but also to learn a reward function that reflects the judge’s understaning of what counts as fulfillment of a moral obligation -- ensuring that both the agent’s reasoning and its behavior are directly shaped by the same source of moral guidance.

However, RLHF methods are not the only way to derive a reward function without resorting to manual engineering. One alternative approach is to represent moral obligations using LTL formulas and derive a reward function from this specification, as proposed by Camacho et al. \cite{camacho2019ltl}. Crucially, while this method likely mitigates the problem of reward hacking, it does no inherently ensure the reward function is informed by feedback from a moral judge. When comparing the RLHF approach to the LTL-based alternative, however, it is a valid question -- and merits further consideration -- whether maintaining such a connection between feedback and reward design is, all things considered, ultimately desirable. On the one hand, as discussed, grounding the reward signal in the moral judge’s feedback ensures that the agent’s behavior aligns with the judge’s understanding of how moral obligations translate into primitive actions. On the other hand, reward signals generated from LTL specifications may allow the agent to \textit{learn more effective strategies} than those a human might have in mind. This suggests a possible tradeoff between learning optimal strategies for fulfilling moral obligations and adopting strategies that stakeholders would intuitively accept. A promising direction for future research could involve leveraging the strengths of both approaches -- for example, by initially generating a reward function based on LTL formulas and subsequently refining these specifications through feedback.

\section{From Outcomes to Reasons: Tools and Criteria for Evaluating Reason-Based Moral Agency}\label{7outcomestoreasons}

It has been argued that the RBAMA’s prioritization of the rescue task over the constraint of not pushing persons off the bridge, as observed in the moral dilemma scenario, is morally justifiable. This justification rests on the arguably plausible assumption that the agent has an overall normative reason to prioritize the rescue -- an assumption grounded solely in morally relevant facts rather than in the expected outcomes of the agent’s actions. In contrast, reward-based approaches to teaching ethical behavior rely on the implicit assumption that morally right actions are those that maximize expected return.

These differing foundations for moral behavior also have implications for how such behavior should be evaluated. If one assumes that moral value can be numerically encoded and that the agent ought to maximize this value, then assessing its behavior based on expected return is a coherent choice. However, the case is different for an RBAMA. Indeed, the very motivation behind the RBAMA framework lies in developing an approach that is not bound to a consequentialist interpretation of moral decision-making. Accordingly, it would be misleading to evaluate an RBAMA by the same return-based criteria that apply to reward-maximizing agents.

In particular, it been argued that an RBAMA which has acquired reasoning in accordance with the moral judge's is directly deployable across instances that vary in the probability with which a person drowns at each time step (see \cref{6stochEnv}). This claim again rests on the assumption that, regardless of changes in this probability, the agent continues to have an overall normative reason to prioritize rescuing the drowning person. Consequently, it is also implicitly assumed that the agent’s performance should not be assessed based on the returns it receives during test episodes, such as the number of successful rescues, which of course changes significantly with the probability probability for drowning.  

Accordingly, alternative evaluation criteria are required for assessing the RBAMA’s performance.  In the experiments presented in \cref{6results}, the RBAMA’s reasoning was visualized and its behavior in terms of its course of action in moral dilemma states was observed to assess whether it acted in accordance with the moral guidance provided by the moral judge. Specifically, the analysis demonstrated that the RBAMA had successfully acquired both default rules relevant to the bridge scenario and had learned to correctly prioritize between them. On these grounds alone, the agent’s behavior arguably can be considered morally justifiable. Consequently, while the returns obtained during the test phase revealed that the agent rescued fewer individuals in the instance with a higher drowning probability, this outcome is arguably not indicative of diminished ethical performance -- at least not when evaluated against the desideratum of moral justifiability.

In general, the problem on how to evaluate an AI on moral terms has been discussed before. Specifically, alternative evaluation approaches have been proposed that appear more appropriate for assessing ethical behavior in frameworks that do not rely on consequentialist assumptions such as the RBAMA. One such example is the \introterm{Ethical Turing Test}, which evaluates artificial agents by asking human observers to assess the ethical adequacy of the agent’s decisions or actions \cite{winfield2019, anderson2018geneth, allen2000prolegomena}. Applying this kind of test to an RBAMA -- especially in conjunction with tools that render its reasoning process transparent -- could support evaluations based not only on whether the agent behaves in accordance with its reason theory, but also on whether the reason theory itself adequately captures the relevant normative structure of different scenarios. This, in turn, would allow for targeted revisions of the agent’s reasoning framework when shortcomings are identified. In essence, this is equivalent to soliciting feedback from a moral judge. Crucially, by grounding the evaluation not solely in the agent’s behavior but also enabling an inspection of the RBAMA’s underlying reasoning, the system is no longer treated as a black box. In doing so, this approach directly addresses a central critique of the Ethical Turing Test (cf. \cite{arnold2016against}).

However, while return-based metrics alone are not suitable for evaluating the moral justifiability of an RBAMA's behavior, they are not entirely irrelevant. Specifically, observed returns may serve an important diagnostic function: they can indicate when expected outcomes should, in fact, be considered morally relevant. For example, if the agent’s reasoning leads to behavior that systematically results in undesirable consequences, this may prompt the moral judge to reassess their own normative reasoning. Crucially, once such a revision has been made, it can be directly communicated to the RBAMA through the feedback process to initiate an update of the RBAMAS's reasoning, assuming the system has been appropriately extended to incorporate expected outcomes in its reasoning, as discussed in \cref{7probs}. In this sense, return data does not serve as an evaluative metric in itself, but rather as a tool for refining the moral judge’s -- i.e., human reasoning -- which can then be used to improve the RBAMA's normative reasoning.

Taken together, these considerations underscore the need for evaluation frameworks that align with the normative foundations of reason-based moral agency. While return-based metrics may offer valuable diagnostic signals, they are insufficient for assessing an RBAMA’s ethical performance. Instead, meaningful evaluation must focus on whether the agent's behavior is consistent with sound normative reasoning, also taking into account whether its reason theory adequately captures the morally relevant structure of its environment. Tools such as reasoning visualizations offer promising avenues for enabling such evaluation. Nevertheless, three central challenges remain unsettled: (1) identifying suitable evaluation criteria beyond expected returns, informed by the various desiderata for artificial moral agents; (2) developing technical tools that provide transparent and interpretable access to the agent’s underlying moral reasoning -- especially when it becomes more complex; and (3) clarifying the institutional or procedural basis for moral evaluation -- that is, determining who is entitled to serve as the moral judge and how decisions regarding the feedback provided to the RBAMA are to be made.

\section{Incorporating Expected Outcomes into Reason-Based Moral Decision-Making}\label{7probs}

\subsection{Executing Multi-Objective Policies Within a Hierarchy of Reasons}\label{7mixedPoliciesReasoning}

As discussed in \cref{7relevanceConsequence}, the current inability of RBAMAs to factor expected outcomes into its moral decision-making constitutes a notable limitation. This holds in particularly given the project's overarching goal of developing a framework that remains agnostic with respect to any specific ethical theory -- that is, RBAMAs are intended to accommodate a broad spectrum of moral intuitions, including those informed by consequentialist intuitions. The primary obstacle to include these intuition in the moral-decision making of the RBAMA’s lies in its reliance on a fixed-priority default theory, which imposes a context-independent hierarchy of normative reasons. In cases of conflict, lower-ranked obligations are categorically overridden by higher-ranked ones. In contrast, multi-objective policy approaches are \textit{designed} to optimize multiple objectives simultaneously and to dynamically resolve trade-offs based on expected outcomes -- thus naturally aligning with consequentialist ethics. However, these approaches are, on the same grounds, limited to this ethical paradigm. While this marks an opposite extreme to the RBAMA’s current capabilities, it suggests a possible direction for future development: integrating multi-objective policy learning within a reason-based architecture. 

What might such an integration look like? In principle, it is conceivable to allow an RBAMA to learn a multi-objective policy -- provided one is willing to accept the commitment that a moral goal, derived from a default rule in the appropriate context and guiding the agent’s moral decision-making, need only be fulfilled 'well enough' rather than optimally. This would permit the agent to execute a suboptimal policy for achieving that moral goal, as long as doing so avoids violating a moral constraint on its trajectory. The notion of 'well enough' could be formalized by introducing a \textit{threshold} that specifies an acceptable level of performance with respect to the moral task.  This strategy of selecting an option that satisfies a standard instead of seeking the optimal one has been explored in the study of human decision-making under the concept of satisficing, where an aspiration level is used to determine what qualifies as 'well enough' \cite{sep-bounded-rationality}.

For illustration on how an RBAMA could operate under such a strategy, reconsider the \textit{moral dilemma scenario} and assume a moral judge teaches the agent to prioritize its overarching moral obligation to rescue a drowning person as quickly as possible. The general idea is to set a threshold $\tau$ representing a level of performance deemed sufficiently good, thereby allowing the agent to act according to a suboptimal policy based on expected return estimates by executing action $a$ in state $s$. Formally, if the policy~$\pi_{\theta_\varphi}$ satisfies $\mathbb{E}_{\pi_{\theta_\varphi}}(s, a) > \tau$, then the action~$a$ is treated as fulfilling the obligation~$\varphi$ -- with the threshold~$\tau$ serving as the criterion for what counts as satisfying that obligation. In the given scenario, a threshold $\tau_R$ would be associated with the obligation $\varphi_R$, thereby formalizing the notion of 'well enough'. This approach implies that the agent ought to learn a multi-objective policy that fulfills all relevant objectives to a degree that exceeds the designated threshold. 

As the threshold determines what counts as duty fulfillment, this choice how to set it, is a choice on an \introterm{ethical hyper-parameter} -- it significantly influences the behavior of the agent in moral terms. Notably, if compliance with moral constraints is assessed by a trained safety critic, it is also unavoidable to introduce an additional threshold -- and thereby an additional ethical hyperparameter -- to determine which actions qualify as safe, which also plays a decisive role in determining the agent's course of action. Recognizing the extent to which the values of the thresholds shapes the agent's behavior underscores the critical importance of how they are chosen. Arguably, they should not be set by the programmer, as their specification implies making a normative choice and not only a technical one. Instead be subject to the same kind of feedback process as the reasons themselves -- learned through feedback of the moral judge, which raises the further question of how such a feedback process might be designed especially in conjunction with the process already used to refine the agent’s reasoning. Crucially, in principle, it must be possible to \textit{not} set a threshold at all. That must remain an option, since redefining duty fulfillment in terms of \textit{good enough} risks imposing a consequentialist mode of rationality onto the agent’s moral reasoning -- one that is not necessarily endorsed and which runs counter to the broader goal of creating a system compatible with diverse ethical theories. 

In addition, a further challenge concerns how to train an agent to implement an overall strategy that respects such threshold-based prioritization. Within reinforcement learning methodologies, a relevant subfield of MORL is \introterm{lexicographic multi-objective reinforcement learning} (LMORL), which focuses on training agents to follow policies that respect a predefined ordering of objectives \cite{skalse2022, tercan2024thresholdedlexicographicorderedmultiobjective}. In LMORL, tasks or objectives are ranked according to their priority. In the context of an RBAMA, such an ordering would be determined by its learned reason theory. For example, an RBAMA trained to navigate the bridge scenario would learn $ \delta_1 < \delta_2 $, which induces the order $ \varphi_C < \varphi_R $ when they are conflicting. Notably, the LMORL framework includes explicit mechanisms for handling safety constraints \cite{skalse2022}, which strengthens its presumed applicability to the training of AMAs.

Finally, setting thresholds also raises concerns regarding the explanatory clarity of the agent’s moral decision-making. If thresholds are introduced to define what counts as fulfilling a moral obligation, then the thereby induced understanding of what the agent counts as fulfilling the moral obligation must become part of the explanation provided by the agent -- alongside the reason hierarchy itself. Otherwise, there is a risk of generating misleading explanations. Consider again the situation arising in the \textit{moral dilemma scenario} (\cref{6originalStory}), where the agent faces a conflict between its moral obligation to hurry toward the drowning person and its obligation to wait until the bridge is safe to cross, as illustrated in \cref{fig:moral_dilmma_conflict}. Suppose now that the threshold $\tau$ is set such that the strategy of first waiting and then proceeding to rescue the person is considered sufficient to fulfill the obligation to rescue under the thereby intended understanding of what counts as fulfilling it. In this case, the RBAMA would no longer detect a moral conflict. Its reason theory would identify a unique proper scenario from which both pro tanto obligations $\varphi_R$ and $\varphi_C$ emerge as action-guiding. Within the original spirit of the reason-based approach, these two default rules representing the RBAMA's overall binding normative reasons would offer a complete explanation of its behavior and serve as the basis for its moral justifiability. However, when observing the agent's actual behavior, one might notice that it does not execute the action \textit{down}, which under a different interpretation would be seen as the morally required step to fulfill the obligation to hurry to rescue.  Consequently, although the agent’s reasoning framework classifies its chosen action as fulfilling the obligation to rescue the drowning person, stakeholders may not perceive it as such. This discrepancy can lead to misunderstanding and potentially undermine trust in the agent. 

\subsection{Expected Outcomes as a Basis for Prioritizing Among Normative Reasons}

Thus far, the integration of consequentialist intuitions into the agent’s moral decision-making has been explored by relaxing the requirement for strict compliance with moral obligations derived from the highest-ranking default rule. Specifically, rather than requiring perfect fulfillment, it was proposed that the agent need only satisfy each of its moral obligations to a sufficient degree, as determined by a threshold on the expected return. However, this shift leads to a problematic implication: the RBAMA’s behavior, particularly its prioritization among competing moral obligations, would no longer be determined solely by its reason hierarchy. Instead, the threshold functions as a hidden ethical hyperparameter that substantially influences the agent’s course of action. As a result, the reason hierarchy alone can no longer fully justify the agent’s behavior.

One alternative approach, avoiding this problematic implication, would be to abandon the idea of implicitly relaxing the fulfillment criteria and instead relax the context-independence of the reason hierarchy itself -- allowing expected outcomes to inform the ordering among default rules.

Conveniently, John Horty proposes a formalization for reasoning in terms of context-dependence, which he refers to as a \emph{variable priority default theory} \cite{horty2012}. To this end, he extends the formal language by introducing additional expressive resources that allow for reasoning about priorities among defaults. This extension includes a new set of individual constants, each interpreted as the name of a default, and a relation symbol to represent priority. For simplicity, it is assumed that each constant takes the form $d_X$, where the subscript $X$ corresponds to a particular default $\delta_X$. The language also includes a relation symbol $\prec$, which expresses priority relations between defaults. To illustrate this notation, consider the following example: let $\delta_1$ be the default $X \rightarrow Y$, $\delta_2$ the default $Z \rightarrow \neg Y$, and $\delta_3$ the default $\top \rightarrow d_1 \prec d_2$, where, in accordance with the naming convention introduced in the context of fixed priority default theories, the constants $d_1$ and $d_2$ refer to the defaults $\delta_1$ and $\delta_2$, respectively. In this case, $\delta_3$ expresses that $\delta_2$ has higher priority than $\delta_1$. 

\begin{definition}[Variable Priority Default Theory, \cite{horty2012}]
A \introterm{variable priority default theory} $\Delta$ is a structure of the form $\langle \mathcal{W}, \mathcal{D} \rangle$, with $\mathcal{W}$ a set of ordinary propositions and $\mathcal{D}$ a set of defaults, subject to the following constraints: (1) each default $\delta_X$ is assigned a unique name $d_X$; (2) the set $\mathcal{W}$ contains each instance of the irreflexivity and transitivity schemata in which the variables are replaced with the names of defaults from $\mathcal{D}$.
\end{definition}

The priority ordering is, in this case, implicit in the agent’s scenario. It can be made explicit by introducing a derived priority ordering, where the expression $\delta <_{\mathcal{S}} \delta'$ indicates that $\delta'$ holds higher priority than $\delta$ according to the scenario $\mathcal{S}$.

\begin{definition}[Derived Priority Ordering, \cite{horty2012}]
Let $\Delta = \langle \mathcal{W}, \mathcal{D} \rangle$ be a variable priority default theory and $\mathcal{S}$ a scenario based on this theory. Then the priority ordering $<_{\mathcal{S}}$ derived from $\mathcal{S}$ against the background of this theory is defined by taking
$$\delta <_{\mathcal{S}} \delta' \quad \text{just in case} \quad \mathcal{W} \cup \mathit{Conclusion}(\mathcal{S}) \vdash d \prec d'.$$
\end{definition}

From an intuitive perspective, the force of this definition is that $\delta'$ is assigned higher priority than $\delta$ relative to scenario $\mathcal{S}$ if and only if the conclusions of the defaults included in $\mathcal{S}$, together with the hard information in $\mathcal{W}$, entail the statement $d \prec d'$, indicating that $\delta'$ takes precedence over $\delta$ \cite{horty2012}.

This gives an idea on how to handle expected outcomes within the reasoning framework as reason for a certain order over other reasons. For instance, consider the \textit{moral dilemma scenario} discussed earlier. As previously argued, if the expected outcomes indicate that if there is likely enough time to reach the drowning person -- even after waiting for the person on the bridge to move -- then this expected return provides a contextual reason to prioritize the obligation to avoid pushing over the obligation to immediately initiate the rescue. 

To incentivize the agent to exhibit such behavior, default rules can be designed to express context-sensitive moral prioritization based on the expected return for fulfilling particular moral objectives. For example, one could define a contextual condition based on expected outcomes to decide over the order over the default rules in the following way:
$$  \mathit{prem} := \exists a \in \mathcal{A}_{\varphi_C}^{\text{safe}} : \left[ \mathbb{E}_{\pi_{\varphi_R}}[R_{\varphi_R} \mid s, a] > \tau_1 \right],$$
where $\mathcal{A}_{\varphi_C}^{\text{safe}}$ denotes the set of actions that comply with the moral constraint $\varphi_C$. $\mathbb{E}_{\pi_{\varphi_R}}[R_{\varphi_R} \mid s, a]$ is the expected return for achieving the rescue objective $\varphi_R$ when taking action $a$ in state $s$, under policy $\pi_{\varphi_R}$. The threshold $\tau_1$ specifies what qualifies as a sufficiently high return to justify prioritizing the rescue objective in that context. Using this premise, the order could then be established by introducing the default rules

$$
\delta_3: 
\mathit{prem}\rightarrow d_1 \prec d_2
$$

and 

$$
\delta_4: 
\neg \mathit{prem}\rightarrow d_2 \prec d_1.
$$

Rule $\delta_3$ specifies that the agent should prioritize avoiding pushing a person off the bridge over fulfilling the rescue objective, provided there exists an action that satisfies the moral safety constraint $\varphi_C$ and yields an expected return for rescue that exceeds the threshold $\tau_1$.  For example, in the moral dilemma state, by selecting an appropriate threshold, this formulation allows the agent to recognize that it may still be able to reach the drowning person in time even after waiting for a time step. Conversely, rule $\delta_4$ reverses this prioritization in contexts where the expected return for rescue falls below the threshold -- such as when waiting would likely prevent the agent from reaching the drowning person before it is too late. Together, these rules enable the agent’s prioritization to become context-sensitive: it is guided by empirical expectations while remaining embedded within a reason-based normative framework.

Importantly, within this framework -- just as in the one where the RBAMA learns multi-objective policies discussed earlier -- the introduction of ethical hyperparameters remains unavoidable. In particular, the threshold $\tau_1$, which determines what qualifies as a sufficiently high expected return to justify prioritizing the rescue objective, functions as such an ethical hyperparameter. Additionally, if the set of morally safe actions $\mathcal{A}_{\varphi_C}^{\text{safe}}$ is determined by a trained safety critic, then an additional safety threshold must be introduced to decide which actions count as safe. As in the case of enabling the RBAMA to learn multi-objective policies, these thresholds play a central role in determining the agent’s moral behavior. 

Consequently, the same normative concern applies: ethically significant parameters of this kind should not be arbitrarily chosen by the programmer, but should be normatively justifiable -- for example, by being learned through a case-based feedback process that involves the moral judge in an adequate way. This ensures that the agent’s moral decision-making is guided entirely by the feedback of the moral authority, thereby avoiding the encoding of implicit moral assumptions through fixed parameter settings. Crucially, as in the multi-objective policy approach, it is up to the moral judge if the agent incorporates deliberations about expected outcomes at all -- thereby enabling, but not enforcing consequentialist intuitions to be part of its moral decision-making.

Notably, concerns regarding the RBAMA’s reason theory yielding misleading explanations -- as raised in the context of the multi-objective policy approach -- do not apply when expected outcomes are not taken into account implicitly, but are instead explicitly incorporated into the agent’s reasoning. Any threshold that determines what qualifies as fulfilling a moral obligation is part of the premise of one of the agent’s default rules. As a result, such thresholds do not operate as hidden ethical hyperparameters, but rather form an inherent part of the agent’s resoning structure. This preserves the explanatory integrity of the reason hierarchy, enabling it to serve as a complete account of the agent’s course of action.

\section{Different Types of Safety Properties: Toward a More Comprehensive Categorization of Moral Duties}\label{safetyRevisited}

In \cref{7probs} it has been explored how the agent might include an estimation of the expected return in its moral decision-making -- especially for deciding which moral obligation to prioritize. A core idea with respect to deciding how to prioritize moral goals is to learn a threshold for the expected return.  A central question, however, is where this threshold should be set--that is, how do we determine when performance is morally sufficient? 

One intuitive answer might be that sufficient performance is given if the agent arrives in time to rescue the person from the water before they drown. However, under this interpretation, one could argue that the agent is not fulfilling the moral obligation to rescue the person \textit{as quickly as possible} sufficiently well, but rather a different obligation: to rescue the person \textit{before they drown}. Drawing the line in this way reflects a meaningful distinction between two classes of moral obligations--one that also maps onto a technical distinction: rescuing the person as quickly as possible is, from a formal perspective, an \textit{optimization} task, whereas ensuring the person is rescued before drowning corresponds to maintaining a \textit{safety property}, $P_1$: 'no person ever drowns'.

Recall that the second moral obligation relevant in the bridge setting was that the agent ought to ensure it does not push any person off the bridge. This requires the agent to guarantee that a second safety property $P_2$ -- 'no person is ever pushed off the bridge by the agent' -- is maintained. Both moral obligations relevant in the bridge scenario can thus be framed as obligations to ensure that a safety property holds.

However, although both $P_1$ and $P_2$  can be formally framed as safety properties, they differ significantly in the type of moral obligation they represent and in the technical mechanisms required to enforce them. $P_1$ imposes an obligation on the agent to \textit{actively ensure safety}: in order to fulfill this moral requirement, the agent must intervene in the dynamics of the environment to prevent a violation of the safety property that no person drowns -- a violation that would otherwise occur in the absence of such intervention. This requires the agent to execute a strategy for preventing the violation of the safety property to occur -- such as locating and rescuing the person in the water before a critical threshold is reached; i.e., before the person drowns. Under this perspective, the moral obligation to ensure that $P_1$ holds thus closely corresponds to what has been referred to in this work as a \textit{moral goal}, and fulfilling it requires goal-directed behavior that may necessitate substantial deviation from the agent’s instrumental objectives. Moreover, the set of trajectories that satisfy $P_1$ is typically very narrow, necessitating systematic exploration in order to discover and execute a suitable policy. 

Unlike $P_1$, the safety property $P_2$ would never be violated if the agent was not present in the environment. On the contrary, a violation of $P_2$ can only occur if the agent \textit{causes} it -- for example, by colliding with a person on the bridge. As a result, the agent does not need to learn a strategy for intervening in morally undesirable developments within the environment, but simply to ensure compliance with the moral constraint. This implies that the agent does not have to systematically explore the state space in search of a trajectory that fulfills $P_1$; it only needs to refrain from actions that lead to specific prohibited transitions. As such, $P_2$ can be achieved through applying techniques that were developed in safe RL. 

The distinction between $P_1$ and $P_2$ can also be understood through the lens of \textit{outcome alignment} versus \textit{execution alignment} (cf. \cite{AISoLA2025}). Conforming to $P_1$ requires the agent to adopt a different goal -- at least temporarily until that goal is achieved. If the agent does not work toward this goal, it fails to be outcome aligned. In contrast, conforming to $P_2$ does not necessitate a shift in goals. Any violation of $P_2$ reflects a lack of execution alignment.

Philosophical considerations highlight a further important difference between the moral obligations in question -- though both are directed toward ensuring that safety properties hold. One might argue that the moral severity of the agent’s failure to uphold these obligations varies between $P_1$ and $P_2$, as a violation of $P_1$ involves the agent \textit{causing} harm, whereas a violation of $P_2$ involves merely \textit{allowing} harm to occur. This distinction aligns with the \textit{doctrine of doing and allowing}, which holds that it is morally worse to do harm than to allow harm, even when the outcomes are identical (\cite{sep-doing-allowing}). In moral philosophy, the validity and moral relevance of this doctrine is debated with both proponents to defend the distinction \cite{foot1985problem, quinn1989actions} and critics arguing that it lacks moral significance \cite{Singer1979-SINPE-3, Rachels2000-RACAAP}. However, \textit{if} the doctrine is taken to have normative weight, then thereby the proposed differentiation between moral obligations under a technical perspective goes together with philosophical considerations. 

Consequently, both technical and philosophical perspectives indicate that distinguishing between these different types of safety properties is meaningful. More broadly, this suggests the possibility of developing a more nuanced classification of moral obligations beyond the simple division into moral tasks and moral constraints. It would be a valuable undertaking to identify an overarching taxonomy and to determine which categories correspond to which technical strategies for training and deploying AMAs -- an endeavor that would benefit from an integrated approach, combining insights from both philosophy and technical research.

\section{Getting Clear on What the Agent Ought to Do}\label{7hierachy}

As previously discussed, it is possible to draw a meaningful distinction between different types of safety properties. Specifically, in the context of the \textit{moral dilemma scenario}, it has been suggested that the agent’s moral obligation to pull the person out of the water as quickly as possible may be relaxed -- requiring instead only that the person is rescued before they drown.

However, in the bridge setting, the specific reason $\rho_2$ selected as morally relevant links the fact that \textit{there are persons in the water} to the moral obligation $\varphi_R$ to \textit{rescue them as quickly as possible}. This imposes a more restricting moral obligation on the agent, as it is bound to fulfilling an optimization task, allowing it to only execute the action its moral policy selects as next step. As proposed in \cref{7probs}, some flexibility could still be introduced into the agent's behavior by setting a threshold. This threshold can be chosen such that it effectively enforces only that the person is rescued before drowning rather than mandating executing this policy. However, if this is the norm the agent is meant to act on, an arguably more straightforward alternative would be to formulate a different normative reason, $\rho'_{2}$, directly connecting the morally relevant fact that \textit{there is a drowning person} to a moral obligation $\varphi'_R$ to \textit{pull them out of the water before they drown}.

This raises the question: could a default rule $\delta'_{2}$, representing $\rho'_{2}$, serve as a substitute for $\delta_2$ within the reason theory of an RBAMA designed for ethical decision-making in the bridge scenario? In the context of the \textit{moral dilemma scenario}, replacing $\delta_2$ with $\delta'_{2}$ would offer a significant advantage: it would allow the agent, in principle, to recognize that it has sufficient time to save the drowning person without immediately acting -- thus enabling it to avoid pushing anyone off the bridge. In other words, while $\delta_1$ and $\delta_2$ are conflicting, $\delta_1$ and $\delta'_{2}$ may not be, thereby inducing morally preferable overall behavior. 

However, while replacing $\delta_2$ with $\delta'_{2}$ introduces a useful capacity for trade-offs between the obligation to rescue and the obligation to ensure that no persons are pushed off the bridge, it also eliminates the representation of a morally important aspect: that rescuing persons sooner is morally better than rescuing them later. The omission of this time element has concrete, practical implications. Consider, for example, an instance of the bridge environment in which a person has fallen into the water but instead of drowning cannot climb out on their own. In this case, the agent can be considered to have yet another normative reason $\rho''_2$ connecting the presence of a person in the water to yet another moral obligation $\varphi''_R$ to \textit{eventually} pull the person out of the water. However, if the agent is guided solely by $\delta'_{2}$, which ties moral relevance to the threat of drowning, it would fail to recognize any obligation to take action to help the person in the water in this scenario. 

From a technical perspective, ensuring that no person ever drowns corresponds to upholding a safety property -- a guarantee that a specific kind of failure never occurs. In contrast, ensuring that every person is eventually rescued corresponds to satisfying a liveness property (see \cref{3safetyAndLiveness}).  When the agent operates solely under $\delta'_{2}$, only ensuring that no person drowns, it does not account for $\varphi''_R$. In contrast, when guided by $\delta_2$, the agent seeks to rescue persons from the water as quickly as possible -- thereby also fulfilling the relevant obligation to pull them out eventually. Notably, when $\delta_2$ is action-guiding, the agent also fulfills the moral obligation underlying $\delta'_{2}$: When the agent acts on the basis of $\delta_2$ and hurries to rescue persons from the water, it simultaneously contributes to satisfying the safety property $\varphi_{R'}$ by minimizing the risk of anyone drowning.  Thus, once the agent has learned $\delta_2$, it recognizes all three associated moral obligations -- $\varphi_R$, $\varphi'_R$, and $\varphi''_R$ -- and sets its course of action to ensure their fulfillment.

Under these consideration, one arguably would want the agent to learn both reasons  --$\delta'_{2}$ and $\delta_2$ -- in the moral dilemma scenario. This ensures that the agent accounts for $\varphi''_R$ -- pulling persons out of the water, that are not drowning -- while also enabling setting an order $\delta'_{2} > \delta_1 > \delta_2$ over the rules such that, if there is likely enough time for rescuing the drowning person, the agent does prioritize waiting on the bridge for the person to move out of the way. Clarifying the agent's behavior in an environment that incorporates moral obligations related to liveness properties, and developing a more refined reasoning framework to address these complexities, could be a focus of future research. Such work would benefit from a comprehensive categorization of moral duties, as discussed in \cref{safetyRevisited}, as well as a systematic examination of how to prioritize instrumental goals in relation to these obligations, as outlined in \cref{7revisitingEthicsSafetyAlignment}.
    \chapter{Conclusion}\label{9Conculsion}

In this study, I presented the development of reason-based artificial moral agents (RBAMAs), forwarding an architecture for building AMAs that ground their moral decision-making in normative reasoning and thereby reconcile the demand for practical solutions in constructing AMAs with the necessity of properly addressing deeper philosophical concerns surrounding ethical decision-making in autonomous systems.

The proposed architecture extends the standard RL framework by incorporating an ethics module that includes a reasoning unit at its core. The reasoning unit enables RBAMAs to conduct normative reasoning. In addition, the ethics module integrates dedicated components that direct the RBAMA's course of action to ensure its behavioral conformity with the moral obligations it infers through its reasoning. Moreover, RBAMAs are equipped with a mechanism for processing feedback from a moral judge, allowing them to iteratively refine their reasoning over time.

In the operational framework on which I based the first implementation of an RBAMA, the reasoning unit is built upon John Horty's formalization of normative reasoning, while the ethics module integrates neural networks trained to ensure conformance with moral obligations, alongside a rule-based module serving as a moral judge to simulate the feedback process. Drawing inspiration from a simplified real-world scenario -- the bridge setting -- I also developed a framework for generating modifiable environments through various configuration options. Initial tests of the RBAMA, conducted on instances of the bridge environment designed to create targeted test scenarios, proved successful: they demonstrated the agent’s ability to learn new types of normative reasons through case-based feedback, to perform sound normative reasoning, and to reliably act in accordance with the moral obligations it inferred. Based on the feedback it received, the RBAMA, was arguably capable to internalize \textit{sound} normative, which directly informed its moral decision-making -- particularly with regard to prioritizing moral duties in cases of conflict between its obligations. As a result it demonstrated strong performance in meeting ethical key desiderata -- its actions were morally justifiable, it behaved morally robust by acting according to a sensible prioritization of its moral obligations, and it further proved to be morally trustworthy. 

I discussed, that the level of \textit{control} given to the moral judge by having the authority to directly teach the agent how to prioritize between its moral oblgiations represents a significant advantage, especially when contrasted with the lack of such control in multi-objective reinforcement learning (MORL) approaches, where the prioritization of moral obligations determining the agent's course of action is learned implicitly based on \textit{expected outcomes}, thereby making a \textit{commitment to consequentialist ethical theories}. However, although avoiding the prioritization of moral obligations based on expected action consequences circumvents the need for philosophically questionable assumptions, entirely disregarding such considerations imposes a critical limitation. Addressing this limitation, by developing a method for integrating action consequences into the RBAMA’s moral decision-making, is an important direction for future work.

In addition to this, I identified several promising directions to advance and validate the RBAMA's architecture as a philosophically well-informed framework for building AMAs. One important step is to move beyond current simplifications, such as replacing the rule-based moral judge with actual human feedback. Key areas for more foundational future work include the development of a more comprehensive test suite, the formulation of clearer evaluation criteria for assessing the agent’s behavior in moral terms, and finding a comprehensive classification of moral obligations.  Importantly, pursuing these research directions would not only advance the development of RBAMAs but also provide greater clarity regarding the requirements an AMA must satisfy, including how to systematically address potential trade-offs in (moral) safety, moral alignment, and overall alignment.

\appendix

\chapter{Bridge Environment Configurations}

The following tables display the parameter settings for the versions of the Bridge Environment that were used in the experiments. The positions of static persons, dangerous spots, and the goal are each specified as 2D coordinates. Lists of coordinates indicate multiple positions within the environment.

\begin{table}[h]
    \centering
    \resizebox{0.5\textwidth}{!}{
    \begin{tabular}{ll}
        \toprule
        \textbf{Parameter} & \textbf{Value} \\
        \midrule
        IDs moving Persons & [1, 4] \\
        Positions Static Persons & No Static Persons \\
        Drowning Behavior & 15 Time Steps\\
        Probability of Falling  & 1 \\
        Probability of Pushing & 1 \\
        Person Reappearance Time & 100 \\
        Number of Bridges & 1 \\
        Dangerous Spots & [[6, 5]] \\
        Grid Width & 7 \\
        Grid Height & 7 \\
        Goal Position & [0, 6] \\
        \bottomrule
    \end{tabular}
    }
    \caption{Environment Parameters: \textit{Moral Dilemma Simulation}}
    \label{tab:env_moral_dilemma}
\end{table}

\begin{table}[h]
    \centering
    \resizebox{0.5\textwidth}{!}{
    \begin{tabular}{ll}
        \toprule
        \textbf{Parameter} & \textbf{Value} \\
        \midrule
        IDs moving Persons & [1,2,3,4] \\
        Positions Static Persons & [] \\
        Drowning Behavior &  $P_{\textit{drown}} = 0.1$\\
        Probability of Falling  & 1 \\
        Probability of Pushing & 0.5 \\
        Person Reappearance Time & 100 \\
        Number of Bridges & 3 \\
        Dangerous Spots & [[8, 7]] \\
        Grid Width & 9 \\
        Grid Height & 9 \\
        Goal Position & [1, 8] \\
        \bottomrule
    \end{tabular}
    }
    \caption{Environment Parameters:  \textit{Stochastic Moral Dilemma Simulation}}
    \label{tab:env_stochastic_moral_dilemma}
\end{table}

\begin{table}[h]
    \centering
    \resizebox{0.5\textwidth}{!}{
    \begin{tabular}{ll}
        \toprule
        \textbf{Parameter} & \textbf{Value} \\
        \midrule
        IDs moving Persons & [1,2,3,4] \\
        Positions Static Persons & [] \\
        Drowning Behavior & $P_{\textit{drown}} = 0.3$\\
        Probability of Falling  & 1 \\
        Probability of Pushing & 0.5 \\
        Person Reappearance Time & 100 \\
        Number of Bridges & 3 \\
        Dangerous Spots & [[8, 7]] \\
        Grid Width & 9 \\
        Grid Height & 9 \\
        Goal Position & [1, 8] \\
        \bottomrule
    \end{tabular}
    }
    \caption{Environment Parameters:  \textit{Stochastic Moral Dilemma Simulation}}
    \label{tab:env_stochastic_moral_dilemma_high_drowning_porb}
\end{table}

\begin{table}[!htbp]
    \centering
    \resizebox{0.5\textwidth}{!}{
    \begin{tabular}{ll}
        \toprule
        \textbf{Parameter} & \textbf{Value} \\
        \midrule
        IDs moving Persons & [] \\
        Positions Static Persons & [[1,2]] \\
        Drowning Behavior & 15 Time Steps\\
        Probability of Falling  & 1 \\
        Probability of Pushing & 1 \\
        Person Reappearance Time & 100 \\
        Number of Bridges & 2 \\
        Dangerous Spots & [] \\
        Grid Width & 7 \\
        Grid Height & 7 \\
        Goal Position & [1, 5] \\
        \bottomrule
    \end{tabular}
    }
    \caption{Environment Parameters: \textit{Left Bridge Blocked Simulation}}
    \label{tab:env_left_bridge_blocked}
\end{table}

\begin{table}[!htbp]
    \centering
    \resizebox{0.5\textwidth}{!}{
    \begin{tabular}{ll}
        \toprule
        \textbf{Parameter} & \textbf{Value} \\
        \midrule
        IDs moving Persons & [] \\
        Positions Static Persons & [[5,2]] \\
        Drowning Behavior & 15 Time Steps\\
        Probability of Falling  & 1 \\
        Probability of Pushing & 1 \\
        Person Reappearance Time & 100 \\
        Number of Bridges & 2 \\
        Dangerous Spots & [] \\
        Grid Width & 7 \\
        Grid Height & 7 \\
        Goal Position & [1, 5] \\
        \bottomrule
    \end{tabular}
    }
    \caption{Environment Parameters: \textit{Right Bridge Blocked Simulation}}
    \label{tab:env_left_bridge_blocked}
\end{table}

\begin{table}[h]
    \centering
    \resizebox{0.5\textwidth}{!}{
    \begin{tabular}{ll}
        \toprule
        \textbf{Parameter} & \textbf{Value} \\
        \midrule
        IDs moving Persons & [] \\
        Positions Static Persons & [[6,2]] \\
        Drowning Behavior & no drowning\\
        Probability of Falling  & 1 \\
        Probability of Pushing & 1 \\
        Person Reappearance Time & 100 \\
        Number of Bridges & 2 \\
        Dangerous Spots & [[6, 5]] \\
        Grid Width & 7 \\
        Grid Height & 7 \\
        Goal Position & [1, 5] \\
        \bottomrule
    \end{tabular}
    }
    \caption{Environment Parameters: \textit{Circular Path Simulation}}
    \label{tab:env_params_longer_path}
\end{table}

\begin{table}[h]
    \centering
    \resizebox{0.5\textwidth}{!}{
    \begin{tabular}{ll}
        \toprule
        \textbf{Parameter} & \textbf{Value} \\
        \midrule
        IDs moving Persons & [2,4] \\
        Positions Static Persons & [] \\
        Drowning Behavior & 15 Time Steps\\
        Probability of Falling  & 1 \\
        Probability of Pushing & 1 \\
        Person Reappearance Time & 100 \\
        Number of Bridges & 2 \\
        Dangerous Spots & [[6, 5]] \\
        Grid Width & 7 \\
        Grid Height & 7 \\
        Goal Position & [2, 6] \\
        \bottomrule
    \end{tabular}
    }
    \caption{Environment Parameters: \textit{Dangerous Shore Simulation}}
    \label{tab:env_params_base}
\end{table}

\begin{table}[h]
    \centering
    \resizebox{0.5\textwidth}{!}{
    \begin{tabular}{ll}
        \toprule
        \textbf{Parameter} & \textbf{Value} \\
        \midrule
        IDs moving Persons & [2,4] \\
        Positions Static Persons & [] \\
        Drowning Behavior & 15 Time Steps\\
        Probability of Falling  & 1 \\
        Probability of Pushing & 1 \\
        Person Reappearance Time & 100 \\
        Number of Bridges & 2 \\
        Dangerous Spots & [[5, 2]] \\
        Grid Width & 7 \\
        Grid Height & 7 \\
        Goal Position & [2, 6] \\
        \bottomrule
    \end{tabular}
    }
    \caption{Environment Parameters: \textit{Dangerous Bridge Simulation}}
    \label{tab:env_params_ds1}
\end{table}

\begin{table}[h]
    \centering
    \resizebox{0.5\textwidth}{!}{
    \begin{tabular}{ll}
        \toprule
        \textbf{Parameter} & \textbf{Value} \\
        \midrule
        IDs moving Persons & [1,2,3,4] \\
        Positions Static Persons & [] \\
        Drowning Behavior & 20 Time Steps\\
        Probability of Falling  & 1 \\
        Probability of Pushing & 1 \\
        Person Reappearance Time & 100 \\
        Number of Bridges & 3 \\
        Dangerous Spots & [[8, 7]] \\
        Grid Width & 9 \\
        Grid Height & 9 \\
        Goal Position & [1, 8] \\
        \bottomrule
    \end{tabular}
    }
    \caption{Environment Parameters:  \textit{Enlarged State Space Simulation}}
    \label{tab:env_large_map}
\end{table}

\chapter{Estimations Bridge Guarding Network}

The table below displays the contents of a CSV file listing the risk estimations of the bridge-guarding network for game states in the \textit{moral dilemma simulation}, where one action exceeds a threshold of 0.8. The leftmost column indicates which action surpasses this threshold. The middle column represents the game state as a list of 1D coordinates corresponding to positions on the flattened map with the agent position as leftmost list element and the positions of the persons as following elements, ordered by their ID. Persons not present in the environment are assigned an out-of-map position, here denoted by the value \texttt{49}. The rightmost column lists the risk estimation values for the available actions in the following order: \textbf{right}, \textbf{down}, \textbf{left}, \textbf{up}, \textbf{safe}, and \textbf{idle}.

\begin{figure}[h!]
    \centering
    \includegraphics[width=0.9\textwidth]{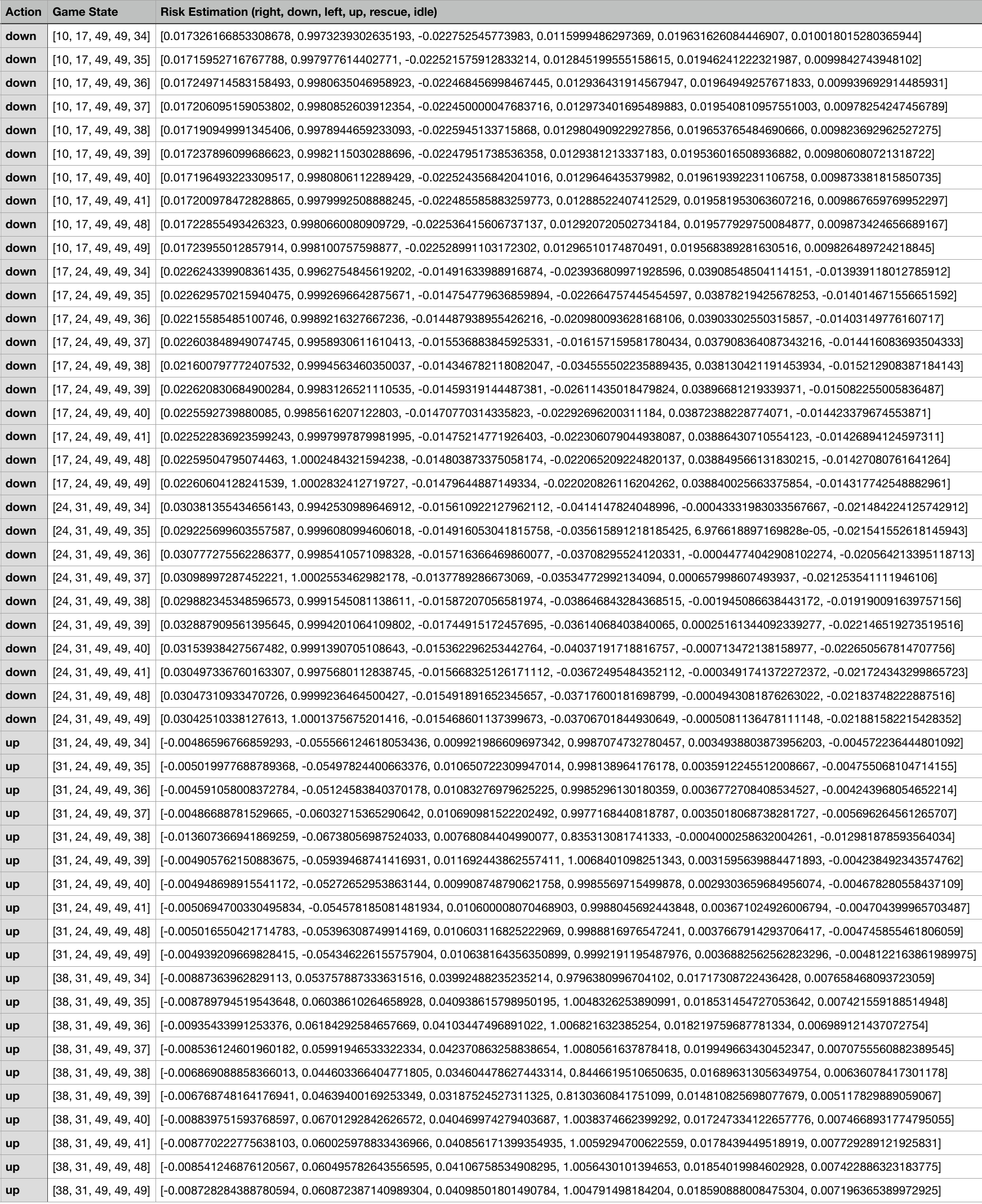}
    \label{fig:threshold}
\end{figure}

\printbibliography{}

@article{allen2006,
  title={Why machine ethics?},
  author={Allen, Colin and Wallach, Wendell and Smit, Iva},
  journal={IEEE Intelligent Systems},
  volume={21},
  number={4},
  pages={12--17},
  year={2006},
  publisher={IEEE}
}

@inproceedings{alshiekh2018,
  title={Safe reinforcement learning via shielding},
  author={Alshiekh, Mohammed and Bloem, Roderick and Ehlers, R{\"u}diger and K{\"o}nighofer, Bettina and Niekum, Scott and Topcu, Ufuk},
  booktitle={Proceedings of the AAAI conference on artificial intelligence},
  volume={32},
  number={1},
  year={2018}
}

@article{amodei2016,
  title={Concrete problems in AI safety},
  author={Amodei, Dario and Olah, Chris and Steinhardt, Jacob and Christiano, Paul and Schulman, John and Man{\'e}, Dan},
  journal={arXiv preprint arXiv:1606.06565},
  year={2016}
}

@inproceedings{anderson2004,
  title={Towards machine ethics},
  author={Anderson, Michael and Anderson, Susan Leigh and Armen, Chris},
  booktitle={AAAI-04 workshop on agent organizations: theory and practice, San Jose, CA},
  pages={53--59},
  year={2004}
}

@article{anderson2007machine,
  title={Machine ethics: Creating an ethical intelligent agent},
  author={Anderson, Michael and Anderson, Susan Leigh},
  journal={AI magazine},
  volume={28},
  number={4},
  pages={15--15},
  year={2007}
}

@book{anderson2011,
  title = {Machine {{Ethics}}},
  author = {Anderson, Michael and Anderson, Susan Leigh},
  year = {2011},
  month = may,
  publisher = {Cambridge University Press}
}

@article{bharadhwaj2021,
  title={Conservative safety critics for exploration},
  author={Bharadhwaj, Homanga and Kumar, Aviral and Rhinehart, Nicholas and Levine, Sergey and Shkurti, Florian and Garg, Animesh},
  journal={arXiv preprint arXiv:2010.14497},
  year={2020}
}

@article{brown2011,
  title={Consequentialize this},
  author={Brown, Campbell},
  journal={Ethics},
  volume={121},
  number={4},
  pages={749--771},
  year={2011},
  publisher={University of Chicago Press Chicago, IL}
}

@article{gabriel2020a,
  title={Artificial intelligence, values, and alignment},
  author={Gabriel, Iason},
  journal={Minds and machines},
  volume={30},
  number={3},
  pages={411--437},
  year={2020},
  publisher={Springer}
}

@article{garcia2015,
  title = {A Comprehensive Survey on Safe Reinforcement Learning},
  author = {Garc{\'i}a, Javier and Fern{\'a}ndez, Fernando},
  year = {2015},
  month = jan,
  journal = {The Journal of Machine Learning Research},
  volume = {16},
  number = {1},
  pages = {1437--1480},
  issn = {1532-4435}
}

@article{hayes2022,
  title={A practical guide to multi-objective reinforcement learning and planning},
  author={Hayes, Conor F and R{\u{a}}dulescu, Roxana and Bargiacchi, Eugenio and K{\"a}llstr{\"o}m, Johan and Macfarlane, Matthew and Reymond, Mathieu and Verstraeten, Timothy and Zintgraf, Luisa M and Dazeley, Richard and Heintz, Fredrik and others},
  journal={Autonomous Agents and Multi-Agent Systems},
  volume={36},
  number={1},
  pages={26},
  year={2022},
  publisher={Springer}
}

@article{horty2007,
  title = {Reasons as {{Defaults}}},
  author = {Horty, John F},
  year = {2007},
  journal = {Reasons as Defaults},
  volume = {8},
  number = {3}
}

@book{horty2012,
  title = {Reasons as Defaults},
  author = {Horty, John},
  year = {2012},
  publisher = {Oxford University Press},
  address = {New York},
  isbn = {978-0-19-974407-7},
  langid = {english},
  lccn = {B833 .H75 2012}
}

@inproceedings{jansen2020safe,
  title = {Safe Reinforcement Learning Using Probabilistic Shields},
  booktitle = {31st International Conference on Concurrency Theory ({{CONCUR}} 2020)},
  author = {Jansen, Nils and K{\"o}nighofer, Bettina and Junges, Sebastian and Serban, Alex and Bloem, Roderick},
  year = {2020},
  publisher = {Schloss-Dagstuhl-Leibniz Zentrum f{\"u}r Informatik}
}

@inproceedings{kasenberg2018,
  title = {Norm Conflict Resolution in Stochastic Domains},
  booktitle = {Proceedings of the {{AAAI Conference}} on {{Artificial Intelligence}}},
  author = {Kasenberg, Daniel and Scheutz, Matthias},
  year = {2018},
  volume = {32},
  urldate = {2024-04-16}
}

@article{kindler,
  title={Safety and liveness properties: A survey},
  author={Kindler, Ekkart},
  journal={Bulletin of the European Association for Theoretical Computer Science},
  volume={53},
  number={268-272},
  pages={30},
  year={1994},
  publisher={Citeseer}
}

@article{konighofer2023a,
  title={Online shielding for reinforcement learning},
  author={K{\"o}nighofer, Bettina and Rudolf, Julian and Palmisano, Alexander and Tappler, Martin and Bloem, Roderick},
  journal={Innovations in Systems and Software Engineering},
  volume={19},
  number={4},
  pages={379--394},
  year={2023},
  publisher={Springer}
}

@article{littman2015,
  title={Reinforcement learning improves behaviour from evaluative feedback},
  author={Littman, Michael L},
  journal={Nature},
  volume={521},
  number={7553},
  pages={445--451},
  year={2015},
  publisher={Nature Publishing Group UK London}
}

@article{liu2015,
  title={Multiobjective reinforcement learning: A comprehensive overview},
  author={Liu, Chunming and Xu, Xin and Hu, Dewen},
  journal={IEEE Transactions on Systems, Man, and Cybernetics: Systems},
  volume={45},
  number={3},
  pages={385--398},
  year={2014},
  publisher={IEEE}
}

@book{macaskill2020,
  title={Moral uncertainty},
  author={MacAskill, Michael and Bykvist, Krister and Ord, Toby},
  year={2020},
  publisher={Oxford University Press}
}

@article{moor2006a,
  title={The nature, importance, and difficulty of machine ethics},
  author={Moor, James H},
  journal={IEEE intelligent systems},
  volume={21},
  number={4},
  pages={18--21},
  year={2006},
  publisher={IEEE}
}

@inproceedings{neufeld2021,
  title={A Normative Supervisor for Reinforcement Learning Agents.},
  author={Neufeld, Emery A and Bartocci, Ezio and Ciabattoni, Agata and Governatori, Guido},
  booktitle={CADE},
  pages={565--576},
  year={2021}
}

@article{neufeld2022,
  title={Reinforcement learning guided by provable normative compliance},
  author={Neufeld, Emery},
  journal={arXiv preprint arXiv:2203.16275},
  year={2022}
}

@article{neufeld2022a,
  title={Enforcing ethical goals over reinforcement-learning policies},
  author={Neufeld, Emery A and Bartocci, Ezio and Ciabattoni, Agata and Governatori, Guido},
  journal={Ethics and Information Technology},
  volume={24},
  number={4},
  pages={43},
  year={2022},
  publisher={Springer}
}

@inproceedings{neufeld2023,
  title={On normative reinforcement learning via safe reinforcement learning},
  author={Neufeld, Emery A and Bartocci, Ezio and Ciabattoni, Agata},
  booktitle={International Conference on Principles and Practice of Multi-Agent Systems},
  pages={72--89},
  year={2022},
  organization={Springer}
}

@article{noothigattu2019,
  title={Teaching AI agents ethical values using reinforcement learning and policy orchestration},
  author={Noothigattu, Ritesh and Bouneffouf, Djallel and Mattei, Nicholas and Chandra, Rachita and Madan, Piyush and Varshney, Kush R and Campbell, Murray and Singh, Moninder and Rossi, Francesca},
  journal={IBM Journal of Research and Development},
  volume={63},
  number={4/5},
  pages={2--1},
  year={2019},
  publisher={IBM}
}

@inproceedings{rodriguez-soto2020,
  title={A structural solution to sequential moral dilemmas},
  author={Rodriguez-Soto, Manel and Lopez-Sanchez, Maite and Rodriguez-Aguilar, Juan A},
  booktitle={Proceedings of the 19th international conference on autonomous agents and multiagent systems},
  pages={1152--1160},
  year={2020}
}

@inproceedings{rodriguez-soto2021,
  title={Multi-objective reinforcement learning for designing ethical environments},
  author={Rodr{\'\i}guez Soto, Manel and L{\'o}pez S{\'a}nchez, Maite and Rodr{\'\i}guez-Aguilar, Juan Antonio},
  booktitle={Comunicaci{\'o} a: 30th International Joint Conference on Artificial Intelligence (IJCAI 2021)},
  year={2021},
  organization={International Joint Conferences on Artificial Intelligence}
}

@article{rodriguez-soto2021a,
  title={Guaranteeing the learning of ethical behaviour through multi-objective reinforcement learning},
  author={Rodriguez-Soto, Manel and Lopez-Sanchez, Maite and Rodriguez-Aguilar, Juan A},
  journal={ALA (2021)},
  volume={9},
  year={2021}
}

@misc{schlicker2022,
  title={A micro and macro perspective on trustworthiness: theoretical underpinnings of the Trustworthiness Assessment Model (TrAM)},
  author={Schlicker, Nadine and Baum, Kevin and Uhde, Alarith and Sterz, Sarah and Hirsch, Martin C and Langer, Markus},
  year={2022}
}

@incollection{sep-moral-dilemmas,
  title = {Moral Dilemmas},
  booktitle = {The {{Stanford}} Encyclopedia of Philosophy},
  author = {McConnell, Terrance},
  editor = {Zalta, Edward N. and Nodelman, Uri},
  year = {2024},
  edition = {Spring 2024},
  publisher = {Metaphysics Research Lab, Stanford University}
}

@article{srinivasan2020,
  title={Learning to be safe: Deep rl with a safety critic},
  author={Srinivasan, Krishnan and Eysenbach, Benjamin and Ha, Sehoon and Tan, Jie and Finn, Chelsea},
  journal={arXiv preprint arXiv:2010.14603},
  year={2020}
}

@book{sutton2018,
  title = {Reinforcement Learning: An Introduction},
  shorttitle = {Reinforcement Learning},
  author = {Sutton, Richard S. and Barto, Andrew G.},
  year = {2018},
  series = {Adaptive Computation and Machine Learning Series},
  edition = {Second edition},
  publisher = {The MIT Press},
  address = {Cambridge, Massachusetts}
}

@article{tonkens2009,
  title={A challenge for machine ethics},
  author={Tonkens, Ryan},
  journal={Minds and Machines},
  volume={19},
  pages={421--438},
  year={2009},
  publisher={Springer}
}

@book{wallach2008,
  title = {Moral Machines: {{Teaching}} Robots Right from Wrong},
  shorttitle = {Moral Machines},
  author = {Wallach, Wendell and Allen, Colin},
  year = {2008},
  publisher = {Oxford University Press},
  urldate = {2024-04-18}
}

@article{winfield2019,
  title={Machine ethics: The design and governance of ethical AI and autonomous systems [scanning the issue]},
  author={Winfield, Alan F and Michael, Katina and Pitt, Jeremy and Evers, Vanessa},
  journal={Proceedings of the IEEE},
  volume={107},
  number={3},
  pages={509--517},
  year={2019},
  publisher={IEEE}
}

@article{yang2023,
  title={Safety-constrained reinforcement learning with a distributional safety critic},
  author={Yang, Qisong and Sim{\~a}o, Thiago D and Tindemans, Simon H and Spaan, Matthijs TJ},
  journal={Machine Learning},
  volume={112},
  number={3},
  pages={859--887},
  year={2023},
  publisher={Springer}
}

@misc{silver2015,author = {David Silver},title = {Lectures on 
Reinforcement Learning},howpublished = {\textsc{url:}~\url
{https://www.davidsilver.uk/teaching/}},year = {2015}}

@article{jang2019a,
  title={Q-learning algorithms: A comprehensive classification and applications},
  author={Jang, Beakcheol and Kim, Myeonghwi and Harerimana, Gaspard and Kim, Jong Wook},
  journal={IEEE access},
  volume={7},
  pages={133653--133667},
  year={2019},
  publisher={IEEE}
}

@inproceedings{konighofer2020,
  title={Shield synthesis for reinforcement learning},
  author={K{\"o}nighofer, Bettina and Lorber, Florian and Jansen, Nils and Bloem, Roderick},
  booktitle={Leveraging Applications of Formal Methods, Verification and Validation: Verification Principles: 9th International Symposium on Leveraging Applications of Formal Methods, ISoLA 2020, Rhodes, Greece, October 20--30, 2020, Proceedings, Part I 9},
  pages={290--306},
  year={2020},
  organization={Springer}
}

@article{alpern1985,
  title={Defining liveness},
  author={Alpern, Bowen and Schneider, Fred B},
  journal={Information processing letters},
  volume={21},
  number={4},
  pages={181--185},
  year={1985},
  publisher={Elsevier}
}

@incollection{paul1985,
  title = {Basic Concepts},
  booktitle = {Distributed {{Systems}}: {{Methods}} and {{Tools}} for {{Specification An Advanced Course}}},
  editor = {Paul, M. and Siegert, H. J. and Alford, M. W. and Ansart, J. P. and Hommel, G. and Lamport, L. and Liskov, B. and Mullery, G. P. and Schneider, F. B.},
  date = {1985},
  pages = {7--43},
  publisher = {Springer},
  location = {Berlin, Heidelberg},
  isbn = {978-3-540-39363-4}
}

@article{ZHANG2023526,
  title={Multi-objective reinforcement learning--concept, approaches and applications},
  author={Zhang, Linzi and Qi, Zhiquan and Shi, Yong},
  journal={Procedia Computer Science},
  volume={221},
  pages={526--532},
  year={2023},
  publisher={Elsevier}
}

@article{khattar2023a,
  title={A CMDP-within-online framework for meta-safe reinforcement learning},
  author={Khattar, Vanshaj and Ding, Yuhao and Sel, Bilgehan and Lavaei, Javad and Jin, Ming},
  journal={arXiv preprint arXiv:2405.16601},
  year={2024}
}

@inproceedings{gattami2021a,
  title={Reinforcement learning for constrained markov decision processes},
  author={Gattami, Ather and Bai, Qinbo and Aggarwal, Vaneet},
  booktitle={International Conference on Artificial Intelligence and Statistics},
  pages={2656--2664},
  year={2021},
  organization={PMLR}
}

@article{gu2024reviewsafereinforcementlearning,
  title={A review of safe reinforcement learning: Methods, theory and applications},
  author={Gu, Shangding and Yang, Long and Du, Yali and Chen, Guang and Walter, Florian and Wang, Jun and Knoll, Alois},
  journal={arXiv preprint arXiv:2205.10330},
  year={2022}
}

@article{baum2024actingrightreasonscreating,
  title={Acting for the Right Reasons: Creating Reason-Sensitive Artificial Moral Agents},
  author={Baum, Kevin and Dargasz, Lisa and Jahn, Felix and Gros, Timo P and Wolf, Verena},
  journal={arXiv preprint arXiv:2409.15014},
  year={2024}
}

@InCollection{sep-buridan,
	author       =	{Zupko, Jack},
	title        =	{{John Buridan}},
	booktitle    =	{The {Stanford} Encyclopedia of Philosophy},
	editor       =	{Edward N. Zalta and Uri Nodelman},
	howpublished =	{\url{https://plato.stanford.edu/archives/spr2024/entries/buridan/}},
	year         =	{2024},
	edition      =	{{S}pring 2024},
	publisher    =	{Metaphysics Research Lab, Stanford University}
}

@misc{rodriguez2021code,
  author       = {Manel Rodríguez},
  title        = {IJCAI21 - MORL for Ethical Environments},
  year         = {2021},
  url          = {https://gitlab.iiia.csic.es/Rodriguez/morl-for-ethical-environments},
  note         = {GitLab repository}
}

@article{skalse2022,
  title={Defining and characterizing reward gaming},
  author={Skalse, Joar and Howe, Nikolaus and Krasheninnikov, Dmitrii and Krueger, David},
  journal={Advances in Neural Information Processing Systems},
  volume={35},
  pages={9460--9471},
  year={2022}
}

@article{clark2016faulty,
  title={Faulty reward functions in the wild},
  author={Clark, Jack and Amodei, Dario},
  journal={Internet: https://blog. openai. com/faulty-reward-functions},
  year={2016}
}

@inproceedings{bird2002evolved,
  title={The evolved radio and its implications for modelling the evolution of novel sensors},
  author={Bird, Jon and Layzell, Paul},
  booktitle={Proceedings of the 2002 Congress on Evolutionary Computation. CEC'02 (Cat. No. 02TH8600)},
  volume={2},
  pages={1836--1841},
  year={2002},
  organization={IEEE}
}

@article{golden2001glass,
  title={Glass floor colleges reject top applicants, accepting only the students likely to enroll},
  author={Golden, Daniel},
  journal={Wall Street Journal},
  pages={A1},
  year={2001}
}

@book{raz1999practical,
  title={Practical reason and norms},
  author={Raz, Joseph},
  year={1999},
  publisher={OUP Oxford}
}

@InCollection{sep-trust,
	author       =	{McLeod, Carolyn},
	title        =	{{Trust}},
	booktitle    =	{The {Stanford} Encyclopedia of Philosophy},
	editor       =	{Edward N. Zalta and Uri Nodelman},
	howpublished =	{\url{https://plato.stanford.edu/archives/fall2023/entries/trust/}},
	year         =	{2023},
	edition      =	{{F}all 2023},
	publisher    =	{Metaphysics Research Lab, Stanford University}
}

@article{brundage2020trustworthyaidevelopmentmechanisms,
  title={Toward trustworthy AI development: mechanisms for supporting verifiable claims},
  author={Brundage, Miles and Avin, Shahar and Wang, Jasmine and Belfield, Haydn and Krueger, Gretchen and Hadfield, Gillian and Khlaaf, Heidy and Yang, Jingying and Toner, Helen and Fong, Ruth and others},
  journal={arXiv preprint arXiv:2004.07213},
  year={2020}
}

@article{malle2016,
  title={Integrating robot ethics and machine morality: the study and design of moral competence in robots},
  author={Malle, Bertram F},
  journal={Ethics and Information Technology},
  volume={18},
  pages={243--256},
  year={2016},
  publisher={Springer}
}

@article{portmore2007consequentializing,
  title={Consequentializing moral theories},
  author={Portmore, Douglas W},
  journal={Pacific Philosophical Quarterly},
  volume={88},
  number={1},
  pages={39--73},
  year={2007},
  publisher={Wiley Online Library}
}

@InCollection{sep-consequentialism,
	author       =	{Sinnott-Armstrong, Walter},
	title        =	{{Consequentialism}},
	booktitle    =	{The {Stanford} Encyclopedia of Philosophy},
	editor       =	{Edward N. Zalta and Uri Nodelman},
	howpublished =	{\url{https://plato.stanford.edu/archives/win2023/entries/consequentialism/}},
	year         =	{2023},
	edition      =	{{W}inter 2023},
	publisher    =	{Metaphysics Research Lab, Stanford University}
}

@book{driver2011consequentialism,
  title={Consequentialism},
  author={Driver, Julia},
  year={2011},
  publisher={Routledge}
}

@book{kamm2008intricate,
  title={Intricate ethics: rights, responsibilities, and permissable harm},
  author={Kamm, Frances Myrna},
  year={2008},
  publisher={Oxford University Press}
}

@article{narveson1987david,
  title={David B. Wong, Moral Relativity. Berkeley: University of California Press 1984. Pp. xii+ 248.},
  author={Narveson, Jan},
  journal={Canadian Journal of Philosophy},
  volume={17},
  number={1},
  pages={235--257},
  year={1987},
  publisher={Cambridge University Press}
}

@book{wong2009natural,
  title={Natural moralities: A defense of pluralistic relativism},
  author={Wong, David B},
  year={2009},
  publisher={Oxford University Press}
}

@InCollection{sep-ethics-deontological,
	author       =	{Alexander, Larry and Moore, Michael},
	title        =	{{Deontological Ethics}},
	booktitle    =	{The {Stanford} Encyclopedia of Philosophy},
	editor       =	{Edward N. Zalta and Uri Nodelman},
	howpublished =	{\url{https://plato.stanford.edu/archives/win2024/entries/ethics-deontological/}},
	year         =	{2024},
	edition      =	{{W}inter 2024},
	publisher    =	{Metaphysics Research Lab, Stanford University}
}

@article{rittel1973dilemmas,
  title={Dilemmas in a general theory of planning},
  author={Rittel, Horst WJ and Webber, Melvin M},
  journal={Policy sciences},
  volume={4},
  number={2},
  pages={155--169},
  year={1973},
  publisher={Springer}
}

@article{morales2023toward,
  title={Toward safe AI},
  author={Morales-Forero, Andres and Bassetto, Samuel and Coatanea, Eric},
  journal={AI \& SOCIETY},
  volume={38},
  number={2},
  pages={685--696},
  year={2023},
  publisher={Springer}
}

@incollection{gabriel2022challenge,
  title={The challenge of value alignment},
  author={Gabriel, Iason and Ghazavi, Vafa},
  booktitle={The Oxford handbook of digital ethics},
  year={2022},
  publisher={Oxford University Press Oxford}
}

@book{edmonds2013would,
  title={Would you kill the fat man? The trolley problem and what your answer tells us about right and wrong},
  author={Edmonds, David},
  year={2013},
  publisher={Princeton University Press}
}

@book{gowans1987moral,
  title={Moral dilemmas},
  author={Gowans, Christopher W},
  year={1987},
  publisher={Oxford University Press}
}

@book{rachels2012elements,
  title={The elements of moral philosophy 7e},
  author={Rachels, James and Rachels, Stuart},
  year={2012},
  publisher={McGraw Hill}
}

@InCollection{sep-logic-deontic,
	author       =	{McNamara, Paul and Van De Putte, Frederik},
	title        =	{{Deontic Logic}},
	booktitle    =	{The {Stanford} Encyclopedia of Philosophy},
	editor       =	{Edward N. Zalta and Uri Nodelman},
	howpublished =	{\url{https://plato.stanford.edu/archives/fall2022/entries/logic-deontic/}},
	year         =	{2022},
	edition      =	{{F}all 2022},
	publisher    =	{Metaphysics Research Lab, Stanford University}
}

@book{lattimore2020bandit,
  title={Bandit algorithms},
  author={Lattimore, Tor and Szepesv{\'a}ri, Csaba},
  year={2020},
  publisher={Cambridge University Press}
}

@inproceedings{camacho2019ltl,
  title={LTL and beyond: Formal languages for reward function specification in reinforcement learning.},
  author={Camacho, Alberto and Icarte, Rodrigo Toro and Klassen, Toryn Q and Valenzano, Richard Anthony and McIlraith, Sheila A},
  booktitle={IJCAI},
  volume={19},
  pages={6065--6073},
  year={2019}
}

@article{tercan2024thresholdedlexicographicorderedmultiobjective,
  title={Thresholded Lexicographic Ordered Multiobjective Reinforcement Learning},
  author={Tercan, Alperen and Prabhu, Vinayak S},
  journal={arXiv preprint arXiv:2408.13493},
  year={2024}
}

@article{vishwanath2024reinforcement,
  title={Reinforcement Learning and Machine ethics: a systematic review},
  author={Vishwanath, Ajay and Dennis, Louise A and Slavkovik, Marija},
  journal={arXiv preprint arXiv:2407.02425},
  year={2024}
}

@book{foot1985problem,
  title={The problem of abortion and the doctrine of double effect},
  author={Foot, Philippa},
  year={1985},
  publisher={Blackwell Oxford}
}

@InCollection{sep-doing-allowing,
	author       =	{Woollard, Fiona and Howard-Snyder, Frances},
	title        =	{{Doing vs. Allowing Harm}},
	booktitle    =	{The {Stanford} Encyclopedia of Philosophy},
	editor       =	{Edward N. Zalta and Uri Nodelman},
	howpublished =	{\url{https://plato.stanford.edu/archives/win2024/entries/doing-allowing/}},
	year         =	{2024},
	edition      =	{{W}inter 2024},
	publisher    =	{Metaphysics Research Lab, Stanford University}
}

@article{quinn1989actions,
  title={Actions, intentions, and consequences: The doctrine of doing and allowing},
  author={Quinn, Warren S},
  journal={The Philosophical Review},
  volume={98},
  number={3},
  pages={287--312},
  year={1989},
  publisher={Duke University Press, Philosophical Review}
}

@book{Singer1979-SINPE-3,
	address = {New York},
	author = {Peter Singer},
	editor = {Susan J. Armstrong and Richard George Botzler},
	publisher = {Cambridge University Press},
	title = {Practical Ethics},
	year = {1979}
}

@incollection{Rachels2000-RACAAP,
	author = {James Rachels},
	booktitle = {Exploring Philosophy: An Introductory Anthology},
	editor = {Steven M. Cahn},
	publisher = {Oxford University Press USA},
	title = {Active and Passive Euthanasia},
	year = {2000}
}

@inproceedings{knox2008tamer,
  title={Tamer: Training an agent manually via evaluative reinforcement},
  author={Knox, W Bradley and Stone, Peter},
  booktitle={2008 7th IEEE international conference on development and learning},
  pages={292--297},
  year={2008},
  organization={IEEE}
}

@article{christiano2017deep,
  title={Deep reinforcement learning from human preferences},
  author={Christiano, Paul F and Leike, Jan and Brown, Tom and Martic, Miljan and Legg, Shane and Amodei, Dario},
  journal={Advances in neural information processing systems},
  volume={30},
  year={2017}
}

@article{allen2000prolegomena,
  title={Prolegomena to any future artificial moral agent},
  author={Allen, Colin and Varner, Gary and Zinser, Jason},
  journal={Journal of Experimental \& Theoretical Artificial Intelligence},
  volume={12},
  number={3},
  pages={251--261},
  year={2000},
  publisher={Taylor \& Francis}
}

@article{anderson2018geneth,
  title={GenEth: A general ethical dilemma analyzer},
  author={Anderson, Michael and Anderson, Susan Leigh},
  journal={Paladyn, Journal of Behavioral Robotics},
  volume={9},
  number={1},
  pages={337--357},
  year={2018},
  publisher={Sciendo}
}

@article{arnold2016against,
  title={Against the moral Turing test: accountable design and the moral reasoning of autonomous systems},
  author={Arnold, Thomas and Scheutz, Matthias},
  journal={Ethics and Information Technology},
  volume={18},
  pages={103--115},
  year={2016},
  publisher={Springer}
}

@misc{tesla_optimus_2025,
  author       = {Tesla},
  title        = {Optimus Navigating Around},
  year         = {2025},
  url          = {https://www.youtube.com/watch?v=DrNcXgoFv20},
  note         = {Accessed: 2025-04-14}
}

@misc{bostondynamics_atlas_2025,
  author       = {{Boston Dynamics}},
  title        = {Atlas},
  year         = {2025},
  url          = {https://bostondynamics.com/atlas/},
  note         = {Accessed: 2025-04-14}
}

@misc{ubtech_walker_2025,
  author       = {{UBTECH Robotics}},
  title        = {Walker – Humanoid Robot},
  year         = {2025},
  url          = {https://www.ubtrobot.com/en/humanoid/products/Walker},
  note         = {Accessed: 2025-04-14}
}

@InCollection{sep-reasons-just-vs-expl,
	author       =	{Alvarez, Maria and Way, Jonathan},
	title        =	{{Reasons for Action: Justification, Motivation, Explanation}},
	booktitle    =	{The {Stanford} Encyclopedia of Philosophy},
	editor       =	{Edward N. Zalta and Uri Nodelman},
	howpublished =	{\url{https://plato.stanford.edu/archives/fall2024/entries/reasons-just-vs-expl/}},
	year         =	{2024},
	edition      =	{{F}all 2024},
	publisher    =	{Metaphysics Research Lab, Stanford University}
}

@article{bucciarelli2008psychology,
  title={The psychology of moral reasoning},
  author={Bucciarelli, Monica and Khemlani, Sangeet and Johnson-Laird, Philip N},
  journal={Judgment and Decision making},
  volume={3},
  number={2},
  pages={121--139},
  year={2008},
  publisher={Cambridge University Press}
}

@InCollection{sep-reasoning-moral,
	author       =	{Richardson, Henry S.},
	title        =	{{Moral Reasoning}},
	booktitle    =	{The {Stanford} Encyclopedia of Philosophy},
	editor       =	{Edward N. Zalta},
	howpublished =	{\url{https://plato.stanford.edu/archives/fall2018/entries/reasoning-moral/}},
	year         =	{2018},
	edition      =	{{F}all 2018},
	publisher    =	{Metaphysics Research Lab, Stanford University}
}

@article{shen2024towards,
  title={Towards bidirectional human-ai alignment: A systematic review for clarifications, framework, and future directions},
  author={Shen, Hua and Knearem, Tiffany and Ghosh, Reshmi and Alkiek, Kenan and Krishna, Kundan and Liu, Yachuan and Ma, Ziqiao and Petridis, Savvas and Peng, Yi-Hao and Qiwei, Li and others},
  journal={arXiv preprint arXiv:2406.09264},
  year={2024}
}

@online{wayve2025,
  author    = {{Wayve}},
  title     = {Wayve: Pioneering a New Era for Automated Driving},
  year      = {2025},
  url       = {https://wayve.ai/},
  note      = {Accessed: 2025-04-18}
}

@online{future_is_now_2024,
  author    = {{Waymo}},
  title     = {The Future is Now},
  year      = {2024},
  url       = {https://www.youtube.com/watch?v=0kJPDg207oc},
  note      = {Accessed: 2025-04-18}
}

@misc{covariant2025,
  author       = {{Covariant}},
  title        = {Covariant | Powering the Future of Automation, Today},
  year         = {2025},
  url          = {https://covariant.ai/},
  note         = {Accessed: 2025-04-18}
}

@misc{universalrobots2025,
  author       = {{Universal Robots}},
  title        = {Kollaborierende Roboter – Cobots | Universal Robots},
  year         = {2025},
  url          = {https://www.universal-robots.com/de/},
  note         = {Accessed: 2025-04-18}
}

@article{kaelbling1996reinforcement,
  title={Reinforcement learning: A survey},
  author={Kaelbling, Leslie Pack and Littman, Michael L and Moore, Andrew W},
  journal={Journal of artificial intelligence research},
  volume={4},
  pages={237--285},
  year={1996}
}

@article{wiering2012reinforcement,
  title={Reinforcement learning},
  author={Wiering, Marco A and Van Otterlo, Martijn},
  journal={Adaptation, learning, and optimization},
  volume={12},
  number={3},
  pages={729},
  year={2012},
  publisher={Springer}
}

@phdthesis{thomas2015safe,
  title={Safe reinforcement learning},
  author={Thomas, Philip S},
  year={2015},
  school={University of Massachusetts Libraries}
}

@article{li2017deep,
  title={Deep reinforcement learning: An overview},
  author={Li, Yuxi},
  journal={arXiv preprint arXiv:1701.07274},
  year={2017}
}

@article{arulkumaran2017deep,
  title={Deep reinforcement learning: A brief survey},
  author={Arulkumaran, Kai and Deisenroth, Marc Peter and Brundage, Miles and Bharath, Anil Anthony},
  journal={IEEE Signal Processing Magazine},
  volume={34},
  number={6},
  pages={26--38},
  year={2017},
  publisher={IEEE}
}

@article{franccois2018introduction,
  title={An introduction to deep reinforcement learning},
  author={Fran{\c{c}}ois-Lavet, Vincent and Henderson, Peter and Islam, Riashat and Bellemare, Marc G and Pineau, Joelle and others},
  journal={Foundations and Trends{\textregistered} in Machine Learning},
  volume={11},
  number={3-4},
  pages={219--354},
  year={2018},
  publisher={Now Publishers, Inc.}
}

@inproceedings{mousavi2018deep,
  title={Deep reinforcement learning: an overview},
  author={Mousavi, Seyed Sajad and Schukat, Michael and Howley, Enda},
  booktitle={Proceedings of SAI Intelligent Systems Conference (IntelliSys) 2016: Volume 2},
  pages={426--440},
  year={2018},
  organization={Springer}
}

@article{schlicker2025we,
  title={How Do We Assess the Trustworthiness of AI? Introducing the Trustworthiness Assessment Model (TrAM)},
  author={Schlicker, Nadine and Baum, Kevin and Uhde, Alarith and Sterz, Sarah and Hirsch, Martin C and Langer, Markus},
  journal={Computers in Human Behavior},
  pages={108671},
  year={2025},
  publisher={Elsevier}
}

@article{gabriel2025matter,
  title={A matter of principle? AI alignment as the fair treatment of claims},
  author={Gabriel, Iason and Keeling, Geoff},
  journal={Philosophical Studies},
  pages={1--23},
  year={2025},
  publisher={Springer}
}

@article{baum2020social,
  title={Social choice ethics in artificial intelligence},
  author={Baum, Seth D},
  journal={Ai \& Society},
  volume={35},
  number={1},
  pages={165--176},
  year={2020},
  publisher={Springer}
}

@article{conitzer2024social,
  title={Social choice should guide ai alignment in dealing with diverse human feedback},
  author={Conitzer, Vincent and Freedman, Rachel and Heitzig, Jobst and Holliday, Wesley H and Jacobs, Bob M and Lambert, Nathan and Moss{\'e}, Milan and Pacuit, Eric and Russell, Stuart and Schoelkopf, Hailey and others},
  journal={arXiv preprint arXiv:2404.10271},
  year={2024}
}

@InCollection{sep-bounded-rationality,
	author       =	{Wheeler, Gregory},
	title        =	{{Bounded Rationality}},
	booktitle    =	{The {Stanford} Encyclopedia of Philosophy},
	editor       =	{Edward N. Zalta and Uri Nodelman},
	howpublished =	{\url{https://plato.stanford.edu/archives/win2024/entries/bounded-rationality/}},
	year         =	{2024},
	edition      =	{{W}inter 2024},
	publisher    =	{Metaphysics Research Lab, Stanford University}
}

@inproceedings{ng2000algorithms,
  title={Algorithms for inverse reinforcement learning.},
  author={Ng, Andrew Y and Russell, Stuart and others},
  booktitle={Icml},
  volume={1},
  number={2},
  pages={2},
  year={2000}
}

@inproceedings{abbeel2004apprenticeship,
  title={Apprenticeship learning via inverse reinforcement learning},
  author={Abbeel, Pieter and Ng, Andrew Y},
  booktitle={Proceedings of the twenty-first international conference on Machine learning},
  pages={1},
  year={2004}
}

@inproceedings{ziebart2008maximum,
  title={Maximum entropy inverse reinforcement learning.},
  author={Ziebart, Brian D and Maas, Andrew L and Bagnell, J Andrew and Dey, Anind K and others},
  booktitle={Aaai},
  volume={8},
  pages={1433--1438},
  year={2008},
  organization={Chicago, IL, USA}
}

@article{baker2018skepticism,
  title={Skepticism about ought simpliciter},
  author={Baker, Derek},
  journal={Oxford studies in metaethics},
  volume={13},
  pages={230--252},
  year={2018},
  publisher={Oxford University Press Oxford}
}

@article{brown2024scepticism,
  title={On scepticism about ought simpliciter},
  author={Brown, James LD},
  journal={Australasian Journal of Philosophy},
  volume={102},
  number={2},
  pages={497--511},
  year={2024},
  publisher={Taylor \& Francis}
}

@article{AISoLA2025,
  title={{Beyond Safe and Ethical AI Agents}},
  author={Baum, Kevin},
  note={accepted at AISoLA2024, forthcoming}
}
	
\end{document}